\providecommand{\tabularnewline}{\\}
\providecommand{\algorithmname}{Algorithm}
\theoremstyle{plain}
\newtheorem{prop}{\protect\propositionname}
\theoremstyle{plain}
\newtheorem{cor}{\protect\corollaryname}
\definecolor{cvprblue}{rgb}{0.21,0.49,0.74}
\title{$h$-Edit: Effective and Flexible Diffusion-Based Editing via Doob's $h$-Transform}
\def\spaces{~~~~~~~~}
\author{
\medskip
Toan Nguyen$^*$\spaces{}Kien Do$^*$\spaces{}Duc Kieu\spaces{}Thin Nguyen\\
\emph{\{k.nguyen, k.do, v.kieu, thin.nguyen\}@deakin.edu.au}\\
Applied Artificial Intelligence Institute (A2I2), Deakin University, Australia\\
$^*$ Equal contribution
}
\providecommand{\corollaryname}{Corollary}
\providecommand{\propositionname}{Proposition}
\begin{document}
\renewcommand{\contentsname}{Table of Content for Appendix}

\twocolumn[{%
\renewcommand\twocolumn[1][]{#1}%
\maketitle 
\begin{center} 
\centering{}\includegraphics[width=0.99\textwidth]{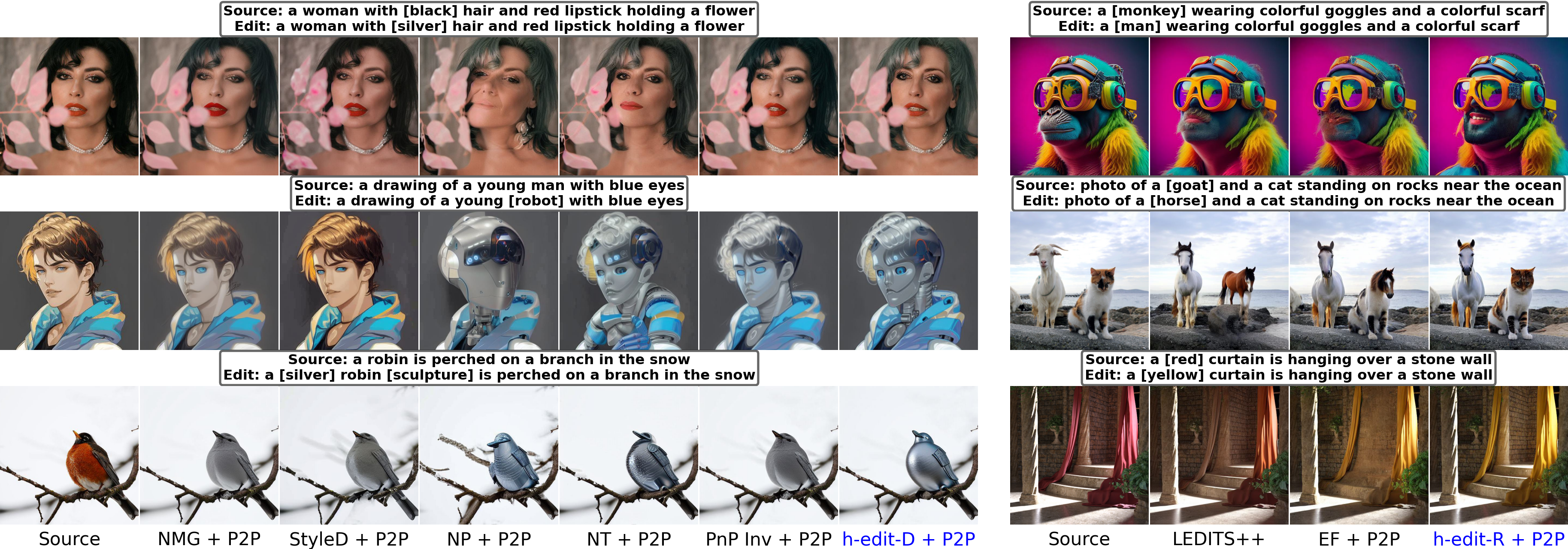}
\captionof{figure}{Qualitative comparison between $h$-Edit and other training-free editing baselines. Our method achieves more accurate and faithful edits than the baselines. Additional visualizations are provided in the Appendix.\label{fig:qualitative-text-guided}} 
\end{center} 
}]

\global\long\def\Model{\text{\emph{h}-Edit}}%
\global\long\def\argmax#1{\underset{#1}{\text{argmax }}}%
\global\long\def\argmin#1{\underset{#1}{\text{argmin }}}%
\global\long\def\Expect{\text{\ensuremath{\mathbb{E}}}}%
\global\long\def\orig{\text{orig}}%
\global\long\def\edit{\text{edit}}%
\global\long\def\rec{\text{rec}}%
\global\long\def\base{\text{base}}%
\global\long\def\style{\text{sty}}%
\global\long\def\source{\text{src}}%
\global\long\def\target{\text{tar}}%
\global\long\def\refer{\text{ref}}%
\global\long\def\face{\text{face}}%

\begin{abstract}
We introduce a theoretical framework for diffusion-based image editing
by formulating it as a reverse-time bridge modeling problem. This
approach modifies the backward process of a pretrained diffusion model
to construct a bridge that converges to an implicit distribution associated
with the editing target at time 0. Building on this framework, we
propose $h$-Edit, a novel editing method that utilizes Doob's $h$-transform
and Langevin Monte Carlo to decompose the update of an intermediate
edited sample into two components: a ``reconstruction'' term and
an ``editing'' term. This decomposition provides flexibility, allowing
the reconstruction term to be computed via existing inversion techniques
and enabling the combination of multiple editing terms to handle complex
editing tasks. To our knowledge, $h$-Edit is the first training-free
method capable of performing simultaneous text-guided and reward-model-based
editing. Extensive experiments, both quantitative and qualitative,
show that $h$-Edit outperforms state-of-the-art baselines in terms
of editing effectiveness and faithfulness. Our source code is available
at \url{https://github.com/nktoan/h-edit}.

\addtocontents{toc}{\protect\setcounter{tocdepth}{-1}}
\end{abstract}

\section{Introduction\label{sec:Introduction}}

Diffusion models \cite{sohl2015deep,song2019generative,ho2020denoising}
have established themselves as a powerful class of generative models,
achieving state-of-the-art performance in image generation \cite{song2021denoising}.
When combined with classifier-based~\cite{Dhariwal2021Diffusion}
or classifier-free guidance ~\cite{Ho2022Classifier}, these models
offer enhanced control, enabling a wide range of applications including
conditional generation~\cite{zhang2023adding,yu2023freedom}, image-to-image
translation~\cite{choi2021ilvr,saharia2022palette}, and image editing~\cite{meng2022sdedit,hertz2023prompttoprompt,huang2024diffusion}.
A prominent example is large-scale text-guided diffusion models~\cite{nichol2022glide,saharia2022photorealistic}
like Stable Diffusion (SD)~\cite{rombach2022high}, which have gained
widespread popularity for their ability to produce diverse high-quality
images that closely align with specified natural language descriptions. 

However, leveraging pretrained text-guided diffusion models for image
editing presents significant challenges, particularly in balancing
effective editing with faithful preservation of the unrelated content
in the original image. Moreover, combining text-guided editing with
other forms of editing to handle more complex requirements remains
a difficult task. Although recent advances in training-free image
editing have been proposed \cite{mokady2023null,hertz2023prompttoprompt,tumanyan2023plug,cho2024noise,huberman2024edit,ju2023direct},
most of these efforts focus on improving reconstruction quality through
better inversion techniques or attention map adjustment, while leaving
the editing part largely unchanged. Additionally, many of these methods
are based on heuristics or intuition, lacking a clear theoretical
foundation to justify their effectiveness. This limitation restricts
the generalization of these approaches to more complex scenarios where
multiple types of editing must be applied.

In this work, we aim to fill the theoretical gap by introducing a
theoretical framework for image editing, formulated as a \emph{reverse-time
bridge modeling} problem. Our approach modifies the \emph{backward}
process of a pretrained diffusion model using Doob's $h$-transform
\cite{doob1984classical,rogers2000diffusions,sarkka2019applied} to
create a bridge that converges to the distribution $p\left(x_{0}\right)h\left(x_{0},0\right)$
at time 0. Here, $p\left(x_{0}\right)$ represents the realism of
$x_{0}$, while $h\left(x_{0},0\right)$ captures the probability
that $x_{0}$ has the target property. To perform editing, we first
map the original image $x_{0}^{\orig}$ to its prior $x_{T}^{\orig}$
through the diffusion forward process. Starting from $x_{T}^{\edit}=x_{T}^{\orig}$,
we follow the bridge to generate an edited image $x_{0}^{\edit}$
by sampling from its transition kernel $p^{h}\left(x_{t-1}|x_{t}\right)$
using Langevin Monte Carlo (LMC) \cite{roberts1996exponential,welling2011bayesian}. 

Building on the decomposability of $p^{h}\left(x_{t-1}|x_{t}\right)$,
we propose \emph{$h$-Edit} - a novel editing method that disentangles
the update of $x_{t-1}^{\edit}$ into a \emph{``reconstruction''}
term $x_{t-1}^{\base}$ (capturing editing faithfulness) and an \emph{``editing''}
term (capturing editing effectiveness). This design provides significant
flexibility, as the editing term can be easily customized for different
tasks with minimal interference in non-edited regions. $h$-Edit updates
can be either explicit or implicit, with $\nabla\log h\left(x_{t},t\right)$
and $\nabla\log h\left(x_{t-1},t-1\right)$ being the corresponding
editing terms, respectively. In the latter case, $h$-Edit can also
be interpreted from an optimization perspective where $\log h\left(x_{t-1},t-1\right)$
is maximized w.r.t. $x_{t-1}$, taking $x_{t-1}^{\base}$ as the initial
value. This allows for multiple optimization steps to enhance editing
effectiveness.

While $x_{t-1}^{\base}$ can generally be estimated by leveraging
existing inversion techniques \cite{song2021denoising,mokady2023null,ju2023direct,huberman2024edit},
the computation of $\nabla\log h\left(x_{t-1},t-1\right)$ depends
on the chosen $h$-function. In this work, we present several key
designs of the $h$-function tailored to popular editing tasks, including
text-guided editing with SD and editing with external reward models
on clean data. Furthermore, by treating $\log h$ as a negative energy
function, we can easily combine multiple $h$-functions to create
a ``product of $h$-experts'', which enables compositional editing.

Through extensive experiments on a range of editing tasks - including
text-guided editing, combined text-guided and style editing, and face
swapping - we demonstrate strong editing capabilities of $h$-Edit.
Both quantitative and qualitative results indicate that $h$-Edit
not only significantly outperforms existing state-of-the-art methods
in text-guided editing but also excels in the two other tasks. Our
method effectively handles various difficult editing cases in the
PIE-Bench dataset where existing methods fall short. To our knowledge,
$h$-Edit is the \emph{first} diffusion-based training-free editing
method that supports simultaneous text-guided and reward-model-based
editing.

\section{Preliminaries\label{sec:Prelim}}

\subsection{Diffusion Models}

Diffusion models \cite{sohl2015deep,song2019generative,ho2020denoising}
iteratively transform the data distribution $p\left(x_{0}\right)$
into the prior distribution $p\left(x_{T}\right)=\mathcal{N}\left(0,\mathrm{I}\right)$
via a \emph{predefined forward} stochastic process characterized by
$p\left(x_{t}|x_{t-1}\right)$, and learn the \emph{reverse} transition
distribution $p_{\theta}\left(x_{t-1}|x_{t}\right)$ to map $p\left(x_{T}\right)$
back to $p\left(x_{0}\right)$. Given the Gaussian form and Markov
property of $p\left(x_{t}|x_{t-1}\right)$, $p\left(x_{t}|x_{0}\right)$
is a Gaussian distribution $\mathcal{N}\left(a_{t}x_{0},\sigma_{t}^{2}\mathrm{I}\right)$,
allowing $x_{t}$ to be sampled from $p\left(x_{t}|x_{0}\right)$
as follows:
\begin{equation}
x_{t}=a_{t}x_{0}+\sigma_{t}\epsilon\label{eq:diffusion_sample}
\end{equation}
with $\epsilon\sim\mathcal{N}\left(0,\mathrm{I}\right)$. In DDPM
\cite{ho2020denoising}, $a_{t}=\sqrt{\bar{\alpha}_{t}}$ and $\sigma_{t}=\sqrt{1-\bar{\alpha}_{t}}$.
$p_{\theta}\left(x_{t-1}|x_{t}\right)$ is parameterized as a Gaussian
distribution $\mathcal{N}\left(\mu_{\theta,\omega,t,t-1}\left(x_{t}\right),\omega_{t,t-1}^{2}\mathrm{I}\right)$
with the mean
\begin{align}
 & \mu_{\theta,\omega,t,t-1}\left(x_{t}\right):=\nonumber \\
 & \ \ \ \frac{a_{t-1}}{a_{t}}x_{t}+\left(\sqrt{\sigma_{t-1}^{2}-\omega_{t,t-1}^{2}}-\frac{\sigma_{t}a_{t-1}}{a_{t}}\right)\epsilon_{\theta}\left(x_{t},t\right)\label{eq:backward_trans_prob}
\end{align}
Here, $\omega_{t,t-1}=\lambda\sigma_{t-1}\sqrt{1-\frac{a_{t}^{2}\sigma_{t-1}^{2}}{a_{t-1}^{2}\sigma_{t}^{2}}}$
with $\lambda\in\left[0,1\right]$. $\lambda=0$ and $\lambda=1$
correspond to DDIM sampling \cite{song2021denoising} and DDPM sampling
\cite{ho2020denoising}, respectively. Eq.~\ref{eq:backward_trans_prob}
implies that $x_{t-1}\sim p_{\theta}\left(x_{t-1}|x_{t}\right)$ is
given by:
\begin{equation}
x_{t-1}=\mu_{\theta,\omega,t,t-1}\left(x_{t}\right)+\omega_{t,t-1}z_{t}\label{eq:backward_sampling}
\end{equation}
with $z_{t}\sim\mathcal{N}\left(0,\mathrm{I}\right)$. Diffusion models
support conditional generation via classifier-based \cite{Dhariwal2021Diffusion}
and classifier-free \cite{Ho2022Classifier} guidances. The latter
is more prevalent, with Stable Diffusion (SD) \cite{rombach2022high}
serving as a notable example. In SD, both the unconditional and text-conditional
noise networks - $\epsilon_{\theta}\left(x_{t},t,\varnothing\right)$
and $\epsilon_{\theta}\left(x_{t},t,c\right)$ - are learned, and
their linear combination $\tilde{\epsilon}_{\theta}\left(x_{t},t,c\right):=w\epsilon_{\theta}\left(x_{t},t,c\right)+\left(1-w\right)\epsilon_{\theta}\left(x_{t},t,\varnothing\right)$,
with $w>0$ denoting the guidance weight, is often used for sampling.
This results in the following sampling step for SD:
\begin{equation}
x_{t-1}=\tilde{\mu}_{\theta,\omega,t,t-1}\left(x_{t},c\right)+\omega_{t,t-1}z_{t}\label{eq:DDIM_sampling_SD}
\end{equation}
where $\tilde{\mu}_{\theta,\omega,t,t-1}$ follows the same form as
$\mu_{\theta,\omega,t,t-1}\left(x_{t}\right)$ in Eq.~\ref{eq:backward_trans_prob}
but with $\epsilon_{\theta}\left(x_{t},t\right)$ replaced by $\tilde{\epsilon}_{\theta}\left(x_{t},t,c\right)$.

\subsection{Image Editing with Stable Diffusion}

The design of SD facilitates text-guided image editing which involves
modifying some attributes of the original image $x_{0}^{\orig}$ while
preserving other features (e.g., background) by adjusting the corresponding
text prompt $c^{\orig}$. A naive approach is mapping $x_{0}^{\orig}$
to $x_{T}^{\orig}$ using DDIM inversion w.r.t. $c^{\orig}$, followed
by generating $x_{0}^{\edit}$ from $x_{T}^{\edit}=x_{T}^{\orig}$
via DDIM sampling (Eq.~\ref{eq:DDIM_sampling_SD}) w.r.t. $c^{\edit}$
- the edited version of $c^{\orig}$. DDIM inversion is the reverse
of DDIM sampling, which achieves nearly exact reconstruction in the
unconditional case~\cite{song2021denoising,hertz2023prompttoprompt}.
For SD, DDIM inversion is expressed as:
\begin{equation}
x_{t}=\frac{a_{t}}{a_{t-1}}x_{t-1}+\left(\sigma_{t}-\frac{\sigma_{t-1}a_{t}}{a_{t-1}}\right)\tilde{\epsilon}_{\theta}\left(x_{t-1},t-1,c\right)\label{eq:DDIM_inversion_SD}
\end{equation}
However, there is a mismatch between $\tilde{\epsilon}_{\theta}\left(x_{t},t,c^{\edit}\right)$
and $\tilde{\epsilon}_{\theta}\left(x_{t-1},t-1,c^{\orig}\right)$
during sampling and inversion, causing $x_{0}^{\edit}$ to be significantly
different from $x_{0}^{\orig}$. Therefore, much of the research on
SD text-guided image editing focuses on improving reconstruction.
These inversion methods can be broadly classified into deterministic-inversion-based
\cite{mokady2023null,li2023stylediffusion,dong2023prompt,ju2023direct}
and random-inversion-based \cite{wu2023latent,huberman2024edit} techniques.
Edit Friendly (EF) \cite{huberman2024edit} - a state-of-the-art random-inversion-based
method - can be formulated under the following framework:
\begin{align}
u_{t}^{\orig} & =x_{t-1}^{\orig}-\tilde{\mu}_{\theta,\omega,t,t-1}\left(x_{t}^{\orig},c^{\orig}\right)\label{eq:opt_free_inversion_1}\\
x_{t-1}^{\edit} & =\tilde{\mu}_{\theta,\omega,t,t-1}\left(x_{t}^{\edit},c^{\edit}\right)+u_{t}^{\orig}\label{eq:opt_free_inversion_2}
\end{align}
Here, $u_{t}^{\orig}$ serves as a residual term that ensures non-edited
features from $x_{t-1}^{\orig}$ are retained in the edited version
$x_{t-1}^{\edit}$. For EF, the set $\left\{ x_{t}^{\orig}\right\} _{t=1}^{T}$
is constructed by sampling $x_{t}^{\orig}$ from $p\left(x_{t}|x_{0}^{\orig}\right)$
for each $t$ in parallel. Interestingly, this set can also be built
sequentially through DDIM inversion as per Eq.~\ref{eq:DDIM_inversion_SD}
(with $c^{\orig}$ replacing $c$).

\subsection{Diffusion Bridges and Doob's \emph{h}-transform}

Although various definitions of bridges exist in the literature \cite{de2021diffusion,liu2022let,liu2023i2sb,li2023bbdm,tong2024simulation,Kieu2025},
we adopt the perspective of \cite{liu2023learning,zhou2024denoising,Kieu2025}
and regard bridges as special stochastic processes that converge to
a \emph{predefined} sample $\hat{x}_{T}$ at time $T$ almost surely.
A bridge can be derived from a \emph{base} (or \emph{reference}) Markov
process through Doob's $h$-transform \cite{doob1984classical,rogers2000diffusions,sarkka2019applied}.
If the base process is a diffusion process described by the SDE $dx_{t}=f\left(x_{t},t\right)dt+g\left(t\right)dw_{t}$,
the corresponding bridge is governed by the following SDE:
\begin{equation}
dx_{t}=\left(f\left(x_{t},t\right)+{g\left(t\right)}^{2}\nabla\log h\left(x_{t},t\right)\right)dt+g\left(t\right)dw_{t}\label{eq:prelim_doob_h_SDE_form}
\end{equation}
where $h\left(x_{t},t\right)=p\left(\hat{x}_{T}|x_{t}\right)$. When
$f\left(x_{t},t\right)$ is a linear function of $x_{t}$, $h\left(x_{t},t\right)$
simplifies into a Gaussian distribution that can be expressed in closed
form \cite{zhou2024denoising}.

\section{Method\label{sec:Method}}

\subsection{Editing as Reverse-time Bridge Modeling\label{subsec:Image-Editing-as-Reverse-time-Bridge-Modeling}}

In this section, we introduce a \emph{novel} theoretical framework
for image editing with diffusion models by framing it as a \emph{reverse-time
bridge modeling} problem. This idea stems from our insight that we
can generate images $x_{0}$ exhibiting the target properties $\mathcal{Y}$
(e.g., style, shape, color, object type, ...) by constructing a bridge
from the \emph{backward} process that converges to an \emph{implicit}
distribution associated with $\mathcal{Y}$. Our framework stands
apart from most existing bridge models~\cite{liu2023learning,somnath2023aligned,zhou2024denoising}
which focus solely on the (non-parameterized) \emph{forward} process
and assume an \emph{explicit} target sample $\hat{x}_{0}$ (or set
of samples $\left\{ \hat{x}_{0}\right\} $).

To construct this bridge, we modify the transition distribution $p_{\theta}\left(x_{t-1}|x_{t}\right)$
of the backward process using Doob's $h$-transform \cite{doob1984classical,sarkka2019applied}
as follows:
\begin{equation}
p_{\theta}^{h}\left(x_{t-1}|x_{t}\right)=p_{\theta}\left(x_{t-1}|x_{t}\right)\frac{h\left(x_{t-1},t-1\right)}{h\left(x_{t},t\right)}\label{eq:doob_transform_bwd}
\end{equation}
Here, $h\left(x_{t},t\right)$ is a positive real-valued function
that satisfies the following conditions for all $t\in\left[1,T\right]$:
\begin{align}
h\left(x_{t},t\right) & =\int p_{\theta}\left(x_{t-1}|x_{t}\right)h\left(x_{t-1},t-1\right)dx_{t-1}\label{eq:h_cond_bwd_1}\\
h\left(x_{0},0\right) & =p_{\mathcal{Y}}\left(x_{0}\right)\label{eq:h_cond_bwd_2}
\end{align}
where $p_{\mathcal{Y}}\left(x_{0}\right)$ is a \emph{predefined}
distribution quantifying how likely $x_{0}$ possesses the attributes
$\mathcal{Y}$. $p_{\mathcal{Y}}\left(x_{0}\right)=0$ if $x_{0}$
does not have the attributes $\mathcal{Y}$ and $>$ 0 otherwise.
For clarity in the subsequent discussion, we will omit the parameter
$\theta$ in $p_{\theta}\left(x_{t-1}|x_{t}\right)$ and $p_{\theta}^{h}\left(x_{t-1}|x_{t}\right)$,
referring to them simply as $p\left(x_{t-1}|x_{t}\right)$ and $p^{h}\left(x_{t-1}|x_{t}\right)$.

\begin{figure*}
\begin{centering}
\includegraphics[width=0.9\textwidth]{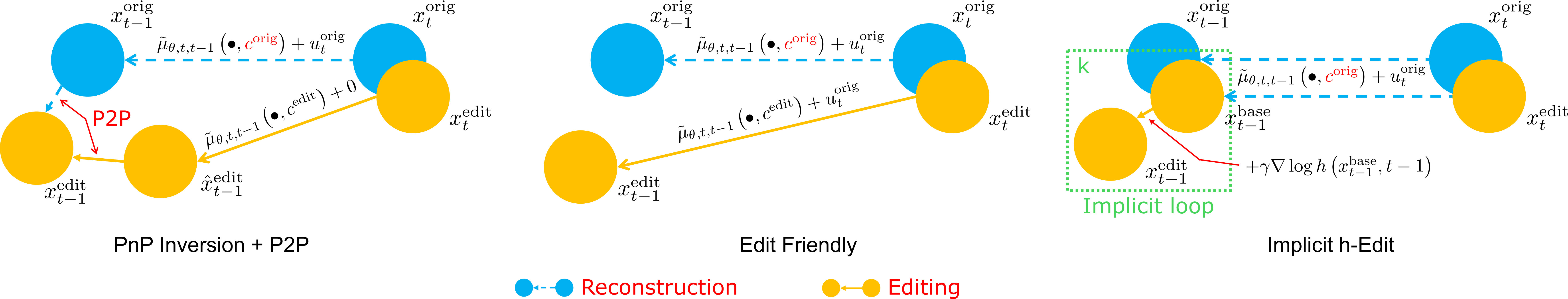}
\par\end{centering}
\caption{Overview of implicit $h$-Edit in comparison with PnP Inversion +
P2P \cite{ju2023direct} and Edit Friendly \cite{huberman2024edit}.\label{fig:model-overview}}
\end{figure*}

It can be shown that $h\left(x_{t},t\right)=\Expect_{p\left(x_{0}|x_{t}\right)}\left[h\left(x_{0},0\right)\right]$
(Appdx.~\ref{subsec:Derivation-of-the-h-func}) and the bridge constructed
in this manner forms a reverse-time Markov process with the transition
distribution $p^{h}\left(x_{t-1}|x_{t}\right)$. At time 0, this process
converges to a distribution formally stated in Proposition~\ref{prop:Doob_h_bridge_bwd}
below:
\begin{prop}
Consider a reverse-time Markov process with the transition distribution
$p\left(x_{t-1}|x_{t}\right)$ and a positive real-value function
$h\left(x_{t},t\right)$ satisfying Eqs.~\ref{eq:h_cond_bwd_1},
\ref{eq:h_cond_bwd_2} for all $t\in\left[1,T\right]$. If we construct
a bridge from this Markov process such that its transition distribution
$p^{h}\left(x_{t-1}|x_{t}\right)$ is defined as in Eq.~\ref{eq:doob_transform_bwd},
then the bridge is also a reverse-time Markov process. Moreover, if
the distribution at time $T$ of the bridge, $p^{h}\left(x_{T}\right)$,
is set to $\frac{p\left(x_{T}\right)h\left(x_{T},T\right)}{\Expect_{p\left(x_{0}\right)}\left[h\left(x_{0},0\right)\right]}$,
then $p^{h}\left(x_{t}\right)=\frac{p\left(x_{t}\right)h\left(x_{t},t\right)}{\Expect_{p\left(x_{0}\right)}\left[h\left(x_{0},0\right)\right]}$
for all $t\in\left[0,T\right]$.\label{prop:Doob_h_bridge_bwd}
\end{prop}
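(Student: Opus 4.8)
The plan is to split the statement into two claims: (i) the transitions $p^{h}(x_{t-1}|x_{t})$ from Eq.~\ref{eq:doob_transform_bwd} define a bona fide reverse-time Markov chain, and (ii) with the stated initialization at time $T$, this chain has marginals $p^{h}(x_{t})=\frac{p(x_{t})h(x_{t},t)}{\Expect_{p(x_{0})}[h(x_{0},0)]}$ for every $t$. For (i), the key point is that each $p^{h}(x_{t-1}|x_{t})$ is a genuine probability kernel: it is nonnegative because $h$ is positive and $p(x_{t-1}|x_{t})\ge 0$, and integrating over $x_{t-1}$, the constant $1/h(x_{t},t)$ pulls out of the integral to give $\frac{1}{h(x_{t},t)}\int p(x_{t-1}|x_{t})h(x_{t-1},t-1)\,dx_{t-1}=1$ by the consistency condition Eq.~\ref{eq:h_cond_bwd_1}. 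Hence the bridge, defined by $p^{h}(x_{0:T})=p^{h}(x_{T})\prod_{t=1}^{T}p^{h}(x_{t-1}|x_{t})$, is a reverse-time Markov chain by construction. It is also instructive to telescope the ratio $\prod_{t=1}^{T}\frac{h(x_{t-1},t-1)}{h(x_{t},t)}=\frac{h(x_{0},0)}{h(x_{T},T)}$: with the prescribed $p^{h}(x_{T})=\frac{p(x_{T})h(x_{T},T)}{\Expect_{p(x_{0})}[h(x_{0},0)]}$, this shows $p^{h}(x_{0:T})=\frac{h(x_{0},0)}{\Expect_{p(x_{0})}[h(x_{0},0)]}\,p(x_{0:T})$, i.e. the bridge is a reweighting of the original (Markov) joint by a function of the endpoint $x_{0}$ alone, which makes the reverse-time Markov property transparent.

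For (ii) I would argue by backward induction on $t$, from $T$ down to $0$, writing $Z:=\Expect_{p(x_{0})}[h(x_{0},0)]$. The base case $t=T$ is the assumed initialization. For the inductive step, suppose $p^{h}(x_{t})=\frac{p(x_{t})h(x_{t},t)}{Z}$. By the Chapman--Kolmogorov relation for the reverse-time chain, $p^{h}(x_{t-1})=\int p^{h}(x_{t-1}|x_{t})\,p^{h}(x_{t})\,dx_{t}$; substituting Eq.~\ref{eq:doob_transform_bwd} and the inductive hypothesis, the factor $h(x_{t},t)$ cancels and $\frac{h(x_{t-1},t-1)}{Z}$ pulls out, leaving $\frac{h(x_{t-1},t-1)}{Z}\int p(x_{t-1}|x_{t})\,p(x_{t})\,dx_{t}=\frac{h(x_{t-1},t-1)}{Z}\,p(x_{t-1})$. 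The last equality holds because $p(x_{t-1}|x_{t})$ is by definition the reverse-time conditional of the original joint, so $p(x_{t-1}|x_{t})\,p(x_{t})=p(x_{t-1},x_{t})$ and marginalizing over $x_{t}$ returns $p(x_{t-1})$. This closes the induction and proves (ii). A quick consistency check is that the initialization integrates to one: $\int p^{h}(x_{T})\,dx_{T}=\frac{1}{Z}\Expect_{p(x_{T})}[h(x_{T},T)]=\frac{1}{Z}\,\Expect_{p(x_{0})}[h(x_{0},0)]=1$, using $h(x_{T},T)=\Expect_{p(x_{0}|x_{T})}[h(x_{0},0)]$ (stated just above the proposition) and the tower property; the same identity shows $\Expect_{p(x_{t})}[h(x_{t},t)]=Z$ for all $t$, consistent with $p^{h}(x_{t})$ being normalized.

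An equivalent one-shot route to (ii) is to start from $p^{h}(x_{0:T})=\frac{h(x_{0},0)}{Z}\,p(x_{0:T})$, factor $p(x_{0:T})=p(x_{t})\,p(x_{t+1:T}|x_{t})\,p(x_{0:t-1}|x_{t})$ using the reverse-time Markov property, integrate out $x_{t+1:T}$ (contributing $1$) and then $x_{0:t-1}$, and identify $\int h(x_{0},0)\,p(x_{0:t-1}|x_{t})\,dx_{0:t-1}=\Expect_{p(x_{0}|x_{t})}[h(x_{0},0)]=h(x_{t},t)$.

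I expect there is no genuine obstacle here, only bookkeeping: one must keep $p(x_{t-1}|x_{t})$ fixed as the reverse-time conditional of the given original joint $p(x_{0:T})$ -- so that both $p(x_{0:T})=p(x_{T})\prod_{t}p(x_{t-1}|x_{t})$ and $\int p(x_{t-1}|x_{t})p(x_{t})\,dx_{t}=p(x_{t-1})$ are available -- and use positivity of $h$ only where it is actually needed, namely to make $p^{h}(x_{t-1}|x_{t})$ nonnegative and the division by $h(x_{t},t)$ legitimate. Everything else is a direct application of Eqs.~\ref{eq:doob_transform_bwd}--\ref{eq:h_cond_bwd_2} together with the already-established identity $h(x_{t},t)=\Expect_{p(x_{0}|x_{t})}[h(x_{0},0)]$.
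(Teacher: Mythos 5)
Your proof is correct and follows essentially the same route as the paper's: verify that $p^{h}\left(x_{t-1}|x_{t}\right)$ integrates to one via Eq.~\ref{eq:h_cond_bwd_1}, then establish the marginal formula by backward induction using Chapman--Kolmogorov with the cancellation of $h\left(x_{t},t\right)$, and check normalization of $p^{h}\left(x_{t}\right)$ via $h\left(x_{t},t\right)=\Expect_{p\left(x_{0}|x_{t}\right)}\left[h\left(x_{0},0\right)\right]$. The additional telescoping observation that $p^{h}\left(x_{0:T}\right)=\frac{h\left(x_{0},0\right)}{\Expect_{p\left(x_{0}\right)}\left[h\left(x_{0},0\right)\right]}p\left(x_{0:T}\right)$ is a nice supplementary perspective not in the paper, but the core argument is identical.
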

\begin{proof}
The detailed proof is provided in Appdx.~\ref{subsec:Proof-of-Proposition-1}.
\end{proof}
\begin{cor}
$p^{h}\left(x_{0}\right)$ is proportional to $p\left(x_{0}\right)p_{\mathcal{Y}}\left(x_{0}\right)$.\label{corr:target_Doob_h_bwd}
\end{cor}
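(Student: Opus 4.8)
The plan is to obtain this immediately as the $t=0$ specialization of Proposition~\ref{prop:Doob_h_bridge_bwd}. First I would invoke the conclusion of the proposition, which asserts that for all $t\in[0,T]$ we have $p^{h}\left(x_{t}\right)=\frac{p\left(x_{t}\right)h\left(x_{t},t\right)}{\Expect_{p\left(x_{0}\right)}\left[h\left(x_{0},0\right)\right]}$, provided the terminal marginal $p^{h}\left(x_{T}\right)$ is initialized as stated. Setting $t=0$ then yields $p^{h}\left(x_{0}\right)=\frac{p\left(x_{0}\right)h\left(x_{0},0\right)}{\Expect_{p\left(x_{0}\right)}\left[h\left(x_{0},0\right)\right]}$.

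Next I would substitute the boundary condition Eq.~\ref{eq:h_cond_bwd_2}, namely $h\left(x_{0},0\right)=p_{\mathcal{Y}}\left(x_{0}\right)$, into both the numerator and the denominator. The denominator $\Expect_{p\left(x_{0}\right)}\left[h\left(x_{0},0\right)\right]=\int p\left(x_{0}\right)p_{\mathcal{Y}}\left(x_{0}\right)dx_{0}$ is a fixed positive scalar that does not depend on $x_{0}$ (positivity follows since $p_{\mathcal{Y}}>0$ wherever $p_{\mathcal{Y}}$ is supported, as noted after Eq.~\ref{eq:h_cond_bwd_2}), so it is exactly the normalizing constant. Therefore $p^{h}\left(x_{0}\right)\propto p\left(x_{0}\right)p_{\mathcal{Y}}\left(x_{0}\right)$, which is the claim.

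There is essentially no hard step here: the corollary is a one-line consequence of the proposition together with the defining property of $h$ at time $0$. The only point worth stating explicitly is that the expectation in the denominator is finite and strictly positive, so that ``$\propto$'' is meaningful; this is immediate from the assumptions on $p_{\mathcal{Y}}$ and the fact that $p\left(x_{0}\right)$ is a probability density. I would keep the write-up to these two substitutions and the remark on the normalizer.
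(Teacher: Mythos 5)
Your argument is correct and is exactly the intended derivation: the paper treats the corollary as an immediate consequence of Proposition~\ref{prop:Doob_h_bridge_bwd} evaluated at $t=0$ together with the boundary condition $h\left(x_{0},0\right)=p_{\mathcal{Y}}\left(x_{0}\right)$ from Eq.~\ref{eq:h_cond_bwd_2}, with $\Expect_{p\left(x_{0}\right)}\left[h\left(x_{0},0\right)\right]$ as the normalizing constant. Your additional remark that this constant is finite and strictly positive is a reasonable small refinement, but the approach is the same.
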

Corollary~\ref{corr:target_Doob_h_bwd} implies that generated samples
from the bridge not only possess the attributes $\mathcal{Y}$ but
also look real. The realism associated with $p\left(x_{0}\right)$
comes from the base process used to construct the bridge. It can be
suppressed if $h\left(x_{0},0\right)$ is set to $p_{\mathcal{Y}}\left(x_{0}\right)/p\left(x_{0}\right)$,
resulting in $p^{h}\left(x_{0}\right)\propto p_{\mathcal{Y}}\left(x_{0}\right)$.
More generally, we can specify \emph{any} target distribution for
the bridge to converge to by appropriately selecting $h\left(x_{0},0\right)$.
This highlights the generalizability of our framework for editing.

A notable special case of our framework is when $h\left(x_{0},0\right)=p\left(y|x_{0}\right)$
with $y$ being a known attribute (e.g., a class label \cite{Dhariwal2021Diffusion}
or a text prompt \cite{rombach2022high}). In this case, $h\left(x_{t},t\right)=\Expect_{p\left(x_{0}|x_{t}\right)}\left[p\left(y|x_{0}\right)\right]=p\left(y|x_{t}\right)$.
Below, we discuss the continuous-time formulation of the bridge for
the sake of completeness.
\begin{prop}
If the base Markov process is characterized by the reverse-time SDE
$dx_{t}=\left(f\left(x_{t},t\right)-{g\left(t\right)}^{2}\nabla\log p_{t}\left(x_{t}\right)\right)dt+g\left(t\right)d\overline{w}_{t}$
\cite{anderson1982reverse,song2021score}, then the bridge constructed
from it via Doob's $h$-transform has the formula:
\begin{align}
dx_{t}=\  & \left(f\left(x_{t},t\right)-{g\left(t\right)}^{2}\left(\nabla\log p\left(x_{t}\right)+\nabla\log h\left(x_{t},t\right)\right)\right)dt\nonumber \\
 & +g\left(t\right)d\overline{w}_{t}\label{eq:Doob_h_bridge_SDE}
\end{align}
\end{prop}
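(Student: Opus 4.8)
The plan is to obtain the continuous-time SDE (Eq.~\ref{eq:Doob_h_bridge_SDE}) as the vanishing-step-size limit of the discrete reverse-time Doob transform Eq.~\ref{eq:doob_transform_bwd}, invoking the hypothesis that the base backward process is exactly the Anderson reverse-time diffusion \cite{anderson1982reverse,song2021score}. Writing $\tilde{f}\left(x_{t},t\right):=f\left(x_{t},t\right)-g\left(t\right)^{2}\nabla\log p_{t}\left(x_{t}\right)$ for the reverse drift, I would first observe that this hypothesis is equivalent to saying that, for an infinitesimal backward step of size $\delta>0$ from time $t$ to time $t-\delta$, the base transition is Gaussian,
\[
p\left(x_{t-\delta}|x_{t}\right)=\mathcal{N}\!\left(x_{t-\delta};\,x_{t}-\tilde{f}\left(x_{t},t\right)\delta,\;g\left(t\right)^{2}\delta\,\mathrm{I}\right)+o\left(\delta\right),
\]
so that $x_{t-\delta}-x_{t}=O\!\left(\sqrt{\delta}\right)$.

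Next I would substitute this kernel into $p^{h}\left(x_{t-\delta}|x_{t}\right)=p\left(x_{t-\delta}|x_{t}\right)\,h\left(x_{t-\delta},t-\delta\right)/h\left(x_{t},t\right)$ and Taylor-expand the log-ratio about $\left(x_{t},t\right)$. Since $h$ is positive and smooth, $\log\!\big(h\left(x_{t-\delta},t-\delta\right)/h\left(x_{t},t\right)\big)=\big\langle\nabla\log h\left(x_{t},t\right),\,x_{t-\delta}-x_{t}\big\rangle+R$, where $R$ collects the temporal term $-\partial_{t}\log h\cdot\delta$, which is constant in $x_{t-\delta}$ and hence pure normalization, and the Hessian term $\tfrac{1}{2}\left(x_{t-\delta}-x_{t}\right)^{\!\top}\nabla^{2}\log h\,\left(x_{t-\delta}-x_{t}\right)=O\!\left(\delta\right)$, which perturbs the precision of the Gaussian only at order $\delta^{2}$ and therefore affects neither the limiting drift nor the limiting diffusion. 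Multiplying the two exponentials and completing the square in $x_{t-\delta}$ then yields
\[
p^{h}\left(x_{t-\delta}|x_{t}\right)=\mathcal{N}\!\left(x_{t-\delta};\,x_{t}-\tilde{f}\left(x_{t},t\right)\delta+g\left(t\right)^{2}\delta\,\nabla\log h\left(x_{t},t\right),\;g\left(t\right)^{2}\delta\,\mathrm{I}\right)+o\left(\delta\right),
\]
the diffusion coefficient being untouched because the $h$-transform is an absolutely continuous change of measure and does not alter the quadratic variation.

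Reading the drift and diffusion off this limiting transition kernel, the $h$-transformed process is again a reverse-time diffusion with diffusion coefficient $g\left(t\right)$ and with coefficient of $dt$ equal to $\tilde{f}\left(x_{t},t\right)-g\left(t\right)^{2}\nabla\log h\left(x_{t},t\right)=f\left(x_{t},t\right)-g\left(t\right)^{2}\big(\nabla\log p_{t}\left(x_{t}\right)+\nabla\log h\left(x_{t},t\right)\big)$, which is exactly Eq.~\ref{eq:Doob_h_bridge_SDE}. The negative sign in front of $\nabla\log h$ here, opposite to the forward Doob formula in Eq.~\ref{eq:prelim_doob_h_SDE_form}, is consistent rather than contradictory: a backward Euler step advances $t$ by $-\delta$, so the shift $+g\left(t\right)^{2}\delta\,\nabla\log h$ of the \emph{increment} $x_{t-\delta}-x_{t}$ becomes $-g\left(t\right)^{2}\nabla\log h$ once re-expressed as a coefficient of $dt$.

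I expect the main obstacle to be making the expansion step rigorous: one must verify that the temporal and Hessian terms of $\log h$, together with the $o\left(\delta\right)$ error in the base kernel, genuinely drop out as $\delta\to 0$ and do not contaminate the limiting drift, and one must keep the backward-time sign conventions consistent throughout (bookkeeping that is cleanest at the level of infinitesimal generators, where the $h$-transform perturbs the generator by a $g^{2}\nabla\log h\cdot\nabla$ term). A cleaner but more delicate alternative is to reparametrize time by $s=T-t$, turning the base reverse process into an ordinary forward diffusion in $s$ with the target event sitting at the terminal time $s=T$, apply the already-established forward Doob formula Eq.~\ref{eq:prelim_doob_h_SDE_form}, and change variables back to $t$; the catch there is that time reversal of an SDE does not merely flip the sign of the noise term, which is why I would present the discrete-to-continuous route as the primary argument.
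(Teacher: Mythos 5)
Your argument is correct. Note first that the paper itself states this proposition ``for the sake of completeness'' and supplies no proof of it anywhere (there is no proof environment after the statement and no corresponding appendix section), so there is nothing to compare your route against; judged on its own, your discretize--transform--pass-to-the-limit derivation is sound and is exactly in the spirit of the paper's other derivations (Appendix~\ref{subsec:Closed-form-expressions-for-explicit-and-implicit} identifies the step sizes $\eta,\gamma$ by matching a one-step Gaussian kernel against an LMC update in precisely this way). The two places where such an argument usually goes wrong are handled correctly: (i) the completion of the square shifts the mean of the increment $x_{t-\delta}-x_{t}$ by $+g(t)^{2}\delta\,\nabla\log h(x_{t},t)$, and since a backward step corresponds to $dt=-\delta$ this becomes the coefficient $-g(t)^{2}\nabla\log h$ in Eq.~\ref{eq:Doob_h_bridge_SDE}, consistent both with the forward-time formula Eq.~\ref{eq:prelim_doob_h_SDE_form} and with the positive step size $\eta$ in Eq.~\ref{eq:LMC_3}; and (ii) the temporal term $-\partial_{t}\log h\cdot\delta$ is constant in $x_{t-\delta}$ and the Hessian term perturbs the precision $\tfrac{1}{g^{2}\delta}\mathrm{I}$ only multiplicatively at relative order $\delta$, so neither survives in the limiting drift or diffusion (centering the quadratic form at $x_{t}$ rather than at the mean makes this transparent). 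Your closing remarks are also accurate: the clean way to make this fully rigorous is at the level of generators, where the $h$-transform adds $g^{2}\,\nabla\log h\cdot\nabla$ to the generator of the (time-reparametrized) base process, and the naive ``flip $s=T-t$ and apply the forward formula'' shortcut must be avoided because time reversal of an SDE is not merely a sign flip. No gaps beyond the standard weak-convergence bookkeeping you already flag.
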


\subsection{$h$-Edit}

After constructing the bridge, image editing can be carried out through
ancestral sampling from time $T$ to time $0$ along the bridge. However,
for a general function $h$, $p^{h}\left(x_{t-1}|x_{t}\right)$ is
typically non-Gaussian, making direct Monte Carlo sampling from this
distribution impractical. Therefore, we must rely on Markov Chain
Monte Carlo (MCMC) methods, such as Langevin Monte Carlo (LMC) \cite{roberts1996exponential,welling2011bayesian},
for sampling. LMC is particularly well-suited for diffusion models
due to the availability of score functions at every time $t$.

To sample from the (unnormalized) target distribution $p^{h}\left(x_{0}\right)\propto p\left(x_{0}\right)h\left(x_{0},0\right)$,
we perform a sequence of LCM updates, with each update defined as
follows:
\begin{align}
x_{t-1}\approx\  & x_{t}+\eta\nabla_{x_{t}}\log\left(p\left(x_{t}\right)h\left(x_{t},t\right)\right)+\sqrt{2\eta}z\\
=\  & \left(x_{t}+\eta\nabla_{x_{t}}\log p\left(x_{t}\right)+\sqrt{2\eta}z\right)\nonumber \\
 & +\eta\nabla_{x_{t}}\log h\left(x_{t},t\right)\label{eq:LMC_2}\\
=\  & \underbrace{x_{t-1}^{\text{\ensuremath{\base}}}}_{\text{rec.}}+\eta\underbrace{\nabla_{x_{t}}\log h\left(x_{t},t\right)}_{\text{editing}}\label{eq:LMC_3}
\end{align}
where $z\sim\mathcal{N}\left(0,\mathrm{I}\right)$, $\eta>0$ is the
step size, $x_{t}$ and $x_{t-1}$ denote \emph{edited} samples at
time $t$ and $t-1$, respectively. A similar expression to Eq.~\ref{eq:LMC_3}
can be derived by solving the bridge SDE in Eq.~\ref{eq:Doob_h_bridge_SDE}
using the Euler-Maruyama method \cite{kloeden1992numerical}. Intuitively,
$x_{t-1}$ and $x_{t-1}^{\text{\ensuremath{\base}}}$ can be regarded
as samples from $p^{h}\left(x_{t-1}|x_{t}\right)$ and $p\left(x_{t-1}|x_{t}\right)$,
respectively. According to the formula of $p^{h}\left(x_{t-1}|x_{t}\right)$
in Eq.~\ref{eq:doob_transform_bwd}, we can also sample $x_{t-1}$
as follows:
\begin{align}
x_{t-1}\approx\  & x_{t-1}^{\text{init}}+\gamma\nabla_{x_{t-1}}\log p^{h}\left(x_{t-1}|x_{t}\right)+\sqrt{2\gamma}z\\
=\  & \left(x_{t-1}^{\text{init}}+\gamma\nabla_{x_{t-1}}\log p\left(x_{t-1}|x_{t}\right)+\sqrt{2\gamma}z\right)\nonumber \\
 & +\gamma\nabla_{x_{t-1}}\log h\left(x_{t-1},t-1\right)\label{eq:implicit_LMC_2}\\
\approx\  & \underbrace{x_{t-1}^{\base}}_{\text{\text{rec.}}}+\gamma\underbrace{\nabla_{x_{t-1}}\log h\left(x_{t-1}^{\base},t-1\right)}_{\text{editing}}\label{eq:implicit_LMC_3}
\end{align}
Here, $\gamma$ > 0 is the step size. The gradient $\nabla_{x_{t-1}}\log p^{h}\left(x_{t-1}|x_{t}\right)$
does not involve $h\left(x_{t},t\right)$ because it is constant w.r.t.
$x_{t-1}$. Both updates in Eqs.~\ref{eq:LMC_3}, \ref{eq:implicit_LMC_3}
inherently fulfill two key image editing objectives - faithfulness
and effectiveness - through their decomposition into a \emph{``reconstruction''}
term $x_{t-1}^{\base}$ and an \emph{``editing''} term $\nabla_{x_{t}}\log h\left(x_{t},t\right)$
or $\nabla_{x_{t-1}}\log h\left(x_{t-1}^{\base},t-1\right)$, with
$\eta$ or $\gamma$ serving as the trade-off coefficient. Eq.~\ref{eq:LMC_3}
is explicit while Eq.~\ref{eq:implicit_LMC_3} is implicit. Furthermore,
we can view Eq.~\ref{eq:implicit_LMC_3} as a general optimization
problem:
\begin{equation}
x_{t-1}=\argmax{x'_{t-1}}\gamma\log h\left(x'_{t-1},t-1\right)\label{eq:LMC_optim}
\end{equation}
with $x_{t-1}^{\base}$ being the initial value, and perform multiple
gradient ascent updates to improve the editing quality:
\begin{align}
x_{t-1}^{(0)} & =x_{t-1}^{\base}\label{eq:LMC_optim_multi_1}\\
x_{t-1}^{(k+1)} & =x_{t-1}^{(k)}+\gamma\nabla_{x_{t-1}}\log h\left(x_{t-1}^{(k)},t-1\right)\label{eq:LMC_optim_multi_2}
\end{align}
Eq.~\ref{eq:LMC_optim_multi_2} is indeed the $k$-th iterations
of the implicit update formula in Eq.~\ref{eq:implicit_LMC_3}.

We refer to our proposed editing method as \textbf{$h$-Edit} with
Eqs.~\ref{eq:LMC_3} and \ref{eq:implicit_LMC_3} representing the
\emph{explicit} and \emph{implicit} versions of $h$-Edit, respectively.
$h$-Edit is highly \emph{flexible} as it can incorporate \emph{arbitrary}
$\log h$-functions, provided their gradients w.r.t. noisy samples
can be efficiently computed.

For text-guided editing with Stable Diffusion \cite{rombach2022high},
an \emph{explicit} $h$-Edit update is given by:
\begin{align}
x_{t-1}^{\base} & =\tilde{\mu}_{\theta,\omega,t,t-1}\left(x_{t}^{\edit},{\color{red}c^{\orig}}\right)+u_{t}^{\orig}\label{eq:hedit_SD_1}\\
x_{t-1}^{\edit} & =x_{t-1}^{\base}+\left(\sqrt{\sigma_{t-1}^{2}-\omega_{t,t-1}^{2}}-\frac{\sigma_{t}a_{t-1}}{a_{t}}\right)f\left(x_{t}^{\edit},t\right)\label{eq:hedit_SD_2}
\end{align}
where $\tilde{\mu}_{\theta,\omega,t,t-1}\left(\cdot,\cdot\right)$
and $u_{t}^{\orig}$ are defined in Eq.~\ref{eq:DDIM_sampling_SD}
and Eq.~\ref{eq:opt_free_inversion_1}, respectively. $f\left(x_{t},t\right)$
is expressed as follows:
\begin{align}
f\left(x_{t},t\right)=\  & w^{\edit}\epsilon_{\theta}\left(x_{t},t,c^{\edit}\right)-\hat{w}^{\orig}\epsilon_{\theta}\left(x_{t},t,c^{\orig}\right)\nonumber \\
 & +\left(\hat{w}^{\orig}-w^{\edit}\right)\epsilon_{\theta}\left(x_{t},t,\varnothing\right)\label{eq:hedit_SD_3}
\end{align}
Here, $w^{\edit}$, $\hat{w}^{\orig}$ are guidance weights. $\hat{w}^{\orig}$
may differ from $w^{\orig}$ used during inversion. An \emph{one-step
implicit} $h$-Edit update can be derived from Eq.~\ref{eq:hedit_SD_2}
by replacing $f\left(x_{t}^{\edit},t\right)$ with $f\left(x_{t-1}^{\base},t-1\right)$,
which gives:
\begin{equation}
x_{t-1}^{\edit}=x_{t-1}^{\base}+\left(\sqrt{\sigma_{t-1}^{2}-\omega_{t,t-1}^{2}}-\frac{\sigma_{t}a_{t-1}}{a_{t}}\right)f\left(x_{t-1}^{\base},t-1\right)\label{eq:hedit_SD_4}
\end{equation}
A detailed derivation of Eqs.~\ref{eq:hedit_SD_1}-\ref{eq:hedit_SD_4}
is provided in Appdx.~\ref{subsec:Closed-form-expressions-for-explicit-and-implicit}.
An overview of our method in comparison with Edit Friendly \cite{huberman2024edit}
and PnP Inversion \cite{ju2023direct} is shown in Fig.~\ref{fig:model-overview}.

Next, we will delve into the design of $h$ and its score. We will
focus on the implicit form and write $\nabla\log h\left(x_{t-1},t-1\right)$
instead of $\nabla_{x_{t-1}}\log h\left(x_{t-1},t-1\right)$ for simplicity.

\subsection{Designing $h$-Functions\label{subsec:Designing-h-Functions}}

\subsubsection{$h$-functions for conditional diffusion models}

In most conditional diffusion models, $h\left(x_{t-1},t-1\right)=p\left(y|x_{t-1}\right)$
where $y$ is a \emph{predefined} condition. This means:
\begin{align}
 & \nabla\log h\left(x_{t-1},t-1\right)\nonumber \\
 & \ \ \ =\nabla\log p\left(y|x_{t-1}\right)\label{eq:h_tm1_score_1}\\
 & \ \ \ =\nabla\log p\left(x_{t-1}|y\right)-\nabla\log p\left(x_{t-1}\right)\label{eq:h_tm1_score_2}
\end{align}
Eqs.~\ref{eq:h_tm1_score_1} and \ref{eq:h_tm1_score_2} correspond
to the classifier-based guidance and classifier-free guidance cases,
respectively. For text-guided editing with SD, $\nabla\log p\left(x_{t-1}|y\right)$
and $\nabla\log p\left(x_{t-1}\right)$ are modeled as $\frac{-\tilde{\epsilon}_{\theta}\left(x_{t-1},t-1,c^{\edit}\right)}{\sigma_{t-1}}$
and $\frac{-\tilde{\epsilon}_{\theta}\left(x_{t-1},t-1,c^{\orig}\right)}{\sigma_{t-1}}$,
respectively.

\subsubsection{External reward models $h\left(x_{0},0\right)$\label{subsec:h-function-External-reward-models}}

In many practical editing scenarios, only external reward models on
clean data $h\left(x_{0},0\right)$ are available. This means $h\left(x_{t},t\right)$
cannot take $x_{t}$ as the direct input but must be computed through
$h\left(x_{0},0\right)$ as $\Expect_{p\left(x_{0}|x_{t}\right)}\left[h\left(x_{0},0\right)\right]$.
Since directly sampling from $p\left(x_{0}|x_{t}\right)$ is difficult,
existing works \cite{chung2023diffusion,yu2023freedom,bansal2024universal}
usually approximate $h\left(x_{t},t\right)=\Expect_{p\left(x_{0}|x_{t}\right)}\left[h\left(x_{0},0\right)\right]$
by $h\left(x_{0|t},0\right)$ where $x_{0|t}:=\Expect_{p\left(x_{0}|x_{t}\right)}\left[x_{0}\right]$
denotes the posterior estimation of $x_{0}$ given $x_{t}$. In SD,
$x_{0|t}$ can be derived from $x_{t}$ and $\tilde{\epsilon}_{\theta}\left(x_{t},t,c^{\orig}\right)$
as $\frac{x_{t}-\sigma_{t}\tilde{\epsilon}_{\theta}\left(x_{t},t,c^{\orig}\right)}{a_{t}}$
based on Tweedie's formula \cite{efron2011tweedie}.

\subsubsection{$h$-functions for reconstruction}

In addition to using $h$ as an editing function, we can design an
$h$-function specifically for reconstruction, defined as:

\begin{equation}
h_{\text{rec}}\left(x_{t-1},t-1\right):=\exp\left(-\lambda_{t-1}\left\Vert x_{t-1}-x_{t-1}^{\base}\right\Vert _{2}^{2}\right)\label{eq:h_rec}
\end{equation}
When this $h$-function is integrated into our optimization framework
in Eq.~\ref{eq:LMC_optim}, it enables simultaenous optimization-free
and optimization-based reconstruction (via $x_{t-1}^{\base}$ and
$\nabla\log h_{\text{rec}}\left(x_{t-1},t-1\right)$, respectively),
exclusive to $h$-Edit.

\subsubsection{Product of $h$-Experts}

Since $\log h$ can be interpreted as a \emph{negative energy function},
we can combine multiple $h$-functions to create a ``product of $h$-experts''
as follows:
\begin{equation}
h=h_{1}*h_{2}*...*h_{m}\label{eq:prod_h_experts}
\end{equation}
where $m$ denotes the number of $h$-functions. The combined $h$-function
in Eq.~\ref{eq:prod_h_experts} can be easily integrated into our
framework by summing the score for each component:
\begin{equation}
\nabla\log h\left(x_{t-1},t-1\right)=\sum_{i=1}^{m}\nabla\log h_{i}\left(x_{t-1},t-1\right)\label{eq:score_prod_h_experts}
\end{equation}

\section{Related Work\label{sec:Related-Work}}

\begin{table*}
\begin{centering}
{\resizebox{0.85\textwidth}{!}{%
\par\end{centering}
\begin{centering}
\begin{tabular}{cccc>{\centering}p{0.12\textwidth}cccc}
\hline 
\multirow{1}{*}{\textbf{Inv.}} & \multirow{1}{*}{\textbf{Attn.}} & \multirow{1}{*}{\textbf{Method}} & \textbf{\textcolor{black}{CLIP Sim.}}$\uparrow$ & \multirow{1}{0.12\textwidth}{\textbf{\textcolor{black}{Local CLIP}}\textcolor{black}{$\uparrow$}} & \multirow{1}{*}{\textbf{\textcolor{black}{DINO Dist.}}\textcolor{black}{$_{\times10^{2}}$$\downarrow$}} & \multirow{1}{*}{\textbf{LPIPS}$_{\times10^{2}}$$\downarrow$} & \multirow{1}{*}{\textbf{SSIM}$_{\times10}$$\uparrow$} & \multirow{1}{*}{\textbf{PSNR}$\uparrow$}\tabularnewline
\hline 
\hline 
\multirow{6}{*}{Deter.} & \multirow{6}{*}{P2P} & NP & \textcolor{black}{0.246} & \uline{0.140} & \textcolor{black}{1.62} & 6.90 & 8.34 & 26.21\tabularnewline
 &  & NT & \textcolor{black}{0.248} & 0.130 & \textcolor{black}{1.34} & 6.07 & 8.41 & 27.03\tabularnewline
 &  & StyleD & \textcolor{black}{0.248} & 0.085 & \textbf{\textcolor{black}{1.17}} & 6.61 & 8.34 & 26.05\tabularnewline
 &  & NMG & \textcolor{black}{0.249} & 0.087 & \textcolor{black}{\uline{1.32}} & 5.59 & 8.47 & 27.05\tabularnewline
 &  & PnP Inv & \textcolor{black}{\uline{0.250}} & 0.095 & \textbf{\textcolor{black}{1.17}} & \uline{5.46} & \uline{8.48} & \uline{27.22}\tabularnewline
 &  & $h$-Edit-D & \textbf{\textcolor{black}{0.253}} & \textbf{0.147} & \textbf{1.17} & \textbf{4.85} & \textbf{8.54} & \textbf{27.87}\tabularnewline
\hline 
\hline 
\multirow{5}{*}{Random} & \multirow{3}{*}{None} & EF & \textcolor{black}{\uline{0.254}} & \uline{0.122} & \textcolor{black}{\uline{1.29}} & \uline{6.09} & \uline{8.37} & \uline{25.87}\tabularnewline
 &  & LEDITS++ & \textcolor{black}{0.254} & 0.113 & \textcolor{black}{2.34} & 8.88 & 8.11 & 23.36\tabularnewline
 &  & $h$-Edit-R & \textbf{0.255} & \textbf{0.148} & \textbf{1.28} & \textbf{5.55} & \textbf{8.46} & \textbf{26.43}\tabularnewline
\cline{2-9} \cline{3-9} \cline{4-9} \cline{5-9} \cline{6-9} \cline{7-9} \cline{8-9} \cline{9-9} 
 & \multirow{2}{*}{P2P} & EF & \textcolor{black}{0.255} & 0.126 & \textcolor{black}{1.51} & 5.70 & 8.40 & 26.30\tabularnewline
 &  & $h$-Edit-R & \textbf{0.256} & \textbf{0.159} & \textbf{1.45} & \textbf{5.08} & \textbf{8.50} & \textbf{26.97}\tabularnewline
\hline 
\end{tabular}{\small{}}}}{\small\par}
\par\end{centering}
\caption{Text-guided image editing results of $h$-Edit and other baselines.
The best and second best results for each metric and inversion type
are highlighted in bold and underscored, respectively. \label{tab:Text-Guided-Editing-Results}}
\end{table*}

Due to space constraints, this section only covers related work in
training-free editing. For details on conditional generation and diffusion
bridges, please refer to Appdx.~\ref{sec:Additional-Related-Work}.

The advent of conditional diffusion models, particularly text-guided
latent diffusion models like Stable Diffusion \cite{rombach2022high},
has greatly advanced the development of various diffusion-based text-guided
image editing techniques. These methods can be broadly categorized
into training-based \cite{kim2022diffusionclip,kawar2023imagic,kwon2023diffusion,zhang2023sine}
and training-free methods \cite{meng2022sdedit,wu2023uncovering,li2023stylediffusion,mokady2023null,xu2024inversion}.
Unlike training-based methods, which finetune the noise network \cite{kim2022diffusionclip}
or employ an auxiliary model \cite{kwon2023diffusion} through additional
training, training-free methods modify the attention or feature maps
in Stable Diffusion (SD) \cite{hertz2023prompttoprompt,tumanyan2023plug,cao2023masactrl,parmar2023zero}
or adjust the generation process of SD \cite{mokady2023null} to ensure
editing fidelity. Null-text inversion (NTI) \cite{mokady2023null}
optimizes the null-text embedding during generation to minimize discrepancies
between this process and the forward process. Prompt Tuning inversion
(PTI) \cite{dong2023prompt} interpolates between the target text
embedding and the null-text embedding optimized by NTI to create a
more suitable embedding for editing. EDICT \cite{wallace2023edict}
draws inspiration from affine coupling layers in normalizing flows
to design a more faithful reconstruction process compared to DDIM
sampling. Negative Prompt inversion (NPI) \cite{miyake2023negative}
bypasses the costly optimization of NTI by using the original text
embedding instead of the null-text embedding, while ProxNPI \cite{han2024proxedit}
adds an auxiliary regularization term to enhance NPI's reconstruction
capabilities. Noise Map Guidance (NMG) \cite{cho2024noise} leverages
energy-based guidance \cite{zhao2022egsde} and information from the
inversion process to denoise samples in a way that improve reconstruction.
PnP Inversion \cite{ju2023direct} avoids optimization by incorporating
the difference between inversion and reconstruction samples directly
into the editing update. AIDI \cite{pan2023effective} views exact
reconstruction as a fixed-point iteration problem and use Anderson
acceleration to find the solution. Unlike these deterministic-inversion-based
methods, Edit Friendly (EF) \cite{huberman2024edit} employs random
inversion with independent sampling of intermediate noisy samples,
achieving good reconstruction without the need for attention map adjustments
like P2P. LEDITS++ \cite{brack2024ledits} introduces several enhancements
to EF, improving both efficiency and versatility in editing. Generally,
most training-free methods are limited to text-guided editing, while
our approach allows for the seamless combination of multiple editing
types due to the clear separation of the reconstruction and editing
terms.

\section{Experiments\label{sec:Experiments}}

Due to space limit, we only provide main results in this section and
refer readers to Appdx.~\ref{sec:Additional-Ablation-Studies} for
our ablation studies on $w^{\edit}$, $\hat{w}^{\orig}$, the number
of optimization steps, as well as other additional results. 

\subsection{Text-guided Editing\label{subsec:Text-guided-Image-Editing}}

\subsubsection{Experiment Setup}

\begin{figure*}[t]
\centering{}%
\begin{minipage}[b][1\totalheight][t]{0.5\textwidth}%
\begin{center}
\includegraphics[width=1\textwidth]{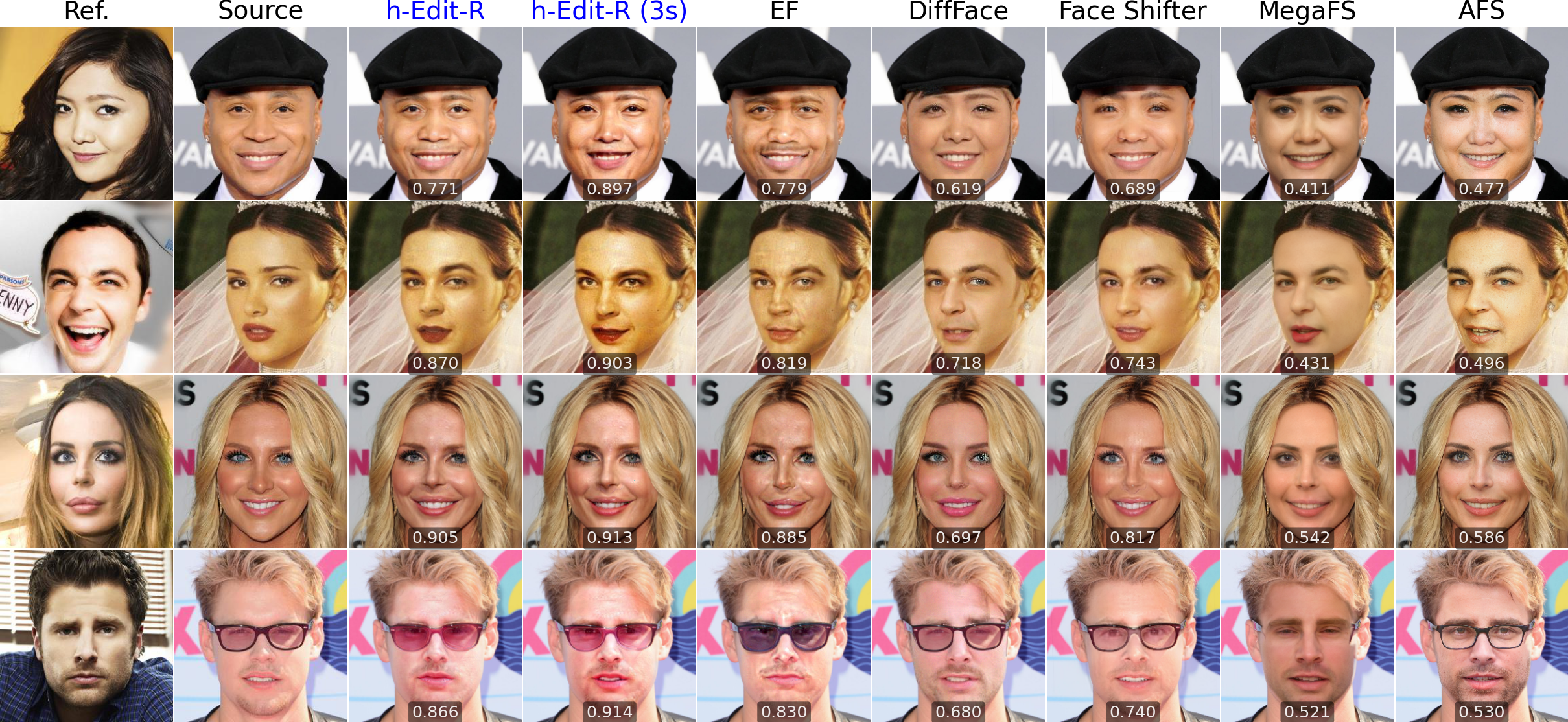}
\par\end{center}%
\end{minipage}\hspace*{0.03\textwidth}%
\begin{minipage}[b][1\totalheight][t]{0.45\textwidth}%
\begin{center}
{\resizebox{\textwidth}{!}{
\begin{tabular}{cccccc}
\toprule 
\textbf{Method} & \textbf{ID}$\uparrow$ & \textbf{Expr}.$\downarrow$ & \textbf{Pose}$\downarrow$ & \textbf{LPIPS}$\downarrow$ & \textbf{FID}$\downarrow$\tabularnewline
\midrule
\midrule 
FaceShifter & 0.70 & \textbf{2.39} & \textbf{2.81} & 0.08 & \textbf{10.16}\tabularnewline
MegaFS & 0.34 & 2.88$^{\dagger}$ & 7.71 & 0.15 & 27.07\tabularnewline
AFS & 0.47 & 2.92 & 4.68 & 0.13 & 17.55\tabularnewline
DiffFace & 0.61 & 3.04 & 4.35 & 0.10 & \uline{11.89}\tabularnewline
EF & 0.74 & 3.10 & 4.12 & 0.06 & 20.78\tabularnewline
\midrule
\midrule 
$h$-edit-R & \uline{0.80} & \uline{2.76} & \uline{3.78} & \textbf{0.04} & 17.68\tabularnewline
$h$-edit-R (3s) & \textbf{0.84} & 3.10 & 4.29 & \uline{0.05} & 19.12\tabularnewline
\bottomrule
\end{tabular}{\small{}}}}
\par\end{center}%
\end{minipage}\caption{\textbf{Left}: Visualization of swapped faces produced by implicit
$h$-Edit-R and baselines. (3s) denotes $\protect\Model$-R with 3
optimization steps. Identity similarity scores (higher is better)
are displayed below each output. \textbf{Right}: Face swapping results
of implicit $h$-Edit-R and other baselines. $\dagger$: The expression
error for MegaFS was calculated on images with detectable faces, as
required by the evaluation metric.\label{fig:Qualitative-examples_Quantitative-results_face_swapping}}
\end{figure*}

We evaluate our method on text-guided image editing using the PIE-Bench
dataset~\cite{ju2023direct}, which includes 700 diverse images of
humans, animals, and objects across various environments. Each image
comes with an original and edited text descriptions and an annotated
mask indicating the editing region. PIE-Bench features 10 distinct
editing categories, including adding, removing, or modifying objects,
styles, and backgrounds.

For evaluation, we follow~\cite{ju2023direct} to use CLIP similarity~\cite{radford2021learning}
between the edited image and text to measure editing effectiveness.
To assess editing faithfulness, we compute PSNR, LPIPS~\cite{zhang2018unreasonable},
and SSIM~\cite{wang2004image} on non-edited regions, as defined
by the editing masks, and DINO feature distance~\cite{tumanyan2022splicing}
on the entire image. Additionally, we include local directional CLIP
similarity \cite{kim2022diffusionclip} to enhance evaluation of editing
effectiveness, as standard CLIP similarity may be insufficient when
the edited attribute represents only a small part of the target text.
While these metrics offer insights, they are imperfect, as analyzed
in Appdx.~\ref{sec:Metrics}. Visual assessments remain essential
for evaluating editing quality.

We compare $h$-Edit with state-of-the-art diffusion-based text-guided
editing baselines that use either deterministic or random inversion,
including NT~\cite{mokady2023null}, NP~\cite{miyake2023negative},
StyleD~\cite{li2023stylediffusion}, NMG~\cite{cho2024noise}, PnP
Inv~\cite{ju2023direct}, EF~\cite{huberman2024edit}, and LEDITS++~\cite{brack2024ledits}.
We refer to $h$-Edit with deterministic inversion as $h$-Edit-D,
and with random inversion as $h$-Edit-R. For a fair comparison, we
adhere to the default settings in~\cite{ju2023direct,huberman2024edit},
using Stable Diffusion v1.4~\cite{rombach2022high} and 50 sampling
steps for editing. Following~\cite{ju2023direct}, we apply Prompt-to-Prompt
(P2P) \cite{hertz2023prompttoprompt} to all deterministic-inversion-based
methods to ensure faithful reconstruction. For random-inversion-based
methods, we report results both with and without P2P. Unless otherwise
specified, we use the implicit form with a single optimization step
(Eq.~\ref{eq:implicit_LMC_3}) for both $h$-Edit-D and $h$-Edit-R.
The hyperparameters $w^{\orig}$, $w^{\edit}$, $\hat{w}^{\orig}$
are set to $1.0$, $10.0$, $9.0$ for $h$-Edit-D, and $1.0$, $7.5$,
$5.0$ for $h$-Edit-R, respectively, as these values yield strong
quantitative and qualitative results. Detailed ablation studies on
these hyperparameters are provided in Appdx.~\ref{sec:Additional-Ablation-Studies}.

\subsubsection{Results\label{subsec:Results-Setting-1}}

As shown in Table~\ref{tab:Text-Guided-Editing-Results}, $h$-Edit-D
+ P2P significantly outperforms all deterministic-inversion-based
baselines with P2P in both editing effectiveness and faithfulness.
For example, our method improves over NT, a strong baseline, by 1.22$\times$$10^{-2}$
in LPIPS and 0.017 in local CLIP similarity. We observed that PnP
Inv and NMG often reconstruct the original image in challenging editing
scenarios, achieving high faithfulness despite not actually making
meaningful changes. In contrast, $h$-Edit-D + P2P consistently performs
successful edits while maintaining superior faithfulness. This validates
the theoretical soundness of $h$-Edit compared to other methods.

Similarly, $h$-Edit-R outperfoms both EF and LEDITS++ across all
metrics, with or without P2P. This improvement is largely due to the
implicit form and the carefully selected value of $\hat{w}^{\orig}$
- features unique to $h$-Edit. Additionally, we observed that LEDITS++
occasionally produces unfaithful or erroneous images, even after hyperparameter
tuning. Notably, random-inversion methods (including $h$-Edit-R)
without P2P often fall behind their P2P-enabled counterparts in changing
color and texture but excel in adding and removing objects, suggesting
that the choice to combine with P2P depends on the specific editing
scenario.

In Fig.~\ref{fig:qualitative-text-guided} and Appdx.~\ref{subsec:Additional_results_text_guided},
we provide a \emph{non-exhaustive} list of edited images by our method
and baselines, showcasing our superior performance.

\begin{figure*}[t]
\begin{centering}
\includegraphics[width=0.95\textwidth]{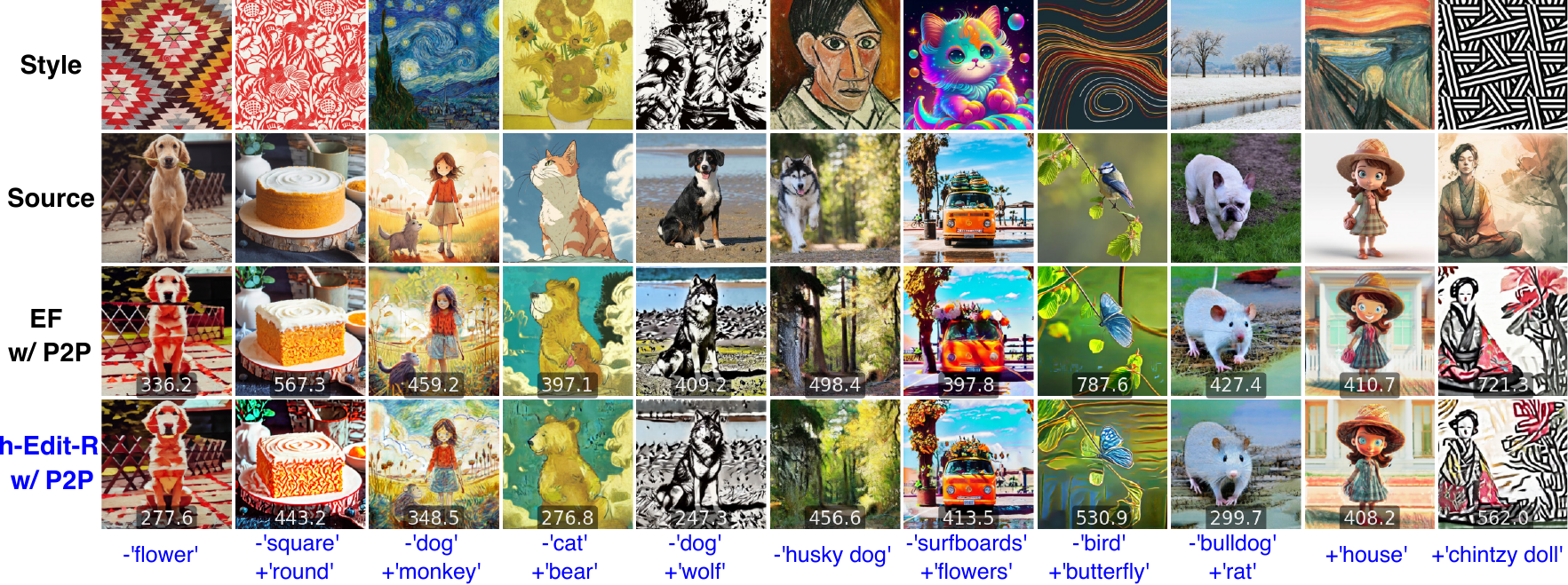}
\par\end{centering}
\caption{Qualitative comparison of $h$-Edit-R + P2P and EF + P2P in the combined
editing task. Style losses (lower is better) are shown below each
output image. h-Edit-R + P2P achieves superior results in both style
transfer and text-guided editing.\label{fig:Combined-Style-Editing-Qualititative}}
\end{figure*}

\subsection{Face Swapping\label{subsec:Face-Attribute-Editing}}

\subsubsection{Experimental Settings}

We consider face swapping as a benchmark to verify the capabilities
of $h$-Edit in reward-model-based editing. Given a diffusion model
trained on 256$\times$256 CelebA-HQ facial images~\cite{meng2022sdedit,karras2017progressive},
and a pretrained ArcFace model~\cite{deng2019arcface}, our goal
is to transfer the identity from a reference face $x_{0}^{\refer}$
to an original face $x_{0}^{\orig}$ while preserving other attributes
of $x_{0}^{\orig}$ such as hair style, pose, facial expression, and
background. For this experiment, we use 5,000 pairs $\left(x_{0}^{\orig},x_{0}^{\refer}\right)$
sampled randomly from CelebA-HQ.

We use implicit $h$-Edit-R with either 1 or 3 optimization steps.
Since P2P is inapplicable to unconditional diffusion models, our method
operates without P2P. The cosine similarity between the edited image
$x_{0}^{\edit}$ and $x_{0}^{\text{\ensuremath{\refer}}}$ is employed
as the reward, and the score $\nabla\log h\left(x_{t-1},t-1\right)$
is approximated based on the technique discussed in Section~\ref{subsec:h-function-External-reward-models}.
We compare $\Model$-R to well-known face-swapping methods, including
GAN-based (FaceShifter~\cite{li2020advancing}), Style-GAN-based
(MegaFS~\cite{zhu2021one} and AFS~\cite{vu2022face}), and diffusion-based
(DiffFace~\cite{kim2022diffface}). Unlike DiffFace which is a training
based method, our method is training-free. We also include EF as a
training-free baseline by adding the score to its editing term as
described in Algo.~\ref{subsec:algorithm_ef}. This extension of
EF has never been considered in the literature. We use 100 sampling
steps for all diffusion-based methods, including DiffFace. Facial
images generated by all methods are masked before evaluation, with
unmasked results provided in Appdx.~\ref{subsec:Face-swapping-without-masks}.
Following \cite{vu2022face,li2020advancing}, we assess editing effectiveness
via cosine similarity using ArcFace, faithfulness via expression/pose
error and LPIPS, and visual quality via FID~\cite{heusel2017gans}.

\subsubsection{Results}

As shown in Fig.~\ref{fig:Qualitative-examples_Quantitative-results_face_swapping}
(right), both versions of $h$-Edit-R achieve the highest face-swapping
accuracies. $h$-Edit-R also ranks second-best in preserving expressions
and poses, outperforming DiffFace and EF by large margins. However,
in terms of FID, our method falls short of FaceShifter and DiffFace,
likely because these methods are specifically tailored for face swapping
and trained on larger face datasets (FFHQ~\cite{karras2019style}
for DiffFace and FFHQ + CelebA-HQ for FaceShifter). Using three optimization
steps improves the identity transfer accuracy compared to using one
both quantitatively and qualitatively (Fig.~\ref{fig:Qualitative-examples_Quantitative-results_face_swapping}
(left)), showcasing the advantage of our implicit form. However, this
improvement may slightly reduce faithfulness, especially when the
source and reference faces differ significantly. Additional visualizations
are provided in Appdx.~\ref{subsec:Additional_results_face_swapping}.

\subsection{Combined Text-guided and Style Editing\label{subsec:Style-Editing-and-Combined-Editing}}

\subsubsection{Experimental Settings}

This task is similar to text-guided editing in Section~\ref{subsec:Text-guided-Image-Editing}
but with an additional requirement: the edited image $x_{0}^{\edit}$
should have similar style as a reference image $x_{0}^{\style}$.
Following \cite{yu2023freedom}, we use the negative L2 distance between
the Gram matrices~\cite{johnson2016perceptual} from the third feature
layer of the CLIP image encoder w.r.t. $x_{0}^{\edit}$ and $x_{0}^{\style}$
as a style reward. The norm of the style reward score is scaled to
match the norm of the editing function $f\left(\cdot\right)$ in Eq.~\ref{eq:hedit_SD_3}
at each time $t$, inspired by~\cite{yu2023freedom}. In this experiment,
each original image $x_{0}^{\orig}$ from the PIE-Bench dataset is
paired with a style image randomly selected from a set of 11 styles
shown in Fig.~\ref{fig:Combined-Style-Editing-Qualititative}. We
employ implicit $h$-Edit-R + P2P and compare it with EF + P2P. We
keep $\left(w^{\edit},\hat{w}^{\orig}\right)$ for our method and
$w^{\edit}$ for EF the same as in Section~\ref{subsec:Text-guided-Image-Editing},
tuning only the style editing coefficient $\rho^{\style}$. Given
the limitations of existing metrics in evaluating stylized edited
images, our choice of $\rho^{\style}$ is based primarily on visual
quality. We found that $\rho^{\style}$ equal 0.6 and 1.5 provide
the best results for our method and EF, respectively. Additional justification
for this selection is provided in Appdx.~\ref{subsec:Additional_results_style_text_combined}.
All other settings remain consistent with those used in the text-guided
editing experiment.

\subsubsection{Results}

It can be seen from Fig.~\ref{fig:Combined-Style-Editing-Qualititative}
and the visualizations in Appdx.~\ref{subsec:Additional_results_style_text_combined}
that $h$-Edit-R + P2P achieves more effective text-guided and style
edits while better preserving non-edited content compared to EF +
P2P. EF + P2P seems to struggle with combined editing task, sometimes
introducing artifacts (e.g., a baby bear in the fourth column in Fig.~\ref{fig:Combined-Style-Editing-Qualititative})
or altering non-edited content (e.g., a different girl in the third
column). Additionally, EF + P2P is more sensitive to the change of
$\rho^{\style}$ as slightly increasing $\rho^{\style}$ can improve
style editing but also exacerbate the unfaithfulness problem (Appdx.~\ref{subsec:Additional_results_style_text_combined}).

\section{Conclusion\label{sec:Conclusion}}

We introduced the reverse-time bridge modeling framework for effective
diffusion-based image editing, and proposed $h$-Edit - a novel training-free
editing method - as an instance of our framework. $h$-Edit leverages
Doob's $h$-transform and Langevin Monte Carlo to create an effective
editing update, composed of the ``reconstruction'' and ``editing''
terms, which capture the editing faithfulness and effectiveness, respectively.
This design grants our method great flexibility, allowing for seamless
integration of various $h$-functions to support different editing
objectives. Extensive experiments across diverse editing tasks demonstrated
that $h$-Edit achieves state-of-the-art editing performance, as evidenced
by quantitatively and qualitatively metrics. These results validate
both the theoretical soundness and practical strength of our method,
which we hope will inspire future research to address more complex
real-world editing challenges while maintaining theoretical guarantees. 

Despite these advantages, our method faces challenges in some difficult
editing cases. Although these issues could be partially mitigated
by using the implicit version with multiple optimization loops (Appdx.~\ref{subsec:Impact-of-multiple-loops})
or by manually increasing $w^{\edit}$ and $\hat{w}^{\orig}$ (Appdx.~\ref{subsec:Impact-of-hat_w_orig}),
an automated solution for handling them would be highly beneficial.
Another promising direction is to modify $x_{t-1}^{\base}$ to focus
on preserving only the non-edited regions, enhancing editing effectiveness.

\section*{Acknowledgement}

The experiments in this research were partially supported by AWS Cloud
services under the AWS Cloud Credit for Research Program, for which
Dr. Kien Do is the recipient.

\bibliographystyle{ieeenat_fullname}

\addtocontents{toc}{\protect\setcounter{tocdepth}{2}}

\newpage{}

\appendix
\onecolumn

\tableofcontents{}

\newpage{}

\section{Theoretical Results}

\subsection{Derivation of the formula of $h\left(x_{t},t\right)$\label{subsec:Derivation-of-the-h-func}}

Below, we prove that $h\left(x_{t},t\right)$ satisfying Eqs.~\ref{eq:h_cond_bwd_1},
\ref{eq:h_cond_bwd_2} can be expressed as follows:
\begin{align}
h\left(x_{t},t\right) & =\Expect_{p\left(x_{0}|x_{t}\right)}\left[h\left(x_{0},0\right)\right]\label{eq:h_analytical_form_1}\\
 & =\Expect_{p\left(x_{0}|x_{t}\right)}\left[p_{\mathcal{Y}}\left(x_{0}\right)\right]\label{eq:h_analytical_form_2}
\end{align}
where $p\left(x_{0}|x_{t}\right)$ is the transition distribution
of the base backward Markov process.

We can quickly verify that Eq.~\ref{eq:h_analytical_form_1} is correct
for $t=1$ since $h\left(x_{1},1\right)=\int p\left(x_{0}|x_{1}\right)h\left(x_{0},0\right)dx_{0}=\Expect_{p\left(x_{0}|x_{1}\right)}\left[h\left(x_{0},0\right)\right]$
directly from Eqs.~\ref{eq:h_cond_bwd_1}, \ref{eq:h_cond_bwd_2}.
Assuming that Eq.~\ref{eq:h_analytical_form_1} has been correct
for $t-1$ ($t\geq2$), we will prove that it is correct for $t$.
The RHS of Eq.~\ref{eq:h_cond_bwd_1} can be transformed as follows:
\begin{align}
h\left(x_{t},t\right) & =\int p\left(x_{t-1}|x_{t}\right)h\left(x_{t-1},t-1\right)dx_{t-1}\\
 & =\int p\left(x_{t-1}|x_{t}\right)\Expect_{p\left(x_{0}|x_{t-1}\right)}\left[h\left(x_{0},0\right)\right]dx_{t-1}\\
 & =\int p\left(x_{t-1}|x_{t}\right)\left(\int p\left(x_{0}|x_{t-1}\right)h\left(x_{0},0\right)dx_{0}\right)dx_{t-1}\\
 & =\int\left(\int p\left(x_{0}|x_{t-1}\right)p\left(x_{t-1}|x_{t}\right)dx_{t-1}\right)h\left(x_{0},0\right)dx_{0}\\
 & =\int p\left(x_{0}|x_{t}\right)p_{\mathcal{Y}}\left(x_{0}\right)dx_{0}\label{eq:h_formula_5}\\
 & =\Expect_{p\left(x_{0}|x_{t}\right)}\left[h\left(x_{0},0\right)\right]\label{eq:h_formula_6}
\end{align}
In Eq.~\ref{eq:h_formula_5}, $p\left(x_{0}|x_{t}\right)$ equals
$\int p\left(x_{0}|x_{t-1}\right)p\left(x_{t-1}|x_{t}\right)dx_{t+1}$
because this is the Chapman-Kolmogorov equation \cite{karush1961chapman,Kieu2025}
for the base backward process. Eq.~\ref{eq:h_formula_6} completes
our proof.

\subsection{Proof of Proposition~\ref{prop:Doob_h_bridge_bwd}\label{subsec:Proof-of-Proposition-1}}

First, it can be seen that $p^{h}\left(x_{t-1}|x_{t}\right)$ is well
normalized since according to Eqs.~\ref{eq:doob_transform_bwd},
\ref{eq:h_cond_bwd_1}, we have:
\begin{align}
\int p^{h}\left(x_{t-1}|x_{t}\right)dx_{t-1} & =\frac{\int p\left(x_{t-1}|x_{t}\right)h\left(x_{t-1},t-1\right)dx_{t-1}}{h\left(x_{t},t\right)}\\
 & =\frac{h\left(x_{t},t\right)}{h\left(x_{t},t\right)}\\
 & =1
\end{align}
Thus, $p^{h}\left(x_{t-1}|x_{t}\right)$ can be viewed as the transition
distribution of our bridge. Besides, since $x_{t-1}$ in $p^{h}\left(x_{t-1}|x_{t}\right)$
only depends on $x_{t}$, this bridge is a reverse-time Markov process.

Next, we prove that $p^{h}\left(x_{t}\right)=\frac{p\left(x_{t}\right)h\left(x_{t},t\right)}{\Expect_{p\left(x_{0}\right)}\left[h\left(x_{0},0\right)\right]}$
for all $t\in\left[0,T\right]$. This equation holds for $t=T$ due
to our assumption $p^{h}\left(x_{T}\right)=\frac{p\left(x_{T}\right)h\left(x_{T},T\right)}{\Expect_{p\left(x_{0}\right)}\left[h\left(x_{0},0\right)\right]}$.
Assuming that this equation holds for time $t$, we will prove that
it holds for time $t-1$. Since the bridge is a reverse-time Markov
process, we can compute $p^{h}\left(x_{t-1}\right)$ as follows:
\begin{align}
p^{h}\left(x_{t-1}\right) & =\int p^{h}\left(x_{t-1}|x_{t}\right)p^{h}\left(x_{t}\right)dx_{t}\label{eq:p_h_x_tm1_1}\\
 & =\int p\left(x_{t-1}|x_{t}\right)\frac{h\left(x_{t-1},t-1\right)}{\cancel{h\left(x_{t},t\right)}}\frac{p\left(x_{t}\right)\cancel{h\left(x_{t},t\right)}}{\Expect_{p\left(x_{0}\right)}\left[h\left(x_{0},0\right)\right]}dx_{t}\label{eq:p_h_x_tm1_2}\\
 & =\frac{h\left(x_{t-1},t-1\right)\int p\left(x_{t-1}|x_{t}\right)p\left(x_{t}\right)dx_{t}}{\Expect_{p\left(x_{0}\right)}\left[h\left(x_{0},0\right)\right]}\label{eq:p_h_x_tm1_3}\\
 & =\frac{p\left(x_{t-1}\right)h\left(x_{t-1},t-1\right)}{\Expect_{p\left(x_{0}\right)}\left[h\left(x_{0},0\right)\right]}\label{eq:p_h_x_tm1_4}
\end{align}
where Eq.~\ref{eq:p_h_x_tm1_2} leverages Eq.~\ref{eq:doob_transform_bwd}
and the inductive assumption. Eq.~\ref{eq:p_h_x_tm1_4} completes
our proof.

Finally, we prove that $p^{h}\left(x_{t}\right)$ is a well normalized
distribution as follows:
\begin{align}
\int p^{h}\left(x_{t}\right)dx_{t} & =\frac{\int p\left(x_{t}\right)h\left(x_{t},t\right)dx_{t}}{\Expect_{p\left(x_{0}\right)}\left[h\left(x_{0},0\right)\right]}\\
 & =\frac{\int p\left(x_{t}\right)\Expect_{p\left(x_{0}|x_{t}\right)}\left[h\left(x_{0},0\right)\right]dx_{t}}{\Expect_{p\left(x_{0}\right)}\left[h\left(x_{0},0\right)\right]}\label{eq:int_p_h_xt_2}\\
 & =\frac{\int p\left(x_{t}\right)\left(\int p\left(x_{0}|x_{t}\right)h\left(x_{0},0\right)dx_{0}\right)dx_{t}}{\Expect_{p\left(x_{0}\right)}\left[h\left(x_{0},0\right)\right]}\\
 & =\frac{\int\left(\int p\left(x_{t}\right)p\left(x_{0}|x_{t}\right)dx_{t}\right)h\left(x_{0},0\right)dx_{0}}{\Expect_{p\left(x_{0}\right)}\left[h\left(x_{0},0\right)\right]}\\
 & =\frac{\int p\left(x_{0}\right)h\left(x_{0},0\right)dx_{0}}{\Expect_{p\left(x_{0}\right)}\left[h\left(x_{0},0\right)\right]}\\
 & =1
\end{align}
The fact that $h\left(x_{t},t\right)=\Expect_{p\left(x_{0}|x_{t}\right)}\left[h\left(x_{0},0\right)\right]$
in Eq.~\ref{eq:int_p_h_xt_2} was proven in Section~\ref{subsec:Derivation-of-the-h-func}.

\subsection{Closed-form expressions for the explicit and implicit $h$-Edit updates
for Stable Diffusion\label{subsec:Closed-form-expressions-for-explicit-and-implicit}}

In this section, we derive closed-form expressions for the explicit
and implicit $h$-Edit updates corresponding to Eq.~\ref{eq:LMC_3}
and Eq.~\ref{eq:implicit_LMC_3}, respectively, for Stable Diffusion
(SD). First, we can express $\nabla_{x_{t}}\log h\left(x_{t},t\right)$
as follows:
\begin{align}
\nabla_{x_{t}}\log h\left(x_{t},t\right)=\  & \nabla_{x_{t}}\log p^{h}\left(x_{t}\right)-\nabla_{x_{t}}\log p\left(x_{t}\right)\\
=\  & \frac{-\tilde{\epsilon}_{\theta}\left(x_{t},t,c^{\edit}\right)}{\sigma_{t}}-\frac{-\tilde{\epsilon}_{\theta}\left(x_{t},t,c^{\orig}\right)}{\sigma_{t}}\\
=\  & \frac{-1}{\sigma_{t}}\left(\tilde{\epsilon}_{\theta}\left(x_{t},t,c^{\edit}\right)-\tilde{\epsilon}_{\theta}\left(x_{t},t,c^{\orig}\right)\right)\\
=\  & \frac{-1}{\sigma_{t}}\Big(w^{\edit}\epsilon_{\theta}\left(x_{t},t,c^{\edit}\right)+\left(1-w^{\edit}\right)\epsilon_{\theta}\left(x_{t},t,\varnothing\right)\nonumber \\
 & \ \ \ \ \ \ \ -\left(w^{\orig}\epsilon_{\theta}\left(x_{t},t,c^{\orig}\right)+\left(1-w^{\orig}\right)\epsilon_{\theta}\left(x_{t},t,\varnothing\right)\right)\Big)\\
=\  & \frac{-1}{\sigma_{t}}\Big(w^{\edit}\epsilon_{\theta}\left(x_{t},t,c^{\edit}\right)-w^{\orig}\epsilon_{\theta}\left(x_{t},t,c^{\orig}\right)+\left(w^{\orig}-w^{\edit}\right)\epsilon_{\theta}\left(x_{t},t,\varnothing\right)\Big)\label{eq:closed_form_explicit_SD_h_score_5}\\
=\  & \frac{-1}{\sigma_{t}}f\left(x_{t},t\right)
\end{align}
Finding the formula of $\eta$ in Eq.~\ref{eq:LMC_3} can be somewhat
tricky. The key is to examine the equation $x_{t-1}^{\base}=x_{t}+\eta\nabla_{x_{t}}\log p\left(x_{t}\right)+\sqrt{2\eta}z$
in Eq.~\ref{eq:LMC_2}, which can be interpreted as sampling $x_{t-1}^{\base}$
from the Gaussian backward transition distribution $p_{\theta}\left(x_{t-1}|x_{t}\right)$.
This implies that if we omit the random term $\sqrt{2\eta}z$, the
simplified equation $x_{t-1}^{\base}=x_{t}+\eta\nabla_{x_{t}}\log p\left(x_{t}\right)$
corresponds to the mean of $p_{\theta}\left(x_{t-1}|x_{t}\right)$,
as provided in Eq.~\ref{eq:DDIM_sampling_SD}, and rewritten as follows:
\begin{align}
x_{t-1}^{\base} & =\underbrace{\frac{a_{t-1}}{a_{t}}x_{t}+\left(\sqrt{\sigma_{t-1}^{2}-\omega_{t,t-1}^{2}}-\frac{\sigma_{t}a_{t-1}}{a_{t}}\right)\tilde{\epsilon}_{\theta}\left(x_{t},t,c^{\orig}\right)}_{\tilde{\mu}_{\theta,\omega,t,t-1}\left(x_{t},c^{\orig}\right)}\label{eq:closed_form_explicit_SD_xbase_1}\\
 & =\frac{a_{t-1}}{a_{t}}x_{t}+\left(\sqrt{\sigma_{t-1}^{2}-\omega_{t,t-1}^{2}}-\frac{\sigma_{t}a_{t-1}}{a_{t}}\right)\left(w^{\orig}\epsilon_{\theta}\left(x_{t},t,c^{\orig}\right)+\left(1-w^{\orig}\right)\epsilon_{\theta}\left(x_{t},t,\varnothing\right)\right)\label{eq:closed_form_explicit_SD_xbase_2}\\
 & =\frac{a_{t-1}}{a_{t}}x_{t}-\left(\sqrt{\sigma_{t-1}^{2}-\omega_{t,t-1}^{2}}-\frac{\sigma_{t}a_{t-1}}{a_{t}}\right)\sigma_{t}\nabla_{x_{t}}\log p\left(x_{t}\right)\label{eq:closed_form_explicit_SD_xbase_3}
\end{align}
Eq.~\ref{eq:closed_form_explicit_SD_xbase_3} suggests that $\eta=-\left(\sqrt{\sigma_{t-1}^{2}-\omega_{t,t-1}^{2}}-\frac{\sigma_{t}a_{t-1}}{a_{t}}\right)\sigma_{t}$.
One can easily verify that $\eta>0$. It is worth noting that there
is a little mismatch between the coefficients of $x_{t}$ in Eq.~\ref{eq:closed_form_explicit_SD_xbase_1}
and in $x_{t-1}^{\base}=x_{t}+\eta\nabla_{x_{t}}\log p\left(x_{t}\right)$.
This is expected because the standard LMC update assumes a forward
diffusion process governed by the SDE $dx_{t}=\sqrt{2}dw_{t}$, which
lacks a drift term. In contrast, the continuous-time forward process
of Stable Diffusion follows the SDE $dx_{t}=\frac{-\beta_{t}}{2}x_{t}dt+\sqrt{\beta_{t}}dw_{t}$,
which has the drift term $\frac{-\beta_{t}}{2}x_{t}$.

It can be inferred that $u_{t}^{\orig}$ mimics the random term $\sqrt{2\eta}z$,
with the key difference being that it is precomputed during the forward
pass rather than randomly sampled during the backward pass.

According to the above analysis, the explicit $h$-Edit update for
Stable Diffusion is given by:
\begin{align}
x_{t-1}^{\base} & =\underbrace{\tilde{\mu}_{\theta,\omega,t,t-1}\left(x_{t}^{\edit},c^{\orig}\right)}_{x_{t}+\eta\nabla\log p\left(x_{t}\right)}+\underbrace{u_{t}^{\orig}}_{\sqrt{2\eta}z}\label{eq:closed_form_explicit_SD_xbase_4}\\
x_{t-1}^{\edit} & =x_{t-1}^{\base}+\underbrace{\left(-\left(\sqrt{\sigma_{t-1}^{2}-\omega_{t,t-1}^{2}}-\frac{\sigma_{t}a_{t-1}}{a_{t}}\right)\sigma_{t}\right)}_{\eta}\underbrace{\frac{-1}{\sigma_{t}}f\left(x_{t}^{\edit},t\right)}_{\nabla\log h\left(x_{t},t\right)}\\
 & =x_{t-1}^{\base}+\left(\sqrt{\sigma_{t-1}^{2}-\omega_{t,t-1}^{2}}-\frac{\sigma_{t}a_{t-1}}{a_{t}}\right)f\left(x_{t}^{\edit},t\right)
\end{align}

To derive the implicit $h$-Edit update, we first write Eq.~\ref{eq:closed_form_explicit_SD_xbase_1}
in the implicit form $x_{t-1}=\frac{a_{t-1}}{a_{t}}x_{t}+\left(\sqrt{\sigma_{t-1}^{2}-\omega_{t,t-1}^{2}}-\frac{\sigma_{t}a_{t-1}}{a_{t}}\right)\tilde{\epsilon}_{\theta}\left(x_{t-1},t-1,c^{\orig}\right)$,
which reveals that $\gamma=-\left(\sqrt{\sigma_{t-1}^{2}-\omega_{t,t-1}^{2}}-\frac{\sigma_{t}a_{t-1}}{a_{t}}\right)\sigma_{t-1}$.
Using this, we compute $x_{t-1}^{\edit}$ based on the formula in
Eq.~\ref{eq:implicit_LMC_3} as follows:
\begin{align}
x_{t-1}^{\edit} & =x_{t-1}^{\base}+\gamma\nabla_{x_{t-1}}h\left(x_{t-1}^{\base},t-1\right)\\
 & =x_{t-1}^{\base}+\left(-\left(\sqrt{\sigma_{t-1}^{2}-\omega_{t,t-1}^{2}}-\frac{\sigma_{t}a_{t-1}}{a_{t}}\right)\sigma_{t-1}\right)\frac{1}{\sigma_{t-1}}f\left(x_{t-1}^{\base},t-1\right)\\
 & =x_{t-1}^{\base}+\left(\sqrt{\sigma_{t-1}^{2}-\omega_{t,t-1}^{2}}-\frac{\sigma_{t}a_{t-1}}{a_{t}}\right)f\left(x_{t-1}^{\base},t-1\right)
\end{align}
where $x_{t-1}^{\base}$ is given in Eq.~\ref{eq:closed_form_explicit_SD_xbase_4}.

One advantage of the natural disentanglement in the $h$-Edit update
is that the guidance scales $w^{\orig}$ for computing $x_{t-1}^{\base}$
in Eq.~\ref{eq:closed_form_explicit_SD_xbase_2} and $w^{\orig}$
for computing $\nabla\log h\left(x_{t},t\right)$ in Eq.~\ref{eq:closed_form_explicit_SD_h_score_5}
do \emph{not} need to be the same. This allows $w^{\orig}$ in Eq.~\ref{eq:closed_form_explicit_SD_xbase_2}
to follow the guidance scale used in the forward pass, while $w^{\orig}$
in Eq.~\ref{eq:closed_form_explicit_SD_h_score_5} can be chosen
arbitrarily. To emphasize this distinction, we denote $w^{\orig}$
in Eq.~\ref{eq:closed_form_explicit_SD_h_score_5} as $\hat{w}^{\orig}$,
indicating that it may differ from $w^{\orig}$ in Eq.~\ref{eq:closed_form_explicit_SD_xbase_2}.
This $\hat{w}^{\orig}$ can be interpreted as a hyperparameter controlling
how much of the original image's information is excluded from the
editing process. During our experiments, we observed that $w^{\orig}$,
$\hat{w}^{\orig}$, and $w^{\edit}$ should be chosen such that $0<w^{\orig}\leq\hat{w}^{\orig}<w^{\edit}$.

\section{Algorithms\label{sec:Algorithms}}

\subsection{$h$-Edit for Combined Editing\label{subsec:algorithm_h_edit}}

In Algorithms~\ref{alg:explicit_h_edit} and \ref{alg:implicit_h_edit},
we provide pseudo-codes for the explicit and implicit versions of
$h$-Edit for combined text-guided and reward-model-based editing.

\begin{algorithm}[H]
\begin{algorithmic}[1] 

\Require Original image $x_{0}^{\orig}$, reference image $x_{0}^{\refer}$,
original text $c^{\orig}$, edited text $c^{\edit}$, guidance weights
$w^{\orig}$, $w^{\edit}$, $\hat{w}^{\orig}$, external encoder $G$,
external distance loss $\mathcal{L}$, external guidance weight $\rho_{t}$.

\State $\left\{ x_{t}^{\orig}\right\} _{t=1}^{T}$, $\left\{ u_{t}^{\orig}\right\} _{t=1}^{T}$
= $\text{Inversion}\left(x_{0}^{\orig},c^{\orig}\right)$

\State $x_{T}^{\edit}=x_{T}^{\orig}$

\For {$t=T,\dots,1$}

	\State $x_{t}=x_{t}^{\edit}$

	\State $\tilde{\epsilon}_{\theta}\left(x_{t},t,c^{\orig}\right)=w^{\orig}\epsilon_{\theta}\left(x_{t},t,c^{\orig}\right)+\left(1-w^{\orig}\right)\epsilon_{\theta}\left(x_{t},t,\varnothing\right)$

	\State Compute $\tilde{\mu}_{\theta,\omega,t,t-1}\left(x_{t},c^{\orig}\right)$
from $\tilde{\epsilon}_{\theta}\left(x_{t},t,c^{\orig}\right)$ via
Eq.~\ref{eq:backward_trans_prob}

	\State $\ensuremath{x_{t-1}^{\base}=\tilde{\mu}_{\theta,\omega,t,t-1}\left(x_{t},c^{\orig}\right)+u_{t}^{\orig}}$

	\If {text-guided editing}

		\If {combined with P2P}

			\State Get the attention map $M_{t}^{\edit}$ from $\epsilon_{\theta}\left(x_{t},t,c^{\edit}\right)$

			\State Get the attention map $M_{t}^{\orig}$ from $\epsilon_{\theta}\left(x_{t}^{\orig},t,c^{\orig}\right)$

			\State $\hat{M}_{t}^{\edit}=\text{P2P}\left(M_{t}^{\edit},M_{t}^{\orig},t\right)$

			\State Apply the new attention map $\hat{M}_{t}^{\edit}$ to $\epsilon_{\theta}\left(x_{t},t,c^{\edit}\right)$

		\EndIf 

		\State $f\left(x_{t},t\right)=w^{\edit}\epsilon_{\theta}\left(x_{t},t,c^{\edit}\right)-\hat{w}^{\orig}\epsilon_{\theta}\left(x_{t},t,c^{\orig}\right)+\left(\hat{w}^{\orig}-w^{\edit}\right)\epsilon_{\theta}\left(x_{t},t,\varnothing\right)$

		\State $\hat{x}_{t-1}=x_{t-1}^{\base}+\left(\sqrt{\sigma_{t-1}^{2}-\omega_{t,t-1}^{2}}-\frac{\sigma_{t}a_{t-1}}{a_{t}}\right)f\left(x_{t},t\right)$

		\State $\hat{\epsilon}_{t}=\text{stop\_grad}\left(w^{\edit}\epsilon_{\theta}\left(x_{t},t,c^{\edit}\right)+\left(1-w^{\edit}\right)\epsilon_{\theta}\left(x_{t},t,\varnothing\right)\right)$

	\Else

		\State $\hat{x}_{t-1}=x_{t-1}^{\base}$

		\State $\hat{\epsilon}_{t}=\text{stop\_grad}\left(w^{\orig}\epsilon_{\theta}\left(x_{t},t,c^{\orig}\right)+\left(1-w^{\orig}\right)\epsilon_{\theta}\left(x_{t},t,\varnothing\right)\right)$

	\EndIf 

	\State $x_{0|t}=\dfrac{x_{t}-\sigma_{t}\hat{\epsilon}_{t}}{a_{t}}$

	\State  $g_{t}=-\nabla_{x_{t}}\mathcal{L}\left(G\left(x_{0|t}\right),G\left(x_{0}^{\refer}\right)\right)$

	\State  $x_{t-1}^{\edit}=\hat{x}_{t-1}+\rho_{t}g_{t}$

	\If {text-guided editing \textbf{and} combined with P2P \textbf{and}
local blending}

		\State $x_{t-1}^{\edit}=\text{local\_blend}\left(x_{t-1}^{\edit},x_{t-1}^{\orig}\right)$

	\EndIf 

\EndFor 

\end{algorithmic}\caption{Explicit $h$-Edit for combined editing, compatible with both deterministic
and random inversion, and supporting integration with the P2P \cite{hertz2023prompttoprompt}.\label{alg:explicit_h_edit}}
\end{algorithm}

\begin{algorithm}[H]
\begin{algorithmic}[1] 

\Require Original image $x_{0}^{\orig}$, reference image $x_{0}^{\refer}$,
original text $c^{\orig}$, edited text $c^{\edit}$, guidance weights
$w^{\orig}$, $w^{\edit}$, $\hat{w}^{\orig}$, reconstruction weight
$\lambda_{t}$, external encoder $G$, external distance loss $\mathcal{L}$,
external guidance weight $\rho_{t}$, number of implicit loops $K$.

\State $\left\{ x_{t}^{\orig}\right\} _{t=1}^{T}$, $\left\{ u_{t}^{\orig}\right\} _{t=1}^{T}$
= $\text{Inversion}\left(x_{0}^{\orig},c^{\orig}\right)$

\State $x_{T}^{\edit}=x_{T}^{\orig}$

\For {$t=T,\dots,1$}

	\State $x_{t}=x_{t}^{\edit}$

	\State $\tilde{\epsilon}_{\theta}\left(x_{t},t,c^{\orig}\right)=w^{\orig}\epsilon_{\theta}\left(x_{t},t,c^{\orig}\right)+\left(1-w^{\orig}\right)\epsilon_{\theta}\left(x_{t},t,\varnothing\right)$

	\State Compute $\tilde{\mu}_{\theta,\omega,t,t-1}\left(x_{t},c^{\orig}\right)$
from $\tilde{\epsilon}_{\theta}\left(x_{t},t,c^{\orig}\right)$ via
Eq.~\ref{eq:backward_trans_prob}

	\State $\ensuremath{x_{t-1}^{\base}=\tilde{\mu}_{\theta,\omega,t,t-1}\left(x_{t},c^{\orig}\right)+u_{t}^{\orig}}$

	\State $x_{t-1}^{(0)}=x_{t-1}^{\base}$

	\For {$k=0,\dots,K-1$}

		\If {improving reconstruction}

			\State $r_{t-1}=x_{t-1}^{(k)}-x_{t-1}^{\base}$

			\State $x_{t-1}^{(k)}=x_{t-1}^{(k)}-\lambda_{t-1}r_{t-1}$

		\EndIf 

		\If {text-guided editing}

			\If {combined with P2P}

				\State Get the attention map $M_{t-1}^{\edit}$ from $\epsilon_{\theta}\left(x_{t-1}^{(k)},t-1,c^{\edit}\right)$

				\State Get the attention map $M_{t-1}^{\orig}$ from $\epsilon_{\theta}\left(x_{t-1}^{\orig},t-1,c^{\orig}\right)$

				\State $\hat{M}_{t-1}^{\edit}=\text{P2P}\left(M_{t-1}^{\edit},M_{t-1}^{\orig},t-1\right)$

				\State Apply the new attention map $\hat{M}_{t-1}^{\edit}$ to
$\epsilon_{\theta}\left(x_{t-1}^{(k)},t-1,c^{\edit}\right)$

			\EndIf 

			\State $f\left(x_{t-1}^{(k)},t-1\right)=w^{\edit}\epsilon_{\theta}\left(x_{t-1}^{(k)},t-1,c^{\edit}\right)-\hat{w}^{\orig}\epsilon_{\theta}\left(x_{t-1}^{(k)},t-1,c^{\orig}\right)+\left(\hat{w}^{\orig}-w^{\edit}\right)\epsilon_{\theta}\left(x_{t-1}^{(k)},t-1,\varnothing\right)$

			\State $\hat{x}_{t-1}=x_{t-1}^{(k)}+\left(\sqrt{\sigma_{t-1}^{2}-\omega_{t,t-1}^{2}}-\frac{\sigma_{t}a_{t-1}}{a_{t}}\right)f\left(x_{t-1}^{(k)},t-1\right)$

			\State $\hat{\epsilon}_{t-1}=\text{stop\_grad}\left(w^{\edit}\epsilon_{\theta}\left(x_{t-1}^{(k)},t-1,c^{\edit}\right)+\left(1-w^{\edit}\right)\epsilon_{\theta}\left(x_{t-1}^{(k)},t-1,\varnothing\right)\right)$

		\Else

			\State $\hat{x}_{t-1}=x_{t-1}^{(k)}$

			\State $\hat{\epsilon}_{t-1}=\text{stop\_grad}\left(w^{\orig}\epsilon_{\theta}\left(x_{t-1}^{(k)},t-1,c^{\orig}\right)+\left(1-w^{\orig}\right)\epsilon_{\theta}\left(x_{t-1}^{(k)},t-1,\varnothing\right)\right)$

		\EndIf 

		\State $x_{0|t-1}=\dfrac{\hat{x}_{t-1}-\sigma_{t-1}\hat{\epsilon}_{t-1}}{a_{t-1}}$

		\State  $g_{t-1}=-\nabla_{\hat{x}_{t-1}}\mathcal{L}\left(G\left(x_{0|t-1}\right),G\left(x_{0}^{\refer}\right)\right)$

		\State  $x_{t-1}^{\left(k+1\right)}=\hat{x}_{t-1}+\rho_{t-1}g_{t-1}$

	\EndFor 

	\State $x_{t-1}^{\text{\ensuremath{\edit}}}=x_{t-1}^{(K)}$

	\If {text-guided editing \textbf{and} combined with P2P \textbf{and}
local blending}

		\State $x_{t-1}^{\edit}=\text{local\_blend}\left(x_{t-1}^{\edit},x_{t-1}^{\orig}\right)$

	\EndIf 

\EndFor 

\end{algorithmic}\caption{Implicit $h$-Edit for combined editing, compatible with both deterministic
and random inversions, and supporting integration with the P2P \cite{hertz2023prompttoprompt}.\label{alg:implicit_h_edit}}
\end{algorithm}

\subsection{Edit Friendly for Combined Editing\label{subsec:algorithm_ef}}

In this work, we extend Edit Friendly \cite{huberman2024edit} to
combined text-guided and reward-model-based editing tasks by combining
it with the technique in \cite{yu2023freedom}. The pseudo-code for
this extension is provided in Algorithm~\ref{alg:EF_for_combined_editing}.
This extension serves as a baseline for our method in the combined
editing setting.

\begin{algorithm}
\begin{algorithmic}[1] 

\Require Original image $x_{0}^{\orig}$, reference image $x_{0}^{\refer}$,
original text $c^{\orig}$, edited text $c^{\edit}$, guidance weights
$w^{\orig}$, $w^{\edit}$, external encoder $G$, external distance
loss $\mathcal{L}$, external guidance weight $\rho_{t}$.

\State $x_{T}^{\orig},\left\{ u_{t}^{\orig}\right\} _{t=1}^{T}=\text{RandomInversion}(x_{0}^{\orig},c^{\orig})$

\State $x_{T}^{\edit}=x_{T}^{\orig}$

\For {$t=T,\dots,1$}

	\State $x_{t}=x_{t}^{\edit}$

		\If {text-guided editing}

			\If {combined with P2P}

				\State Get the attention map $M_{t}^{\edit}$ from $\epsilon_{\theta}\left(x_{t},t,c^{\edit}\right)$

				\State Get the attention map $M_{t}^{\orig}$ from $\epsilon_{\theta}\left(x_{t}^{\orig},t,c^{\orig}\right)$

				\State $\hat{M}_{t}^{\edit}=\text{P2P}\left(M_{t}^{\edit},M_{t}^{\orig},t\right)$

				\State Apply the new attention map $\hat{M}_{t}^{\edit}$ to
$\epsilon_{\theta}\left(x_{t},t,c^{\edit}\right)$

			\EndIf 

			\State $\tilde{\epsilon}_{\theta}\left(x_{t},t\right)=w^{\edit}\epsilon_{\theta}\left(x_{t},t,c^{\edit}\right)+\left(1-w^{\edit}\right)\epsilon_{\theta}\left(x_{t},t,\varnothing\right)$

		\Else

			\State $\tilde{\epsilon}_{\theta}\left(x_{t},t\right)=w^{\orig}\epsilon_{\theta}\left(x_{t},t,c^{\orig}\right)+\left(1-w^{\orig}\right)\epsilon_{\theta}\left(x_{t},t,\varnothing\right)$

		\EndIf 

	\State Compute $\tilde{\mu}_{\theta,\omega,t,t-1}\left(x_{t},t\right)$
from $\tilde{\epsilon}_{\theta}\left(x_{t},t\right)$ via Eq.~\ref{eq:backward_trans_prob}

	\State $x_{0|t}=\dfrac{x_{t}-\sigma_{t}\tilde{\epsilon}_{\theta}\left(x_{t},t\right)}{a_{t}}$
where $a_{t}=\sqrt{\bar{\alpha}_{t}}$ and $\sigma_{t}=\sqrt{1-\bar{\alpha}_{t}}$

	\State  $g_{t}=-\nabla_{x_{t}}\mathcal{L}\left(G\left(x_{0|t}\right),G\left(x_{0}^{\refer}\right)\right)$

	\State $x_{t-1}^{\edit}=\tilde{\mu}_{\theta,\omega,t,t-1}\left(x_{t},t\right)+\rho_{t}g_{t}+u_{t}^{\orig}$

	\If {text-guided editing \textbf{and} combined with P2P \textbf{and}
local blending}

		\State $x_{t-1}^{\edit}=\text{local\_blend}\left(x_{t-1}^{\edit},x_{t-1}^{\orig}\right)$

	\EndIf 

\EndFor 

\State \textbf{return} $x_{0}^{\edit}$

\end{algorithmic}

\caption{Edit Friendly for combined editing, supporting integration with the
P2P \cite{hertz2023prompttoprompt}.\label{alg:EF_for_combined_editing}}
\end{algorithm}

\section{Additional Discussion on Related Work\label{sec:Additional-Related-Work}}

\subsection{Training-based Editing}

Training-based approaches, such as DiffusionCLIP \cite{kim2022diffusionclip}
and Asyrp \cite{kwon2023diffusion}, modify the noise network of a
pretrained diffusion model through fine-tuning or by incorporating
an auxiliary network, resulting in a new noise network that supports
generating images with the desired editing attributes. The local directional
CLIP loss \cite{gal2022stylegan} is commonly used as the training
objective. However, these methods require training a new network for
each specific editing target, limiting their adaptability to diverse
editing scenarios in practice. In contrast, InstructPix2Pix \cite{brooks2023instructpix2pix}
trains an entirely new diffusion model that generates images based
on editing instructions. The instruction texts and target edited images
for training are generated by GPT-3 \cite{brown2020language} and
P2P \cite{hertz2023prompttoprompt}, respectively, meaning that the
quality of the edits is inherently tied to P2P\textquoteright s performance.
Additionally, the high training cost remains a significant drawback
of this method. 

\subsection{Conditional Generation with Diffusion Models}

The goal of conditional generation is to sample data from the joint
distribution $p\left(x_{0}\right)p\left(y|x_{0}\right)$, which can
be achieved by learning the score $\nabla\log p\left(x_{t},y\right)$
of the joint distribution $p\left(x_{t},y\right)$ via the score matching
framework \cite{hyvarinen2005estimation,song2019generative}. Class-guided
diffusion model \cite{Dhariwal2021Diffusion} learns a noisy classifier
$p\left(y|x_{t}\right)$ and combines its gradient with the score
$\nabla\log p\left(x_{t}\right)$ learned by another unconditional
diffusion model (e.g., DDPM \cite{ho2020denoising}) to obtain $\nabla\log p\left(x_{t},y\right)$.
Meanwhile, classifier-free guidance \cite{Ho2022Classifier} simultaneously
learn both $\nabla\log p\left(x_{t}\right)$ and $\nabla\log p\left(x_{t}|y\right)$
using a single noise network. Energy-guided SDE (EGSDE) \cite{zhao2022egsde}
extends class-guided diffusion models to solve the image-to-image
translation problem. It utilizes a noisy classifier pretrained on
both the source and target domains to define a similarity score between
noisy samples from the two domains. This score acts as a negative
energy guiding the generation of target domain samples toward preserving
some properties of the corresponding source domain samples. The energy-based
perspective have also been considered in works on generating compositional
concepts with diffusion models \cite{liu2022compositional}. FreeDom
\cite{yu2023freedom} approximates the time-dependent energy function
in EGSDE using Tweedie's formula: $\mathcal{E}\left(c,x_{t},t\right)=\Expect_{p\left(x_{0}|x_{t}\right)}\left[\mathcal{E}\left(c,x_{0},t\right)\right]\approx\mathcal{E}\left(c,x_{0|t},t\right)$
\cite{efron2011tweedie,chung2023diffusion}. This eliminates the reliance
on a noisy classifier which is often difficult to obtain in practice
and allows FreeDom to leverage any available pretrained model on clean
samples $x_{0}$ to define the energy function. As a result, FreeDom
supports conditional information from segmentation maps, style images,
and face IDs. Similarly, UGD \cite{bansal2024universal} utilizes
Tweedie's formula but employs a different reparameterization for guidance
using external networks. 

The EGSDE framework can be considered as a special case of our reverse-time
bridge modeling framework, as ours applies to more general Markov
processes rather than just diffusion SDEs. Our framework also provides
a formula for the bridge's transition distribution, enabling ancestral
sampling in a discrete-time setting. Meanwhile, EGSDE usually relies
on the Euler-Maruyama method for approximate sampling because it only
has access to the instantaneous velocity at time $t$.

\subsection{Diffusion Bridges and Doob's $h$-Transform}

Most diffusion bridge methods \cite{de2021diffusion,liu2023learning,liu2023i2sb,somnath2023aligned,zhou2024denoising}
focus on the image-to-image translation problem which involves matching
two explicit distributions of two domains A, B. They typically assume
a diffusion model that generates domain A from Gaussian noise is given,
and apply Doob's $h$-transform \cite{doob1984classical} to the forward
process of this diffusion model to map samples of domain A to those
of domain B rather than Gaussian noise. Some approaches like \cite{somnath2023aligned,liu2023learning}
directly learn the $h$-function, while others \cite{zhou2024denoising}
utilize an analytical form of the $h$-function and learn the score
of the reverse bridge. Our method, in contrast, applies Doob's $h$-transform
to the backward process to map Gaussian noise to samples with the
desired target attributes.

\section{Further Details on Experimental Settings\label{sec:Additional-Experimental-Setup}}

\subsection{Text-guided Editing\label{subsec:Exp_setting_text_guided_editing}}

The P2P hyperparameters for deterministic-inversion-based methods
with P2P (including $h$-Edit-D + P2P) were configured based on the
setup in \cite{ju2023direct}. Specifically, the sampling step proportions
for self-attention and cross-attention controls were set to 0.6 and
0.4, respectively. For $h$-Edit-R and EF with P2P, the proportion
of sampling steps for self-attention control was adjusted to 0.35,
as 0.6 was found to be excessive for effective editing with these
methods. For $h$-Edit-R and EF without P2P, the first 15 steps were
skipped to ensure faithful reconstruction, as recommended in \cite{huberman2024edit}.
This skipping was not required for their P2P counterparts. For LEDITS++~\cite{brack2024ledits},
we adhered to the hyperparameters specified in the original paper.

\subsection{Face Swapping\label{subsec:Exp_setting_face_swapping}}

We utilized the official pretrained models for MegaFS, AFS, and DiffFace,
available at \href{https://github.com/zyainfal/One-Shot-Face-Swapping-on-Megapixels}{MegaFS},
\href{https://github.com/truongvu2000nd/AFS}{AFS}, and \href{https://github.com/hxngiee/DiffFace}{DiffFace},
respectively. Since the official pretrained model for FaceShifter
is unavailable, we used an unofficial pretrained model from \href{https://github.com/richarduuz/Research_Project/tree/master/ModelC}{this repository}.
For evaluation, we employed a pretrained ArcFace model with the IR-SE-50
backbone (\cite{tov2021designing,yu2023freedom}), available through
the \href{https://github.com/TreB1eN/InsightFace_Pytorch}{InsightFace}
library for evaluation. This model was also used in $h$-Edit-R, EF,
and FaceShifter\footnote{FaceShifter uses the ArcFace model with the IR-SE-50 backbone to extract
face identity embeddings during both training and generating swapped
faces.} for generating swapped faces. For DiffFace, the ArcFace model with
the ResNet101 backbone from its official code was used for face swapping.
MegaFS and AFS relied on the ArcFace model with the IR-SE-50 backbone
during training but not during face swapping. Additional evaluations
using other face identity representation models are provided in Appendix~\ref{subsec:Additional_results_face_swapping}.
CelebA-HQ images were resized to 256$\times$256 and cropped as $\texttt{x=x[:, :, 35:223, 32:220]}$
to prepare them for input into the ArcFace model. Following \cite{yu2023freedom},
we defined the coefficient $\rho_{t}$ for the identity similarity
reward gradient (Algorithms~\ref{alg:implicit_h_edit}, \ref{alg:explicit_h_edit},
\ref{alg:EF_for_combined_editing}) as $\rho^{\face}\times\sqrt{\bar{\alpha}_{t}}$,
where $\bar{\alpha}_{t}$ is the Stable Diffusion scheduler coefficient
at time step $t$. For $h$-Edit-R and EF, $\rho^{\face}$ was set
to 100.0. For h-Edit-R (3s),$\rho^{\face}$ was reduced to 50.0, which
provided a better balance between editing effectiveness and faithfulness
when using three optimization steps. To further enhance faithfulness
to the original image, we incorporated the negative LPIPS score as
an additional reward alongside identity similarity. The LPIPS score,
computed using a pretrained VGG network, measures the perceptual similarity
between $x_{0}^{\edit}$ and $x_{0}^{\text{\ensuremath{\orig}}}$.
The coefficient for this reward is similar to that of the identity
similarity reward. For post-processing, we applied a mask generated
by the face parsing model in \cite{yu2018bisenet} to preserve the
original background while applying edits to the face. This procedure
was consistent across all baselines. The face swapping results without
using masks are provided in Appdx.~\ref{subsec:Face-swapping-without-masks}.

\subsection{Combined Text-guided and Style Editing\label{subsec:Exp_setting_style_n_text_guided}}

In combined text-guided and style editing, we disabled local blending
in P2P as our experiments indicated that it negatively impacts style
editing performance. For EF + P2P, following~\cite{yu2023freedom},
we scaled the gradient norm of the style loss reward at each time
$t$ by the norm of $\left[\epsilon\left(x_{t},t,c^{\edit}\right)-\epsilon\left(x_{t},t,\varnothing\right)\right]$.
This corresponds to defining the coefficient $\rho_{t}$ for style
editing in EF + P2P as:
\begin{equation}
\rho_{t}:=\rho^{\style}*\frac{\left\Vert \left(\epsilon\left(x_{t},t,c^{\edit}\right)-\epsilon\left(x_{t},t,\varnothing\right)\right)\right\Vert _{2}}{\left\Vert g_{t}\right\Vert _{2}}
\end{equation}
For $h$-Edit-R + P2P, we scaled the gradient norm of the style reward
to match the norm of the text-guided editing function $f\left(\cdot\right)$
in Eq.~\ref{eq:hedit_SD_3}. This approach leverages the disentangled
update mechanism unique to our method (Sections~\ref{sec:Method}
and \ref{subsec:Closed-form-expressions-for-explicit-and-implicit}).
Accordingly, the coefficient $\rho_{t}$ for the style editing term
in $h$-Edit-R + P2P is defined as:
\begin{equation}
\rho_{t}:=\rho^{\style}*\frac{\left\Vert f\left(x_{t},t\right)\right\Vert _{2}}{\left\Vert g_{t}\right\Vert _{2}}
\end{equation}

\begin{figure*}
\begin{centering}
\begin{tabular}{cc}
\includegraphics[width=0.49\textwidth]{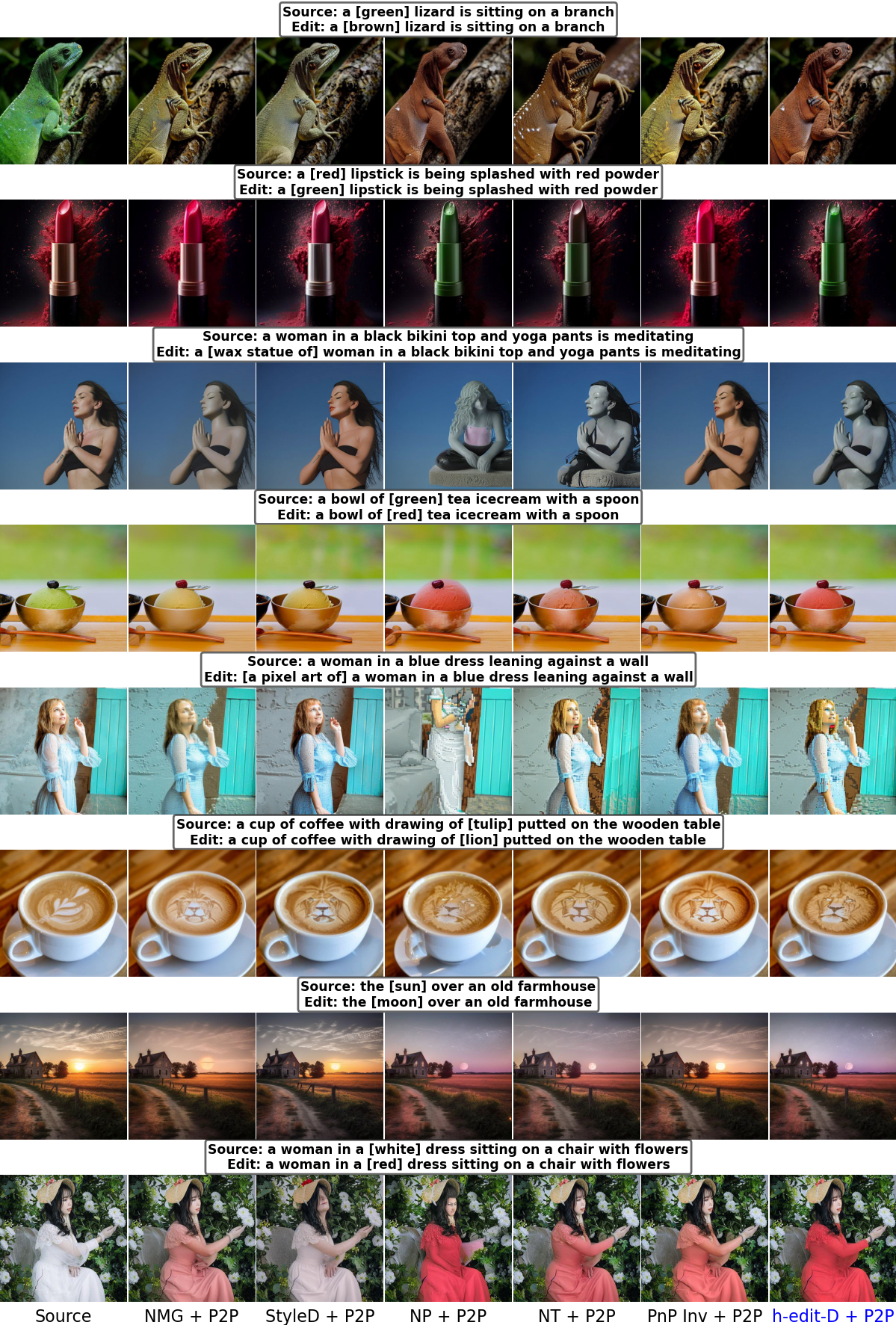} & \includegraphics[width=0.49\textwidth]{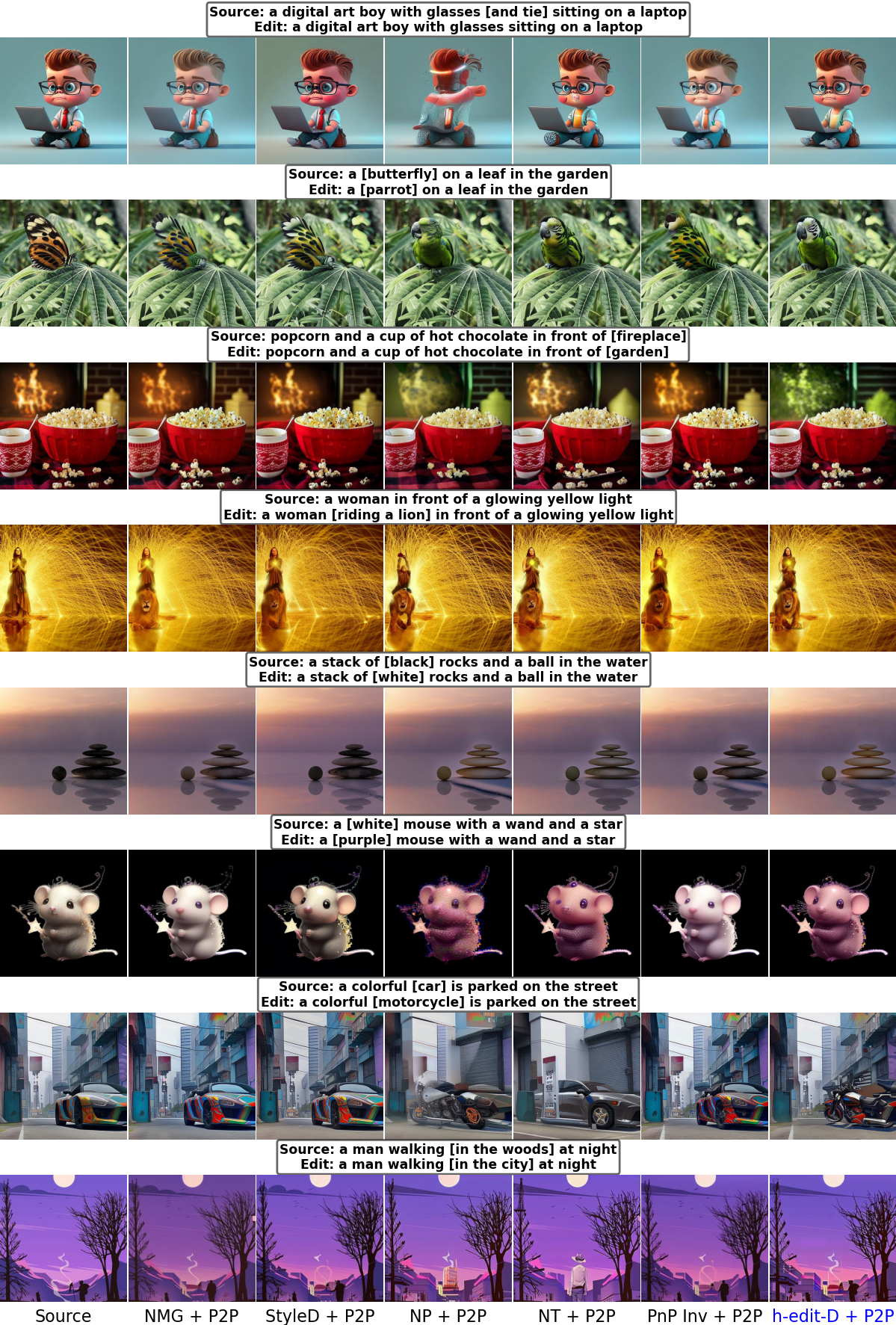}\tabularnewline
\end{tabular}
\par\end{centering}
\caption{Additional visual comparisons between $\protect\Model$-D + P2P and
other deterministic-inversion-based methods with P2P.\label{fig:Additional-visualization-h-Edit-D-P2P}}
\end{figure*}

\section{Additional Experimental Results\label{sec:Additional-Experimental-Results}}

\subsection{Text-guided Editing\label{subsec:Additional_results_text_guided}}

\subsubsection{Deterministic-inversion-based methods}

Fig.~\ref{fig:Additional-visualization-h-Edit-D-P2P} shows additional
edited images produced by $h$-Edit-D + P2P alongside other deterministic-inversion-based
editing methods with P2P \cite{cho2024noise,ju2023direct,li2023stylediffusion,mokady2023null,miyake2023negative}.
$h$-Edit-D + P2P consistently outperforms the baselines in handling
difficult edits, while maintaining faithful reconstruction, as reflected
in the quantitative results in Table~\ref{tab:Text-Guided-Editing-Results}.
For instance, our method successfully removes the boy's tie (first
row, right) and transforms the car into a motorcycle (seventh row,
right), tasks where most other methods struggle. Although NP + P2P
and NT + P2P demonstrate strong editing capabilities, as evidenced
by their high local CLIP similarity scores in Table~\ref{tab:Text-Guided-Editing-Results},
they are not good at preserving non-edited content compared to other
methods. Conversely, NMG + P2P, StyleD + P2P, and PnP Inv + P2P achieve
high fidelity to the original image, but fail to deliver effective
edits in many cases.

\begin{figure*}
\begin{centering}
\begin{tabular}{cc}
\includegraphics[width=0.36\textwidth]{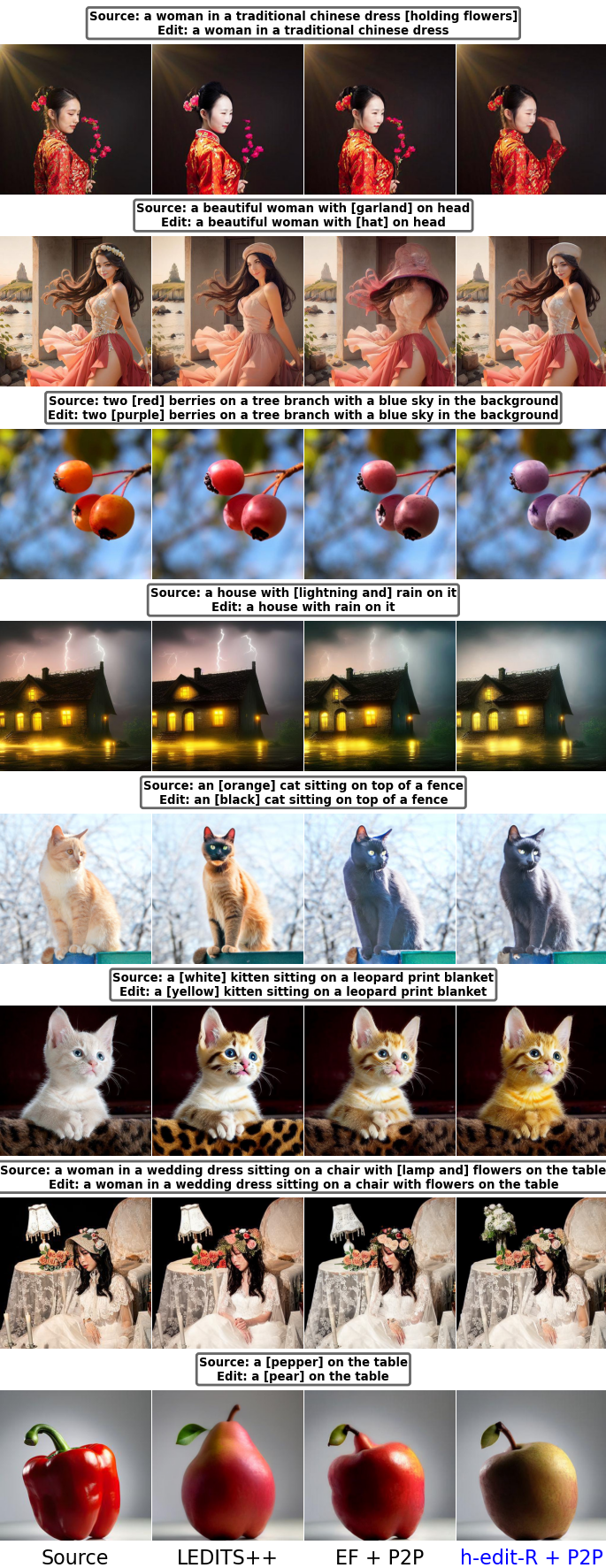} & \includegraphics[width=0.36\textwidth]{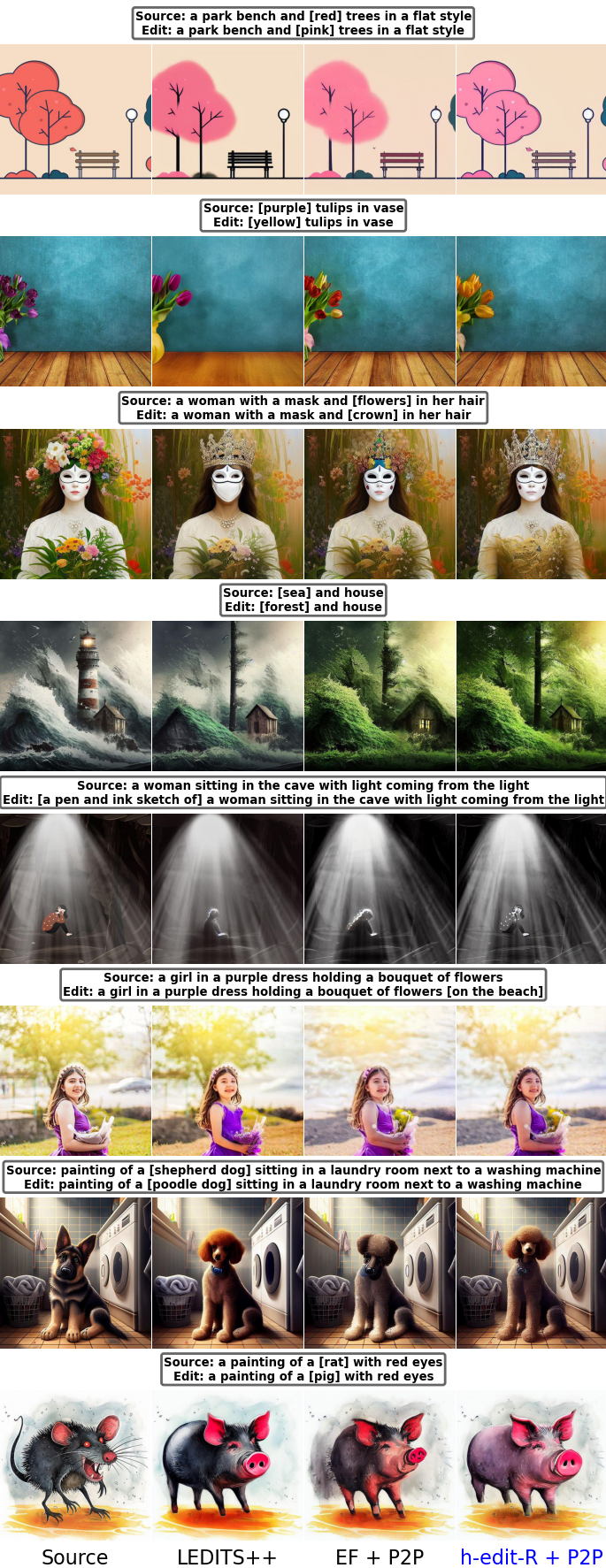}\tabularnewline
\end{tabular}
\par\end{centering}
\caption{Additional qualitative results of $\protect\Model$-R, EF, and LEDITS+++
with P2P.\label{fig:Additional-visualization-h-Edit-R-P2P}}
\end{figure*}

\subsubsection{Random-inversion-based methods}

In Fig.~\ref{fig:Additional-visualization-h-Edit-R-P2P}, we present
additional visual comparisons of $h$-Edit-R + P2P against EF + P2P
and LEDITS++. These visualizations are consistent with the quantitative
results in Table~\ref{tab:Text-Guided-Editing-Results}, confirming
that our method surpasses both EF + P2P and LEDITS++ in editing effectiveness
and faithfulness. Further qualitative results of $h$-Edit-R and EF
without P2P are shown in Fig.~\ref{fig:Additional-visualization-h-Edit-R},
where our method once again demonstrates superior performance.

\begin{figure}
\begin{centering}
\begin{tabular}{cc}
\includegraphics[width=0.3\textwidth]{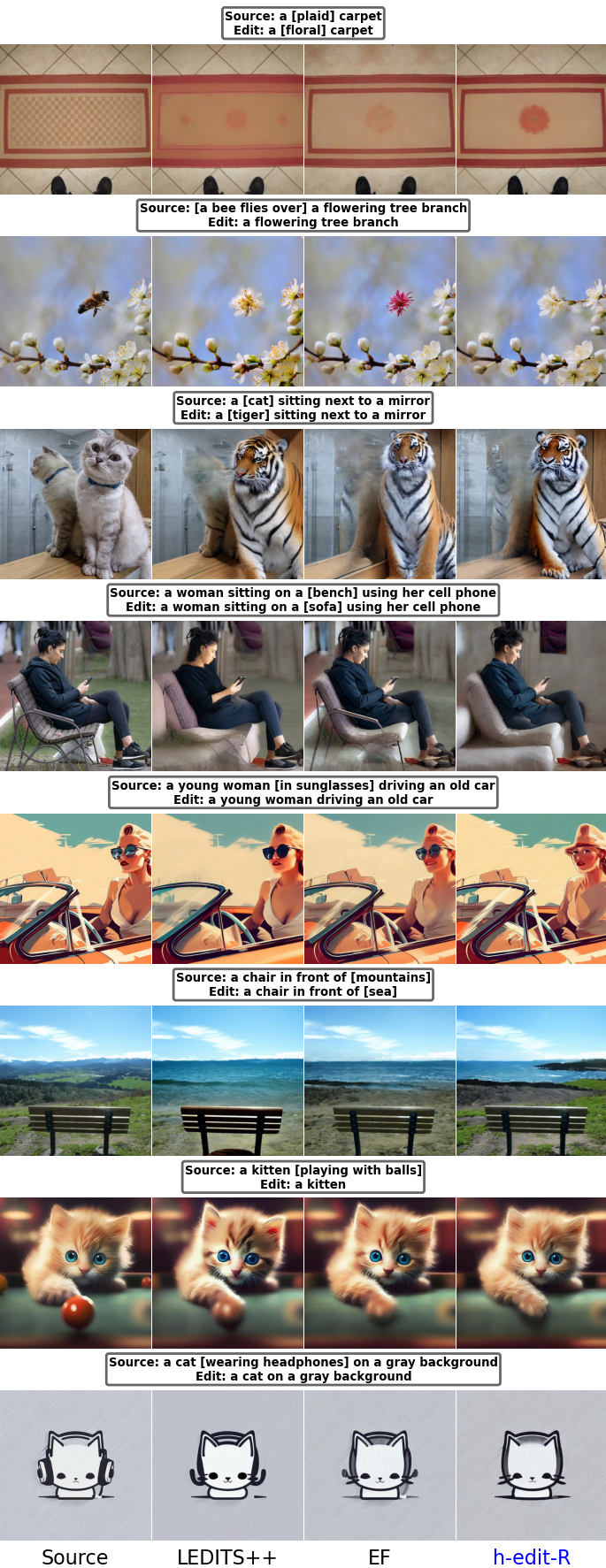} & \includegraphics[width=0.3\textwidth]{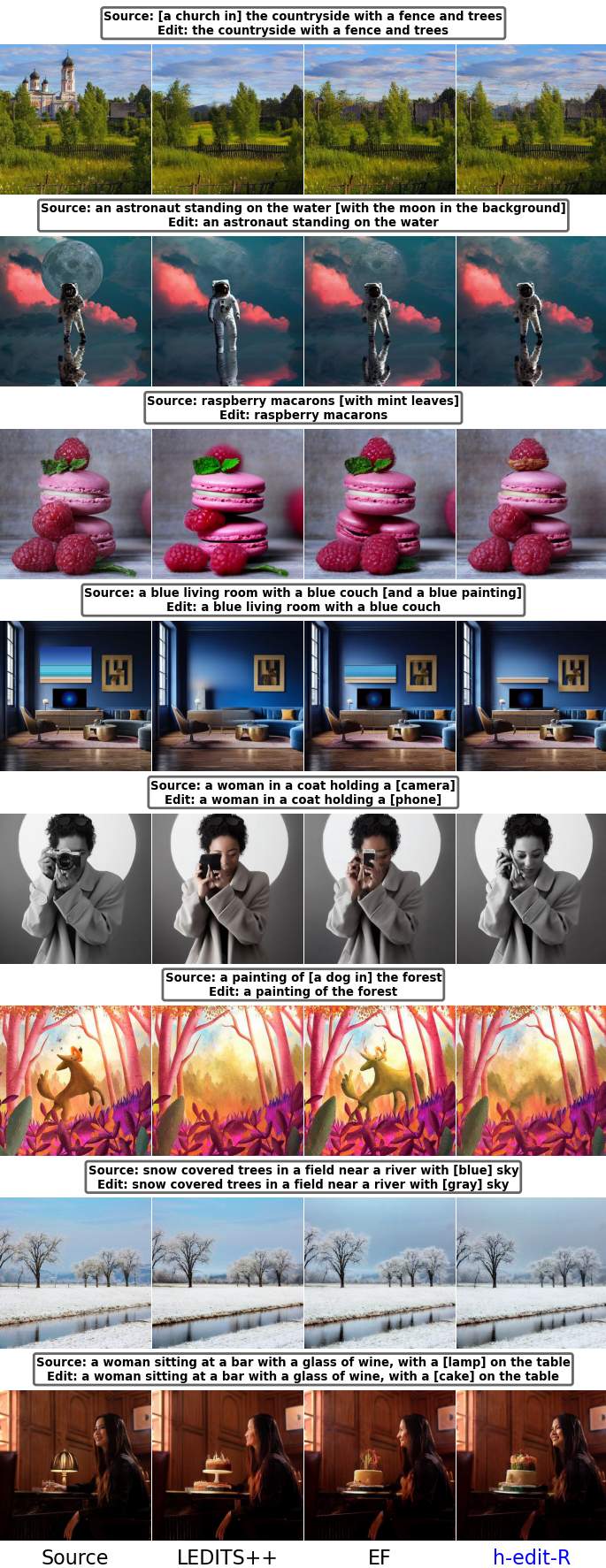}\tabularnewline
\end{tabular}
\par\end{centering}
\caption{Additional qualitative results of $\protect\Model$-R, EF (without
P2P), and LEDITS++.\label{fig:Additional-visualization-h-Edit-R}}
\end{figure}

\subsection{Face Swapping\label{subsec:Additional_results_face_swapping}}

Since $h$-Edit-R, EF, and FaceShifter utilize the same ArcFace model
for both face swapping and evaluation, this may lead to more favorable
identity matching results for these methods compared to other baselines.
To ensure a fair comparison, we reassessed the identity transfer quality
of all methods using alternative face identity representation models.
Specifically, we used VGG-Face \cite{parkhi2015deep}, FaceNet128,
FaceNet512 \cite{schroff2015facenet} and ArcFace with the ResNet34
backbone. These models were implemented in TensorFlow with pretrained
weights available through the \href{https://github.com/serengil/deepface}{DeepFace repository}
\cite{serengil2024lightface,serengil2020lightface}. Quantitative
results of this evaluation are provided in Table~\ref{tab:face_swapping_results_other_networks}.

Interestingly, DiffFace achieves the best performance across all face
identity representation models used for evaluation. $h$-Edit-R (3s)
and $h$-Edit-R rank second and third, respectively, outperforming
EF and FaceShifter but falling slightly short of DiffFace. This demonstrates
that our method is capable of effective face swapping, even without
being explicitly trained for this task like DiffFace, as further illustrated
by the qualitative results in Fig.~\ref{fig:Additional-qualitative-results-face-swapping}.
We hypothesize that DiffFace's good performance may be attributed
to (i) its use of an ArcFace model with a larger backbone (ResNet101)
for face swapping and (ii) training on a larger dataset compared to
the pretrained diffusion model employed by our method.

\begin{table}
\begin{centering}
{\resizebox{0.83\textwidth}{!}{
\begin{tabular}{ccccccccc}
\toprule 
Model & Metric & FaceShifter & MegaFS & AFS & DiffFace & EF & $h$-edit-R & $h$-edit-R (3s)\tabularnewline
\midrule
\midrule 
ArcFace (ResNet34) & Cosine Sim. \textbf{$\uparrow$} & 0.54 & 0.33 & 0.44 & \textbf{0.56} & 0.50 & 0.52 & \uline{0.55}\tabularnewline
\midrule 
VGG-Face & L2 Dist. $\downarrow$ & 0.99 & 1.12 & 1.03 & \textbf{0.96} & 1.02 & 1.00 & \uline{0.97}\tabularnewline
\midrule
FaceNet128 & L2 Dist. $\downarrow$ & 0.83 & 1.02 & 0.86 & \textbf{0.77} & 0.83 & \uline{0.80} & \textbf{0.77}\tabularnewline
\midrule
FaceNet512 & L2 Dist. $\downarrow$ & \uline{0.81} & 1.01 & 0.87 & \textbf{0.77} & 0.83 & \uline{0.81} & \textbf{0.77}\tabularnewline
\bottomrule
\end{tabular}{\small{}}}}{\small\par}
\par\end{centering}
\caption{Face identity transfer results evaluated using face identity representation
models different from the ArcFace model with the IR-SE-50 backbone.\label{tab:face_swapping_results_other_networks}}
\end{table}

\begin{figure*}
\begin{centering}
\begin{tabular}{cc}
\includegraphics[width=0.49\textwidth]{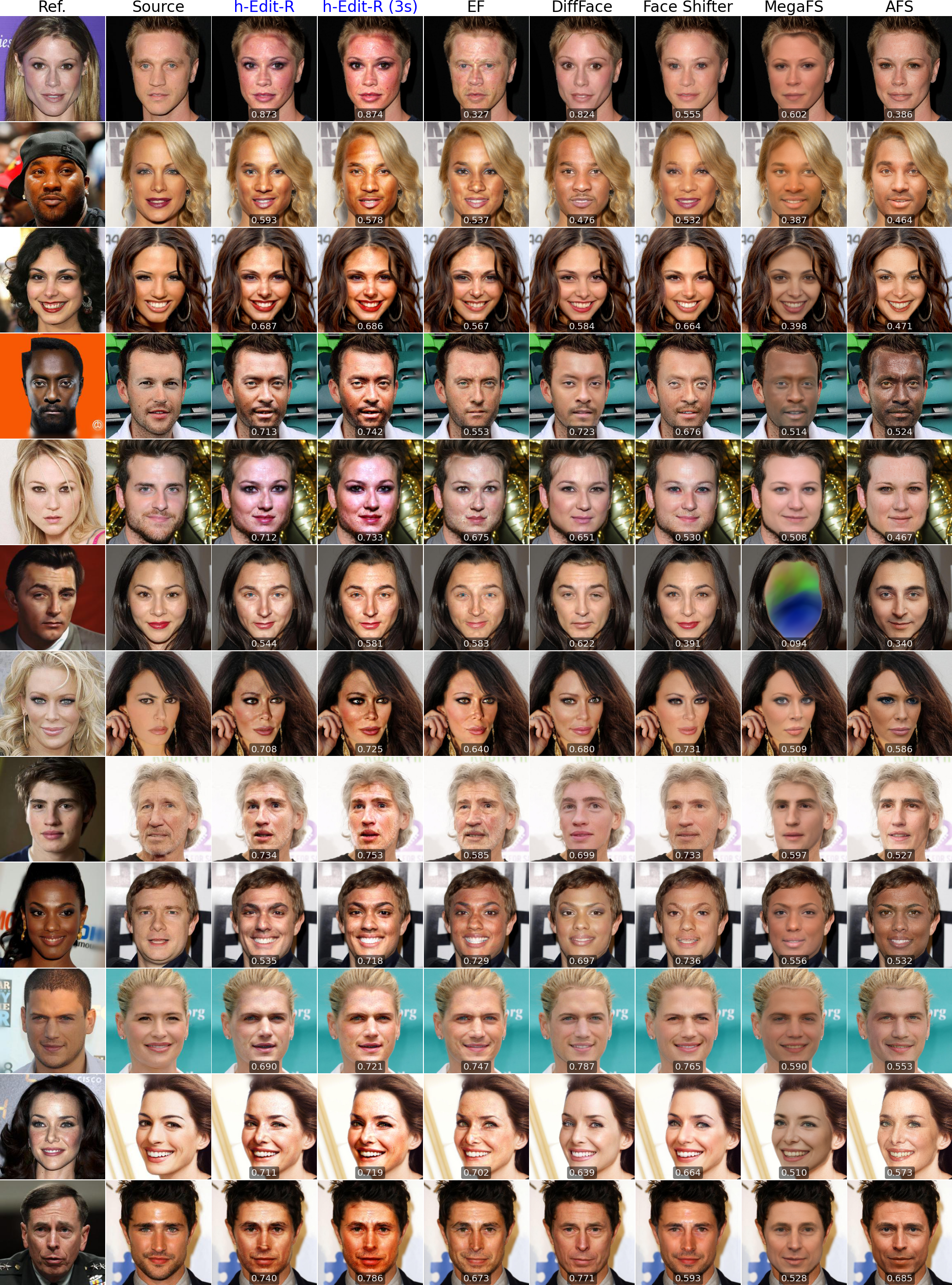} & \includegraphics[width=0.49\textwidth]{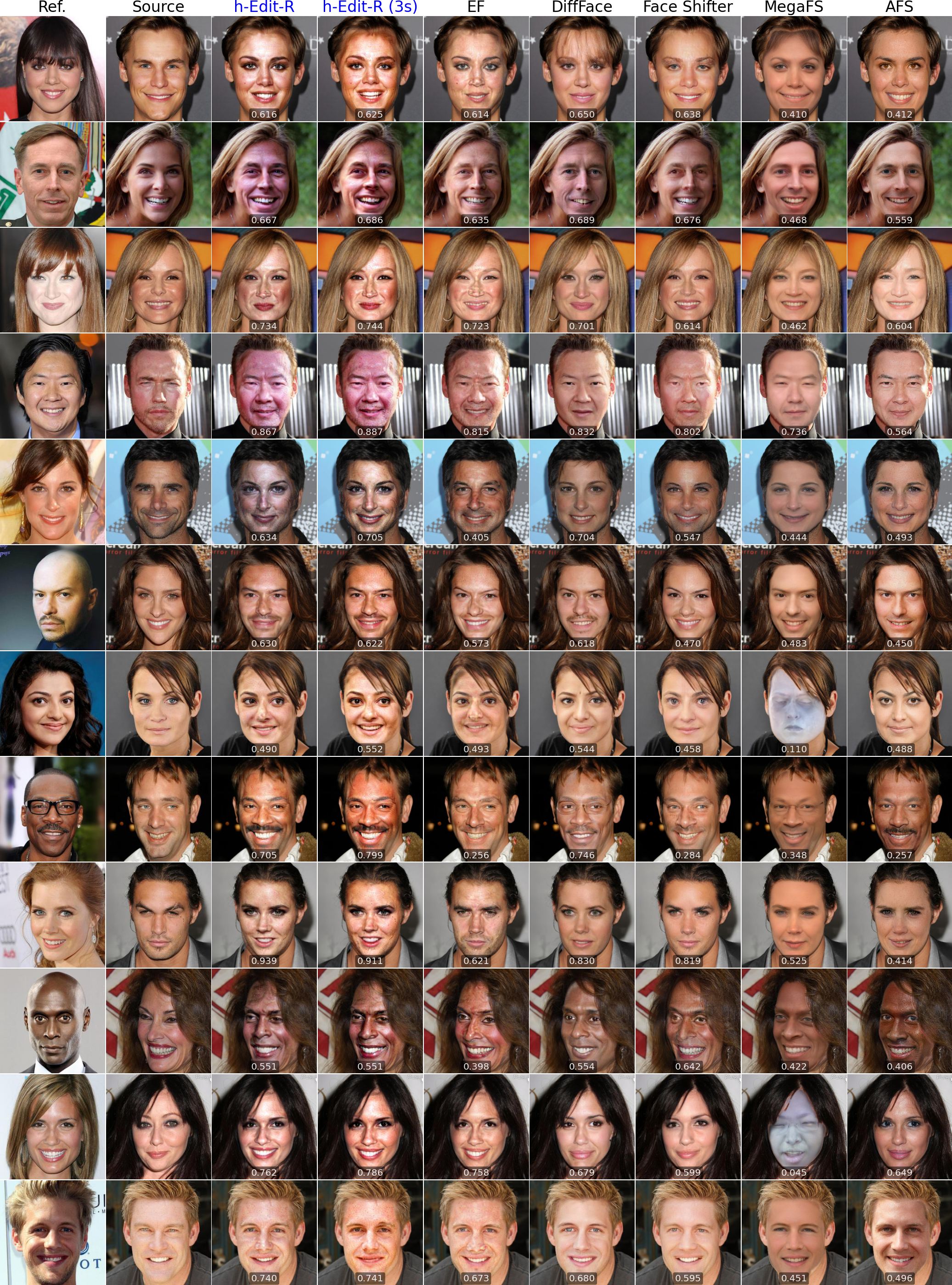}\tabularnewline
\end{tabular}
\par\end{centering}
\caption{Additional qualitative comparisons between our method and other face
swapping baselines. Identity similarity scores (higher is better)
computed using ArcFace with the ResNet34 backbone are displayed below
each output.\label{fig:Additional-qualitative-results-face-swapping}}
\end{figure*}

\subsection{Combined Text-guided and Style Editing\label{subsec:Additional_results_style_text_combined}}

\begin{figure}
\begin{centering}
\begin{minipage}[c][1\totalheight][t]{0.48\textwidth}%
\begin{center}
\includegraphics[width=0.5\textwidth]{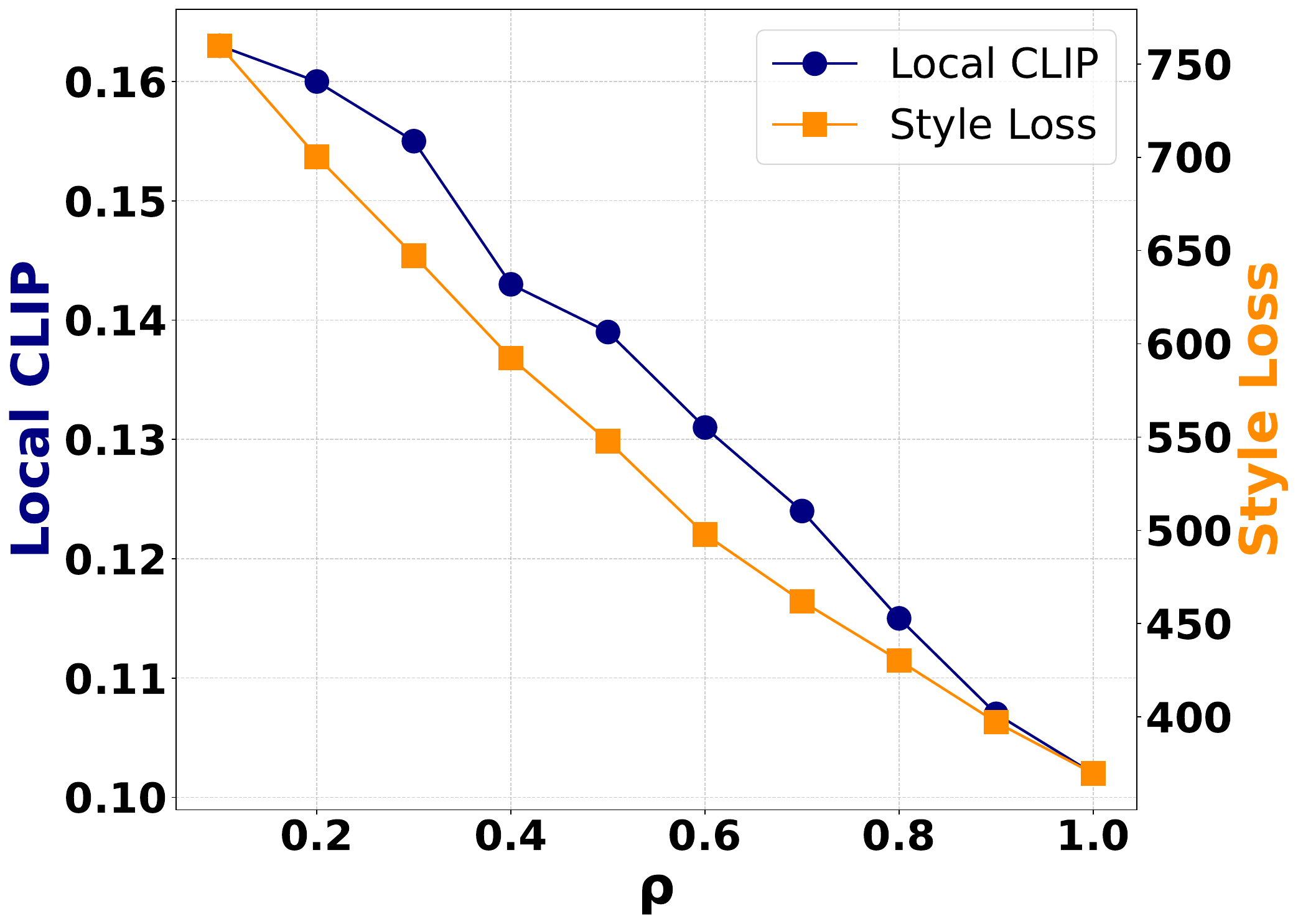}\includegraphics[width=0.5\textwidth]{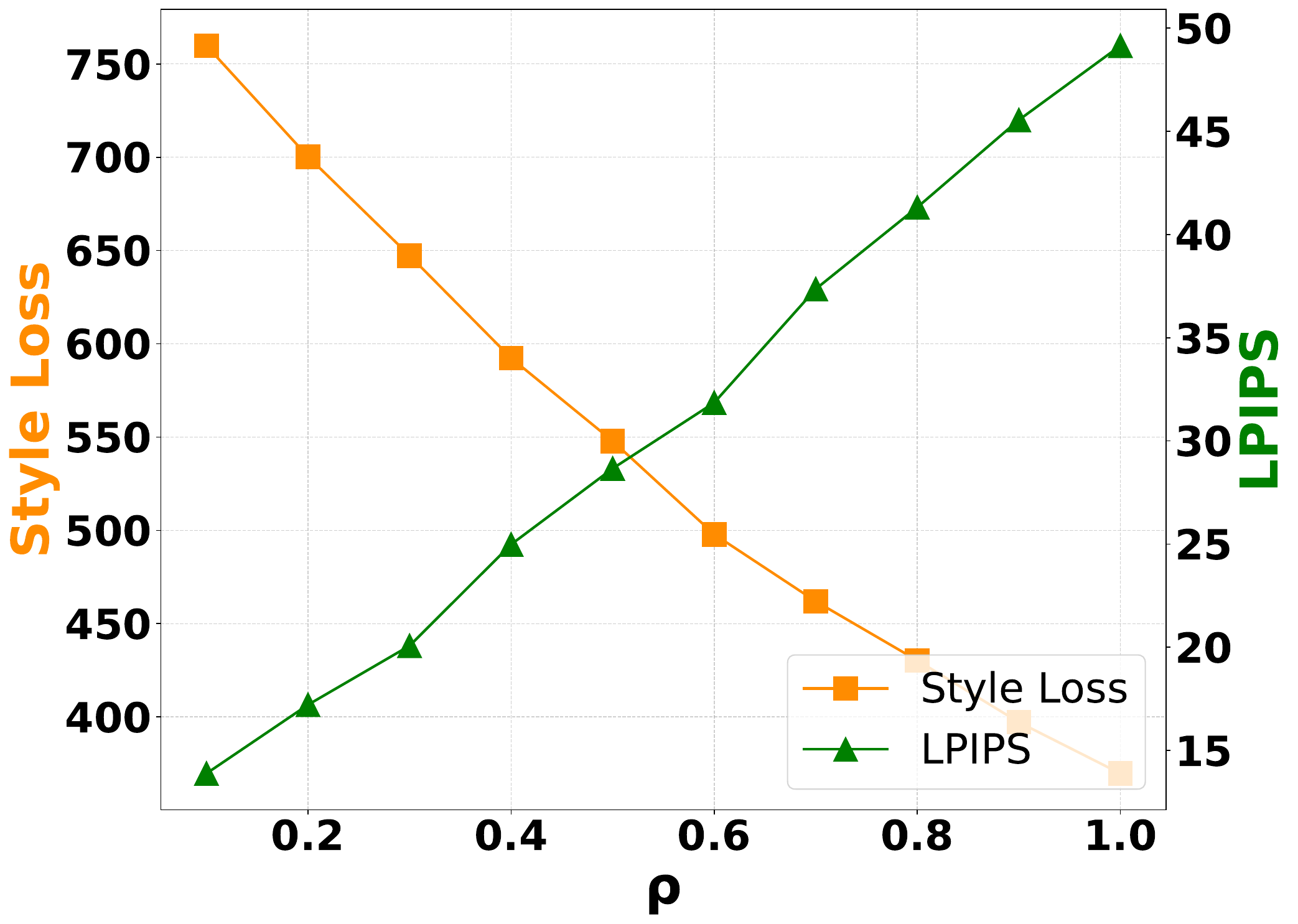}
\par\end{center}
\begin{center}
\subcaption{$h$-Edit-R + P2P}
\par\end{center}%
\end{minipage}\hspace*{0.02\textwidth}%
\begin{minipage}[c][1\totalheight][t]{0.48\textwidth}%
\begin{center}
\includegraphics[width=0.5\textwidth]{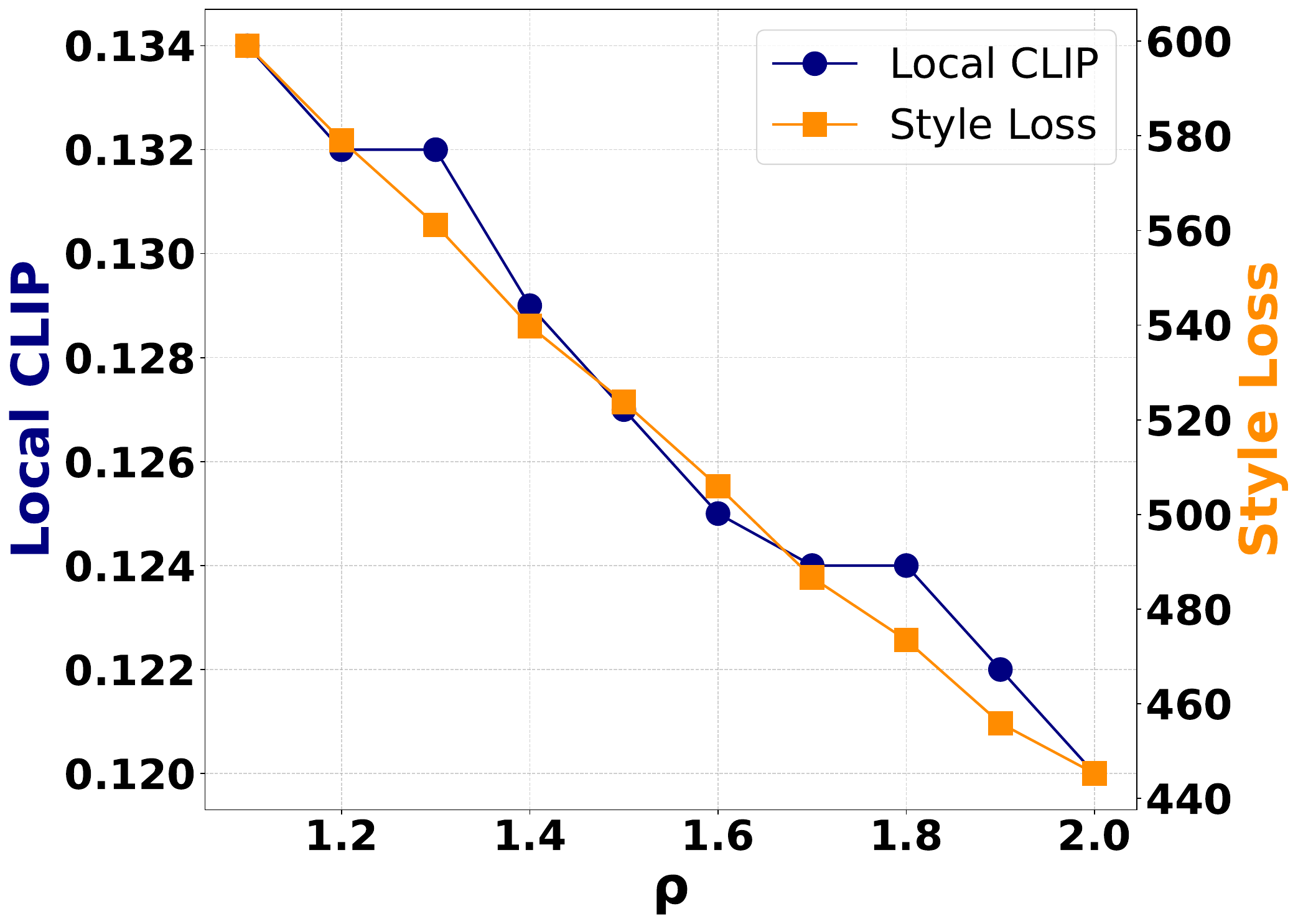}\includegraphics[width=0.5\textwidth]{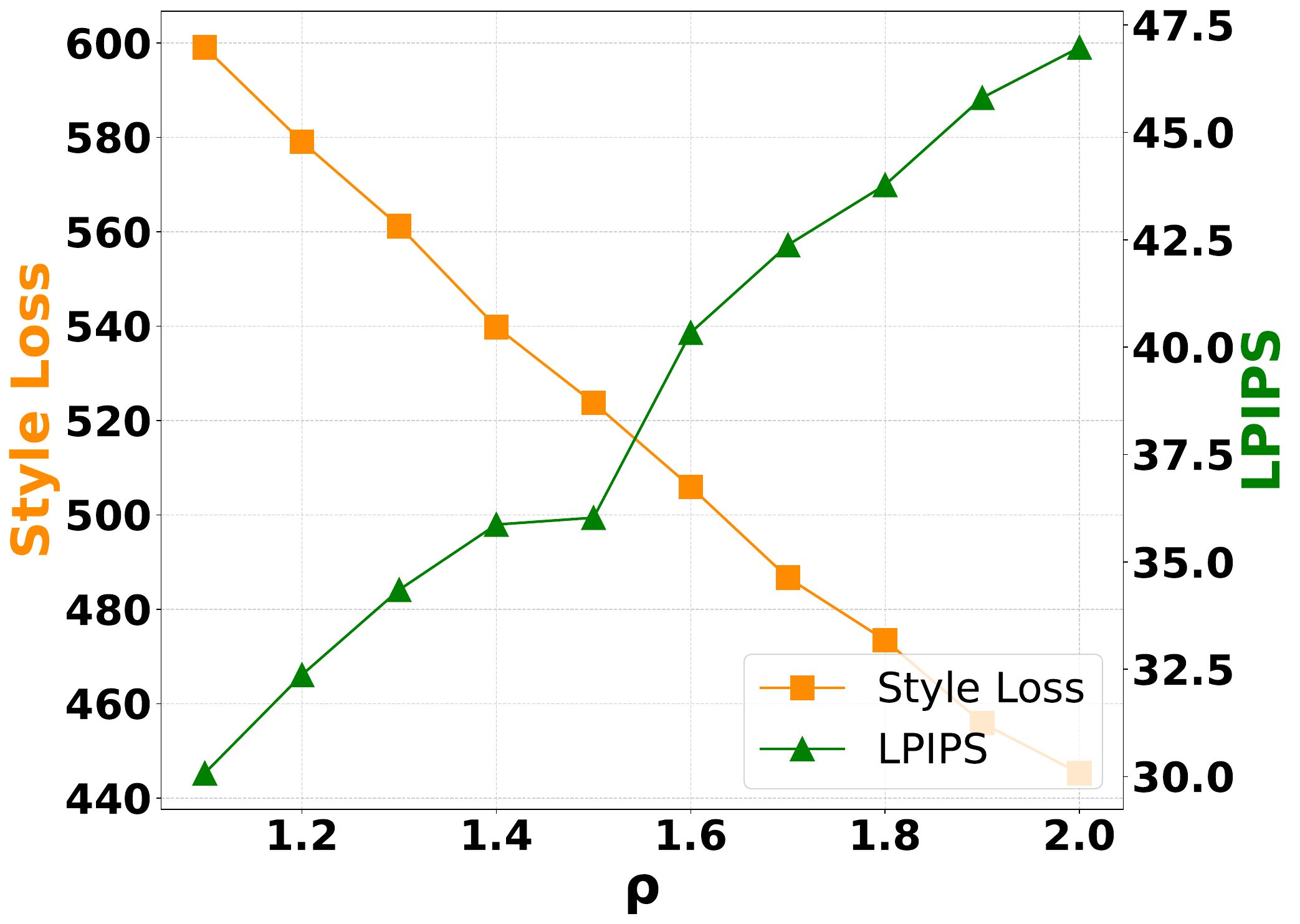}
\par\end{center}
\begin{center}
\subcaption{EF + P2P}
\par\end{center}%
\end{minipage}
\par\end{centering}
\caption{Changes in style loss, local CLIP similarity, and LPIPS score of $h$-Edit-R
+ P2P and EF + P2P when $\rho^{\protect\style}$ is varied from 0.1
to 1.0 for $h$-Edit-R + P2P and from 1.1 to 2.0 for EF + P2P.\label{fig:Quantitative-changes-in-style-coefficient}}
\end{figure}

\begin{figure}
\begin{centering}
\includegraphics[width=0.65\textwidth]{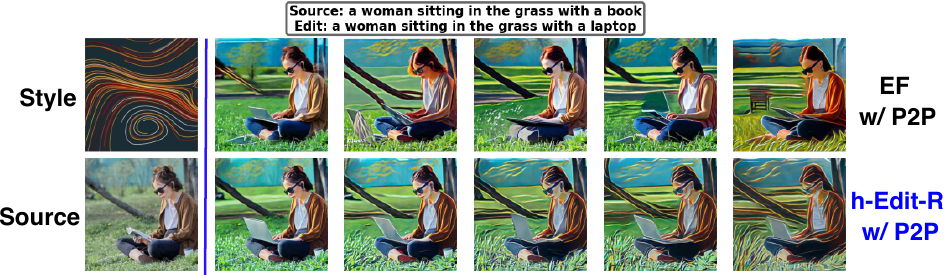}
\par\end{centering}
\caption{Visualizations of edited images with $\rho^{\protect\style}$ values
ranging from \{1.4, 1.5, 1.6, 1.7, 1.8\} for EF + P2P and \{0.4, 0.5,
0.6, 0.7, 0.8\} for $h$-Edit-R + P2P.\label{fig:Qualitative-changes-in-style-coefficient}}
\end{figure}

Fig.~\ref{fig:Quantitative-changes-in-style-coefficient} illustrates
the changes in style loss, local CLIP similarity, and LPIPS score
as the style editing coefficient $\rho^{\style}$ is varied from 0.1
to 1.0 for $h$-Edit-R + P2P and from 1.1 to 2.0 for EF + P2P. While
the ranges of $\rho^{\style}$ differ, the resulting style loss, local
CLIP similarity, and LPIPS score ranges are comparable, validating
the appropriateness of our parameter selection. Increasing $\rho^{\style}$
improves style transfer (lower style loss) but compromises text-guided
editing quality in terms of both effectiveness and faithfulness (lower
local CLIP similarity and higher LPIPS respectively). Since determining
the optimal value of $\rho^{\style}$ for achieving a balance between
style and text-guided editing is nontrivial, we identified candidate
values near the intersection of the style loss and LPIPS curves. Combining
this with visual inspection, we selected $\rho^{\style}$ value of
0.6 for $h$-Edit-R and 1.5 for EF.

Although EF exhibits similar quantitative trends to our method when
$\rho^{\style}$ is varied, its qualitative behavior is notably different.
As shown in Fig.~\ref{fig:Qualitative-changes-in-style-coefficient},
our method smoothly incorporates more style information into the edited
images while preserving their global structure as $\rho^{\style}$
increases. In contrast, EF often modifies the global structure of
the images to accommodate the increased $\rho^{\style}$. This advantage
of our approach likely stems from the natural decomposition of the
update into reconstruction and editing terms (Eq.~\ref{eq:implicit_LMC_3}),
enabling style edits to be added to the text-guided editing term with
minimal impact on the reconstruction term. EF, on the other hand,
lacks such a decomposition, meaning the introduction of the style
editing term directly affects reconstruction. These findings highlight
the limitations of relying solely on quantitative metrics to compare
our method with EF, as they may fail to capture important qualitative
differences.

In Fig.~\ref{fig:Additional-qualitative-results-combined-editing},
we present addition visualizations comparing $h$-Edit-R + P2P and
EF + P2P, with $\rho^{\style}$ set to the optimal values for each
method. The results clearly demonstrate that our method combined with
P2P surpasses EF + P2P in both style transfer and text-guided editing,
achieving superior quality and consistency. 

\begin{figure*}
\begin{centering}
\includegraphics[width=0.98\textwidth]{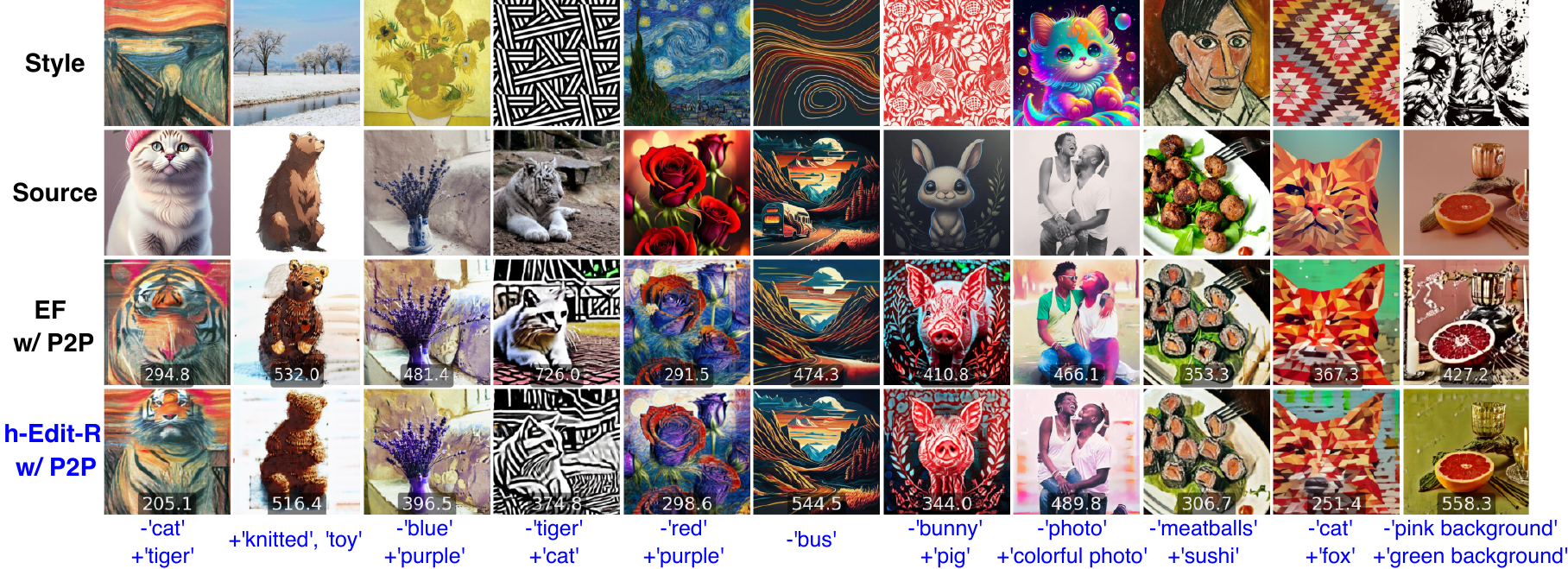}
\par\end{centering}
\caption{Additional qualitative results of $h$-Edit-R + P2P and EF + P2P for
the combined style and text-guided editing task. Style loss values
(lower is better) are displayed below each output image.\label{fig:Additional-qualitative-results-combined-editing}}
\end{figure*}

\subsection{Results when Combining with MasaCtrl and Plug-and-Play}

In this section, we compare the performance of $h$-Edit with other
baselines when combined with MasaCtrl \cite{cao2023masactrl} and
Plug-and-Play (PnP) \cite{tumanyan2023plug}. For MasaCtrl, we adopted
the \href{https://github.com/cure-lab/PnPInversion}{implementation}
from the PnP Inversion paper \cite{ju2023direct} which omits the
source prompt during editing. We observed that this approach yields
more stable results compared to using the source prompt. Since editing
methods like NT, NP and NMG are incompatible with this setting, they
were excluded in our experiments with MasaCtrl.

As shown in Table~\ref{tab:Results-using-other-attn-control}, both
$h$-Edit-R and $h$-Edit-D significantly outperform EF and deterministic-inversion-based
baselines when combined with either MasaCtrl or PnP. For example,
with PnP, $h$-Edit-D and $h$-Edit-R surpass NT and EF by 0.014 and
0.029 on the local directional CLIP metric, while achieving about
0.70 and 0.90 lower LPIPS scores, respectively. It is also evident
that PnP is a more effective attention control method than MasaCtrl
on the PIE-Bench dataset. However, both PnP and MasaCtrl are less
effective and stable than P2P \cite{hertz2023prompttoprompt}, as
indicated by our quantitative results in Tables~\ref{tab:Text-Guided-Editing-Results}
and \ref{tab:Results-using-other-attn-control}, and through our observations.

\begin{table}[H]
\begin{centering}
{\resizebox{0.98\textwidth}{!}{
\begin{tabular}{cccc>{\centering}p{0.12\textwidth}cccc}
\hline 
\multirow{1}{*}{\textbf{Attn.}} & \multirow{1}{*}{\textbf{Inv.}} & \multirow{1}{*}{\textbf{Method}} & \textbf{\textcolor{black}{CLIP Sim.}}$\uparrow$ & \multirow{1}{0.12\textwidth}{\textbf{\textcolor{black}{Local CLIP}}\textcolor{black}{$\uparrow$}} & \multirow{1}{*}{\textbf{\textcolor{black}{DINO Dist.}}\textcolor{black}{$_{\times10^{2}}$$\downarrow$}} & \multirow{1}{*}{\textbf{LPIPS}$_{\times10^{2}}$$\downarrow$} & \multirow{1}{*}{\textbf{SSIM}$_{\times10}$$\uparrow$} & \multirow{1}{*}{\textbf{PSNR}$\uparrow$}\tabularnewline
\hline 
\hline 
\multirow{4}{*}{MasaCtrl} & \multirow{2}{*}{Deter.} & PnP Inv & \textbf{0.243} & 0.068 & 2.47 & 8.79 & 8.13 & 22.64\tabularnewline
 &  & $\Model$-D & \textbf{0.243} & \textbf{0.071} & \textbf{2.38} & \textbf{8.62} & \textbf{8.16} & \textbf{22.85}\tabularnewline
\cline{2-9} \cline{3-9} \cline{4-9} \cline{5-9} \cline{6-9} \cline{7-9} \cline{8-9} \cline{9-9} 
 & \multirow{2}{*}{Random} & EF & 0.241 & 0.059 & 2.75 & 8.57 & 8.15 & 22.49\tabularnewline
 &  & $\Model$-R & \textbf{0.242} & \textbf{0.065} & \textbf{2.46} & \textbf{8.42} & \textbf{8.18} & \textbf{22.68}\tabularnewline
\hline 
\hline 
\multirow{7}{*}{PnP} & \multirow{5}{*}{Deter.} & NP & 0.250 & \uline{0.152} & 1.84 & 8.55 & 8.19 & 25.05\tabularnewline
 &  & NT & 0.251 & 0.144 & \uline{1.58} & \uline{7.94} & \uline{8.24} & \uline{25.53}\tabularnewline
 &  & NMG & \uline{0.253} & 0.101 & 2.08 & 9.96 & 8.02 & 23.20\tabularnewline
 &  & PnP Inv & \uline{0.253} & \centering{}0.109 & 1.75 & 9.29 & 8.15 & 24.18\tabularnewline
 &  & $h$-Edit-D & \textbf{0.254} & \textbf{0.158} & \textbf{1.51} & \textbf{7.28} & \textbf{8.33} & \textbf{25.68}\tabularnewline
\cline{2-9} \cline{3-9} \cline{4-9} \cline{5-9} \cline{6-9} \cline{7-9} \cline{8-9} \cline{9-9} 
 & \multirow{2}{*}{Random} & EF & 0.253 & 0.118 & 1.48 & 6.87 & 8.32 & 24.77\tabularnewline
 &  & $h$-Edit-R & \textbf{0.255} & \textbf{0.147} & \textbf{1.39} & \textbf{5.97} & \textbf{8.43} & \textbf{25.75}\tabularnewline
\hline 
\end{tabular}{\small{}}}}{\small\par}
\par\end{centering}
\caption{Text-guided editing results with MasaCtrl \cite{cao2023masactrl}
and Plug-n-Play \cite{tumanyan2023plug} on PIE-Bench. $h$-Edit significantly
outperforms other baselines in all metrics.\label{tab:Results-using-other-attn-control}}
\vspace{-5.0mm}
\end{table}

\section{Ablation Studies\label{sec:Additional-Ablation-Studies}}

\subsection{Impact of $\texorpdfstring{\hat{w}^{\text{orig}}}{w^{orig}}$ \label{subsec:Impact-of-hat_w_orig}}

\begin{table}[H]
\begin{centering}
{\resizebox{0.9\textwidth}{!}{
\begin{tabular}{ccccccc}
\hline 
\multirow{1}{*}{\textbf{$\hat{w}^{\orig}$}} & \multicolumn{1}{c}{\textbf{\textcolor{black}{CLIP Sim.}}\textcolor{black}{$\uparrow$}} & \multirow{1}{*}{\textbf{Local CLIP}$\uparrow$} & \multirow{1}{*}{\textbf{DINO Dist.}$_{\times10^{2}}$$\downarrow$} & \multirow{1}{*}{\textbf{LPIPS}$_{\times10^{2}}$$\downarrow$} & \multirow{1}{*}{\textbf{SSIM}$_{\times10}$$\uparrow$} & \multirow{1}{*}{\textbf{PNSR}$\uparrow$}\tabularnewline
\hline 
1.0 & \textbf{0.256} & 0.118 & 1.64 & 6.00 & 8.38 & 25.75\tabularnewline
3.0 & 0.255 & 0.137 & 1.52 & 5.49 & 8.44 & 26.36\tabularnewline
5.0 & \textbf{0.256} & 0.159 & \textbf{1.45} & \textbf{5.08} & 8.50 & \textbf{26.97}\tabularnewline
7.0 & 0.254 & \textbf{0.173} & 1.60 & 5.22 & \textbf{8.51} & 26.94\tabularnewline
9.0 & 0.241 & 0.172 & 2.30 & 6.44 & 8.40 & 26.03\tabularnewline
\hline 
\end{tabular}{\small{}}}}{\small\par}
\par\end{centering}
\caption{Quantitative results of $h$-Edit-R + P2P when varying $\hat{w}^{\protect\orig}$
from 1.0 to 9.0 while keeping $w^{\protect\edit}$ and $w^{\protect\orig}$
fixed at $7.5$ and $1.0$, respectively. The best value for each
metric is highlighted in bold.\label{tab:Quantitative-results-of-ablation-studies-hat-w-orig}}
\end{table}
\begin{figure*}
\begin{centering}
\begin{tabular}{cc}
\includegraphics[width=0.49\textwidth]{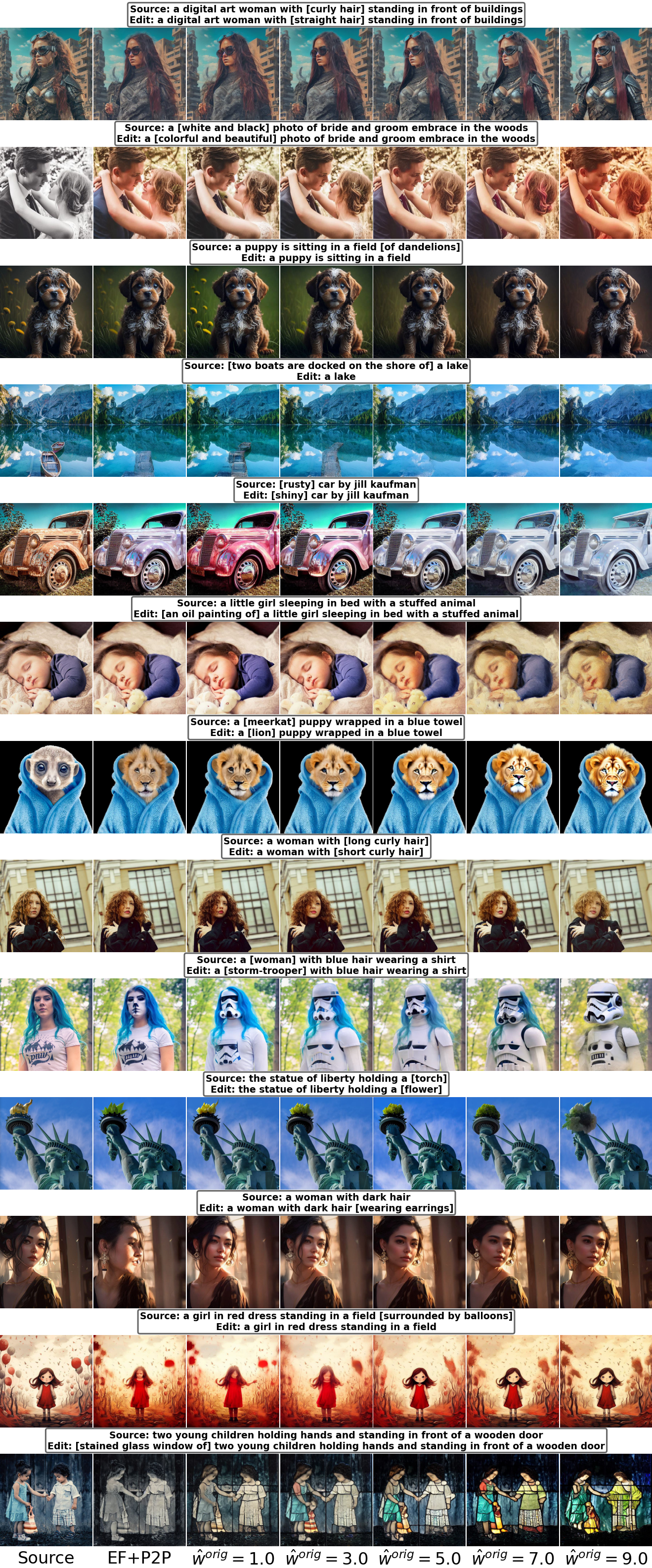} & \includegraphics[width=0.49\textwidth]{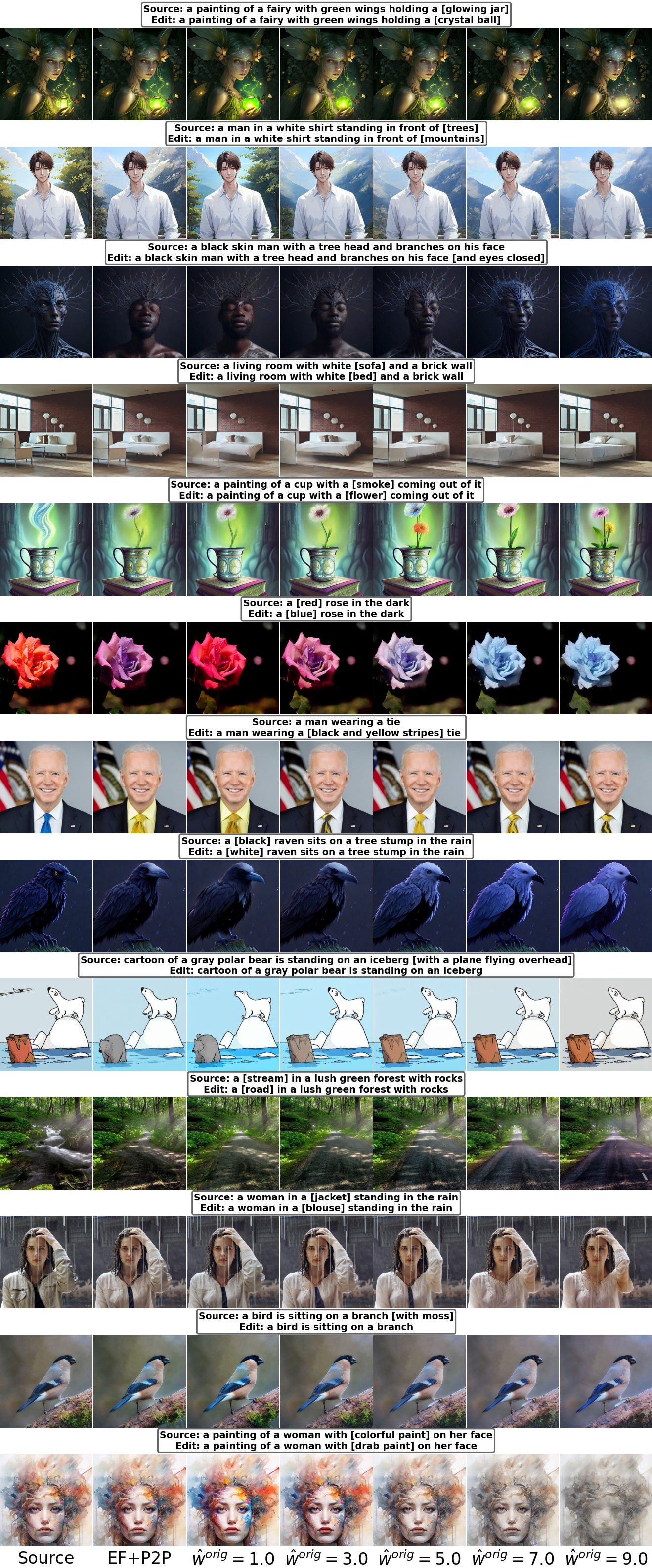}\tabularnewline
\end{tabular}
\par\end{centering}
\caption{Qualitative results of $h$-Edit-R + P2P when varying $\hat{w}^{\protect\orig}$
from 1.0 to 9.0 while keeping $w^{\protect\edit}$ and $w^{\protect\orig}$
fixed at $7.5$ and $1.0$, respectively. Increasing $\hat{w}^{\protect\orig}$
to an appropriate value improves both editing accuracy and fidelity.\label{fig:Visualization-hat-w-orig}}
\end{figure*}

In this section, we study the impact of $\hat{w}^{\orig}$ in Eq.~\ref{eq:hedit_SD_3}
by varying its value within $\left\{ 1.0,3.0,5.0,7.0,9.0\right\} $
while keeping $w^{\orig}=1$ and $w^{\edit}=7.5$ fixed for $h$-Edit-R
+ P2P. Quantitative and qualitative results are shown in Table~\ref{tab:Quantitative-results-of-ablation-studies-hat-w-orig}
and Fig.~\ref{fig:Visualization-hat-w-orig}, respectively. The results
indicate that increasing $\hat{w}^{\orig}$ to a suitable value enhances
both editing accuracy and fidelity. For example, raising $\hat{w}^{\orig}$
from 1.0 to 7.0 restores the woman's armor suit in the first row on
the left of Fig.~\ref{fig:Visualization-hat-w-orig} while also straightening
her hair. Similarly, it effectively removes the balloons in the background
while preserving the original appearance of the girl in a red dress
in the twelfth row on the left. As discussed in Section~\ref{subsec:Closed-form-expressions-for-explicit-and-implicit},
$\hat{w}^{\orig}$ controls how much of the original image\textquoteright s
information is excluded during editing. Larger values of $\hat{w}^{\orig}$
helps isolate essential information, enabling precise localization
of edits. However, excessively high values (i.e., exceeding $w^{\edit}$)
may degrade reconstruction quality by removing too much original information.
This is evident in the case of changing colorful paint to drab paint
in the last row on the right. These observations suggest that the
optimal value of $\hat{w}^{\orig}$ is case-dependent, for $w^{\edit}=7.5$,
we found $\hat{w}^{\orig}=5.0$ achieves the best balance between
editing effectiveness and faithfulness.

\subsection{Impact of $w^{\protect\edit}$\label{subsec:Impact-of-w-edit}}

\begin{figure*}
\begin{centering}
\includegraphics[width=0.52\textwidth]{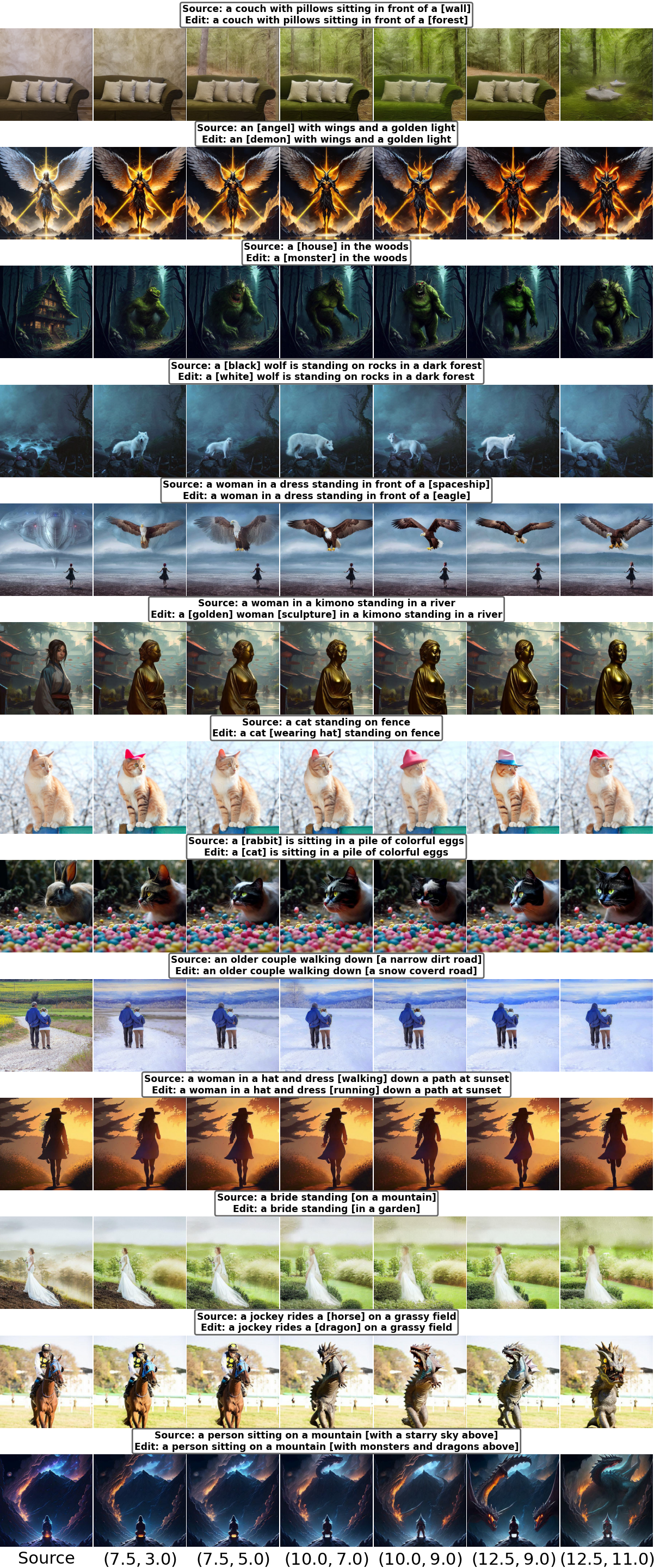}
\par\end{centering}
\caption{Qualitative results of $h$-Edit-R + P2P when varying $\left(w^{\protect\edit},\hat{w}^{\protect\orig}\right)$
within \{$\left(7.5,3.0\right)$, $\left(7.5,5.0\right)$, $\left(10.0,7.0\right)$,
$\left(10.0,9.0\right)$, $\left(12.5,9.0\right)$ $\left(12.5,11.0\right)$\}.
Higher $w^{\protect\edit}$ values effectively handle challenging
edits but may compromise reconstruction quality. \label{fig:Qualitative-results-ablation-study-w-edit}}
\end{figure*}

We investigate the influence of $w^{\edit}$ in Eq.~\ref{eq:hedit_SD_3}
for $h$-Edit-R + P2P by analyzing edited images across different
$\left(w^{\edit},\hat{w}^{\orig}\right)$ pairs: $\left\{ \left(7.5,3.0\right),\left(7.5,5.0\right),\left(10.0,7.0\right),\left(10.0,9.0\right),\left(12.5,9.0\right),\left(12.5,11.0\right)\right\} $.
Qualitative results are provided in Fig.~\ref{fig:Qualitative-results-ablation-study-w-edit}.
In general, higher $w^{\edit}$ values enhance editing effectiveness,
allowing to handle difficult edits. For example, increasing $w^{\edit}$
from 7.5 to 12.5 successfully introduces dragons to the images in
the final row of Fig.~\ref{fig:Qualitative-results-ablation-study-w-edit}.
However, higher $w^{\edit}$ can degrade reconstruction quality in
non-edited regions, requiring a proportional increase in $\hat{w}^{\orig}$
to mitigate this effect. Even so, this approach may not succeed in
all scenarios. We can overcome this issue by using multiple optimization
steps (available for implicit $h$-Edit). This technique progressively
refines edits via applying the score function iteratively, effectively
handling challenging cases while maintaining good reconstruction.

\subsection{Impact of multiple optimization steps in implicit $\protect\Model$\label{subsec:Impact-of-multiple-loops}}

\begin{figure*}
\begin{centering}
\begin{tabular}{cc}
\includegraphics[height=0.9\textheight]{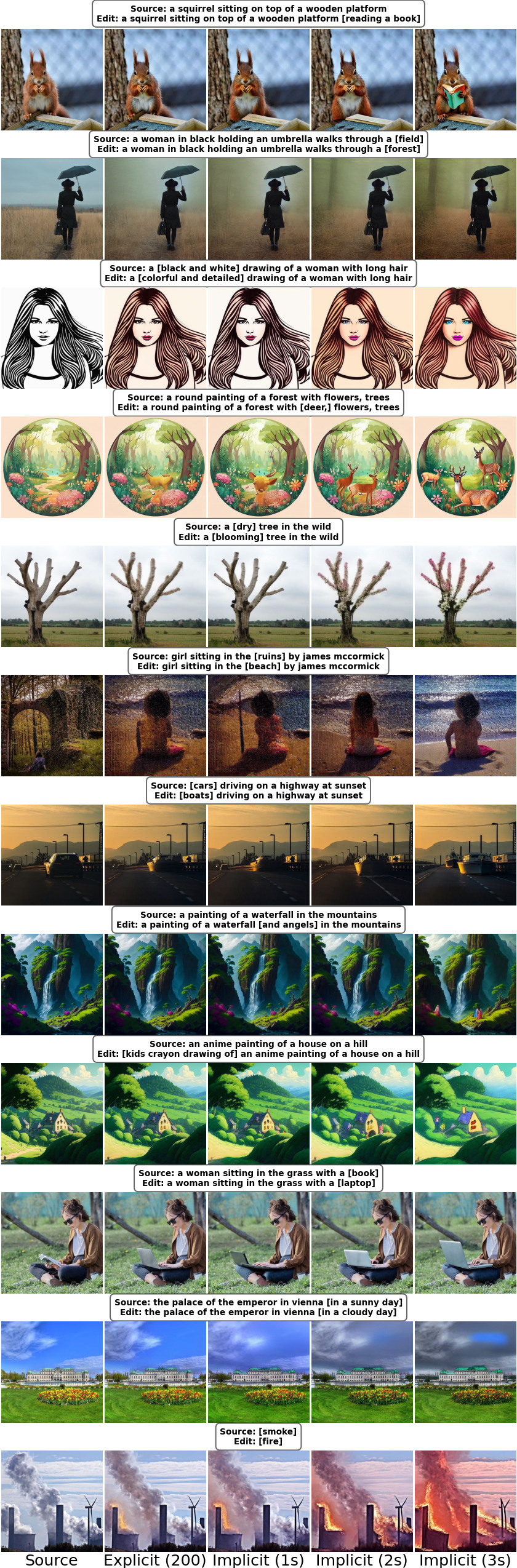} & \includegraphics[height=0.9\textheight]{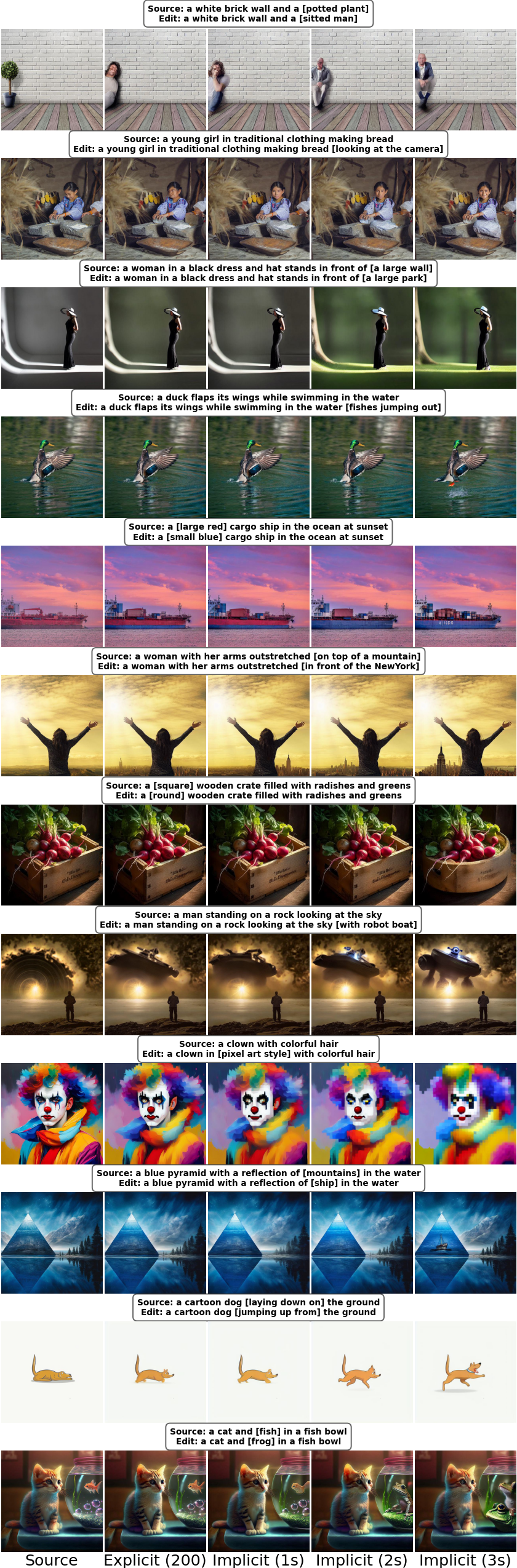}\tabularnewline
 & \tabularnewline
\end{tabular}
\par\end{centering}
\caption{Qualitative examples of implicit $\protect\Model$-R + P2P with 50
sampling steps using one, two and three optimization steps (1s/2s/3s),
compared to its explicit counterpart with 200 sampling steps. More
optimization steps effectively handle challenging cases, outperforming
increased sampling steps in the explicit form. \label{fig:qualitative_multiple_optimization_steps}}
\end{figure*}

Fig.~\ref{fig:qualitative_multiple_optimization_steps} highlights
the advantage of the implicit version of $h$-Edit when utilizing
multiple optimization steps. Increasing the number of optimization
steps significantly enhances editing accuracy while maintaining minimal
degradation in reconstruction quality. This capability is unique to
the implicit version and cannot be replicated by simply increasing
the number of sampling steps. For instance, the explicit version,
even with 200 sampling steps, performs only comparably or slightly
better than the default implicit version with 50 sampling steps and
one optimization step, yet it falls notably short compared to the
implicit version with three optimization steps.

Additionally, the effectiveness of multiple optimization steps is
evident in the face swapping task, where $\Model$-R with three optimization
steps outperforms its one-step counterpart, as presented in Section~\ref{subsec:Additional_results_face_swapping}.

\subsection{Comparison between explicit and implicit versions\label{subsec:Explicit-implicit-comparison}}

\begin{table}[H]
\begin{centering}
{\resizebox{0.95\textwidth}{!}{%
\par\end{centering}
\begin{centering}
\begin{tabular}{cccc>{\centering}p{0.12\textwidth}cccc}
\hline 
\multirow{2}{*}{\textbf{Attn.}} & \multirow{2}{*}{\textbf{\textcolor{black}{Steps}}} & \multirow{2}{*}{\textbf{\textcolor{black}{Method}}} & \multirow{2}{*}{\textbf{\textcolor{black}{CLIP Sim.}}} & \multirow{2}{0.12\textwidth}{\textbf{\textcolor{black}{Local CLIP}}\textcolor{black}{$\uparrow$}} & \multirow{2}{*}{\textbf{\textcolor{black}{DINO Dist.}}\textcolor{black}{$_{\times10^{2}}$$\downarrow$}} & \multirow{2}{*}{\textbf{\textcolor{black}{LPIPS}}\textcolor{black}{$_{\times10^{2}}$$\downarrow$}} & \multirow{2}{*}{\textbf{\textcolor{black}{SSIM}}\textcolor{black}{$_{\times10}$$\uparrow$}} & \multirow{2}{*}{\textbf{\textcolor{black}{PSNR}}\textcolor{black}{$\uparrow$}}\tabularnewline
 &  &  &  &  &  &  &  & \tabularnewline
\hline 
\hline 
\multirow{4}{*}{None} & \multirow{2}{*}{\textcolor{black}{25}} & \textcolor{black}{$h$-Edit-R (ex)} & 0.252 & 0.139 & 1.10 & 5.10 & 8.49 & 26.79\tabularnewline
 &  & \textcolor{black}{$h$-Edit-R (im)} & 0.255 & 0.148 & 1.39 & 5.98 & 8.41 & 25.77\tabularnewline
\cline{2-9} \cline{3-9} \cline{4-9} \cline{5-9} \cline{6-9} \cline{7-9} \cline{8-9} \cline{9-9} 
 & \multirow{2}{*}{\textcolor{black}{50}} & \textcolor{black}{$h$-Edit-R (ex)} & 0.253 & 0.141 & 1.10 & 5.07 & 8.51 & 27.00\tabularnewline
 &  & \textcolor{black}{$h$-Edit-R (im)} & 0.255 & 0.148 & 1.28 & 5.55 & 8.46 & 26.43\tabularnewline
\hline 
\multirow{4}{*}{P2P} & \multirow{2}{*}{\textcolor{black}{25}} & \textcolor{black}{$h$-Edit-R (ex)} & 0.254 & 0.153 & 1.38 & 5.04 & 8.50 & 26.81\tabularnewline
 &  & \textcolor{black}{$h$-Edit-R (im)} & 0.255 & 0.150 & 1.38 & 5.03 & 8.50 & 26.88\tabularnewline
\cline{2-9} \cline{3-9} \cline{4-9} \cline{5-9} \cline{6-9} \cline{7-9} \cline{8-9} \cline{9-9} 
 & \multirow{2}{*}{\textcolor{black}{50}} & \textcolor{black}{$h$-Edit-R (ex)} & 0.256 & 0.158 & 1.47 & 5.10 & 8.50 & 26.85\tabularnewline
 &  & \textcolor{black}{$h$-Edit-R (im)} & 0.256 & 0.159 & 1.45 & 5.08 & 8.50 & 26.97\tabularnewline
\hline 
\end{tabular}{\small{}}}}{\small\par}
\par\end{centering}
\caption{Quantitative comparison of $\protect\Model$-R implicit and explicit
forms, with and without P2P, evaluated over 25 and 50 sampling steps.\label{tab:Quantitative-implicit-and-explicit}}
\end{table}

\begin{figure}
\begin{centering}
\includegraphics[width=0.98\textwidth]{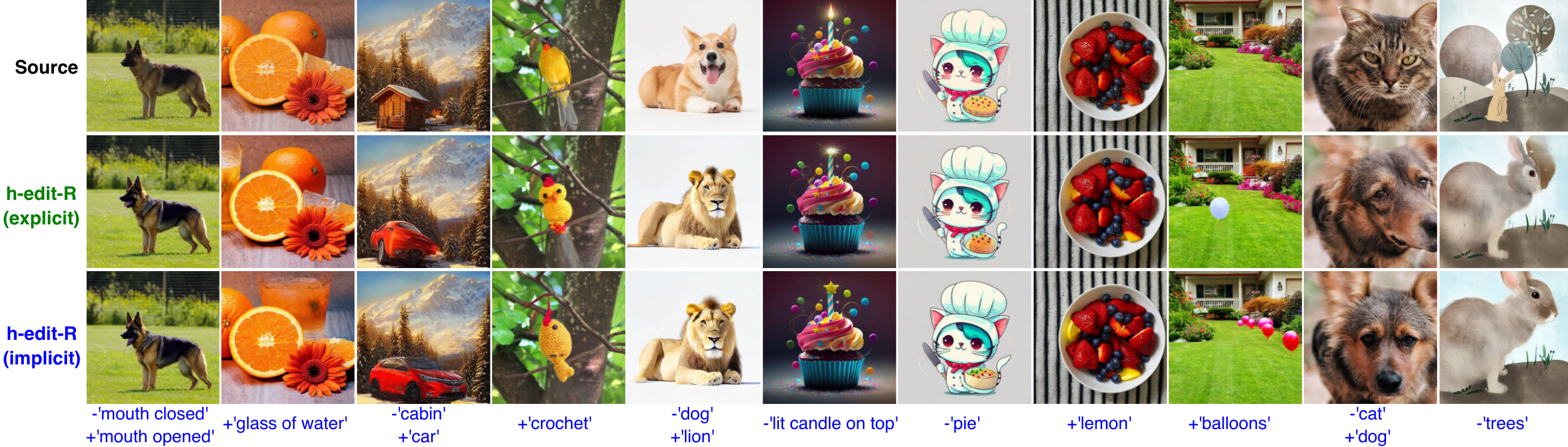}
\par\end{centering}
\caption{Qualitative visualizations comparing the explicit and implicit versions
of $h$-Edit-R with 25 sampling steps.\label{fig:Qualitative-implicit-explicit-h-edit-R}}
\end{figure}

\begin{figure}
\begin{centering}
\includegraphics[width=0.98\textwidth]{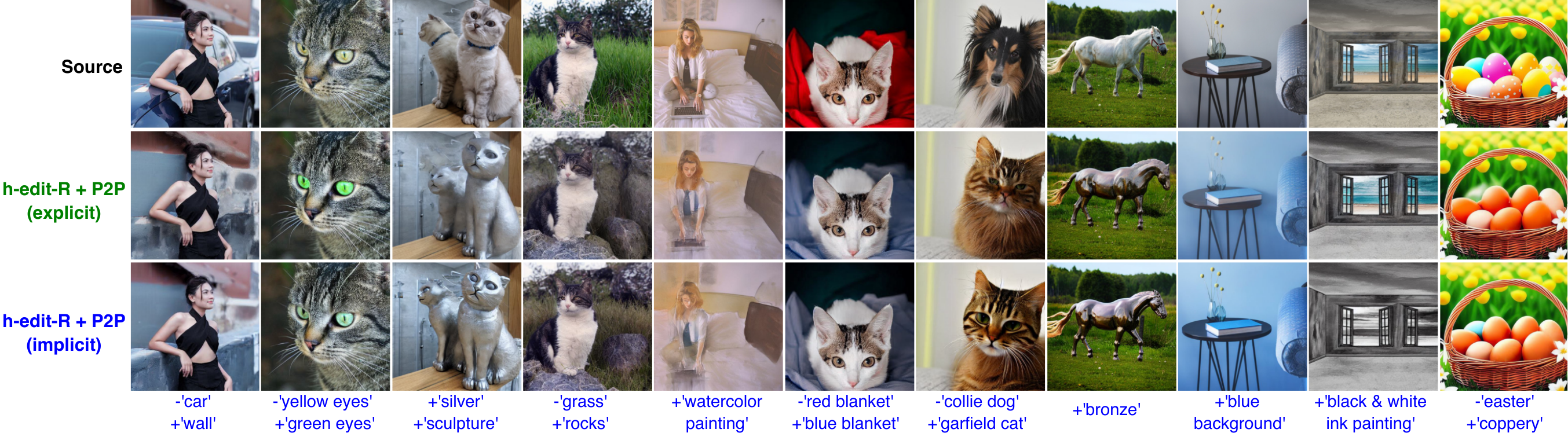}
\par\end{centering}
\caption{Qualitative visualizations comparing the explicit and implicit versions
of $h$-Edit-R + P2P with 25 sampling steps.\label{fig:Qualitative-implicit-explicit-h-edit-R-P2P}}
\end{figure}

In this section, we compare the explicit and implicit versions of
$h$-Edit-R with and without P2P, using either 25 or 50 sampling steps.
Without P2P, the implicit version generally performs more accurate
edits than the explicit counterpart, though the results vary by case,
as shown in Table~\ref{tab:Quantitative-implicit-and-explicit} and
Fig.~\ref{fig:Qualitative-implicit-explicit-h-edit-R}. However,
when combined with P2P, the two versions perform comparably. Instances
where implicit $h$-Edit-R outperforms the explicit version, and vice
versa, are illustrated in Fig.~\ref{fig:Qualitative-implicit-explicit-h-edit-R-P2P}.
Our preference for the implicit version as the default is not primarily
due to its performance relative to the explicit version but rather
its ability to support multiple optimization steps, which offers greater
flexibility.

\subsection{Face swapping without masks\label{subsec:Face-swapping-without-masks}}

We demonstrate that our $\Model$-R method can perform face swapping
without relying on mask postprocessing techniques for reconstruction,
with qualitative results of $\Model$-R (3s) shown in Fig.~\ref{fig:Qualitative_results_face_wo_masks}.
$\Model$-R without masks achieves near-perfect faithful reconstruction,
with minor background changes. For instance, in the third row (left),
it preserves background text, while in more complex backgrounds, such
as dense text (last row, right) or intricate shirt patterns (last
row, left), it maintains individual features with slight background
blurring. This capability is unique to our method, as state-of-the-art
approaches like DiffFace and FaceShifter rely on masks for faithful
reconstruction. These findings suggest that in scenarios where masks
are unavailable, our method is a robust choice for face editing with
minimal reconstruction error.

\begin{figure}
\begin{centering}
\begin{tabular}{cc}
\includegraphics[width=0.4\textwidth]{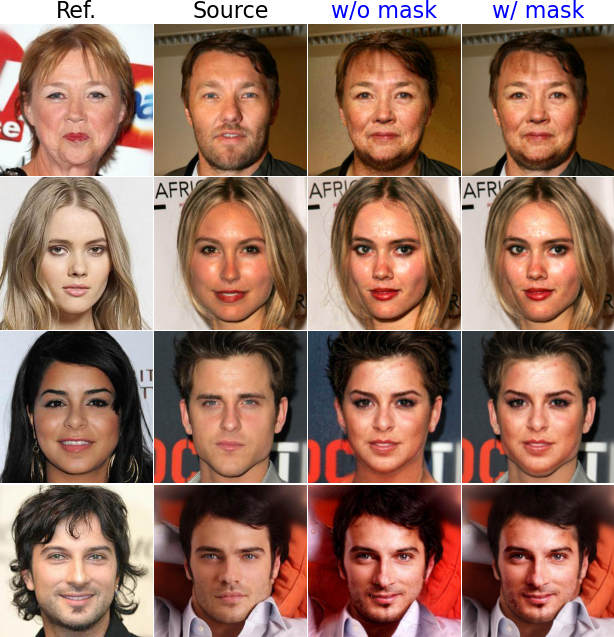} & \includegraphics[width=0.4\textwidth]{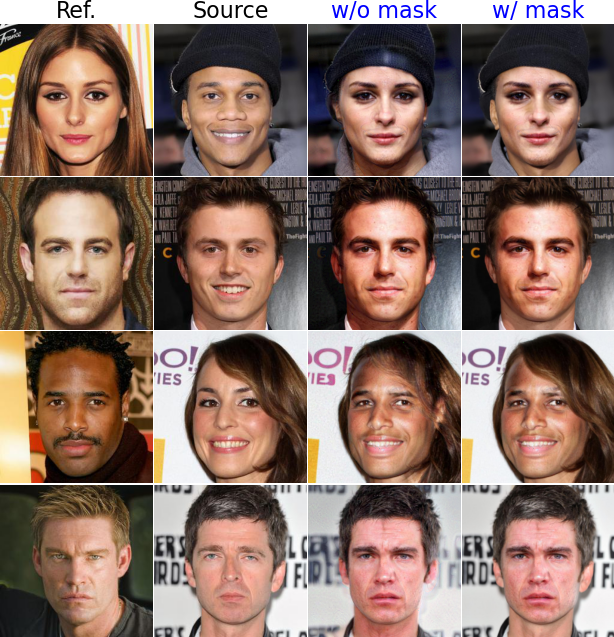}\tabularnewline
\end{tabular}
\par\end{centering}
\caption{Swapped faces generated by $h$-Edit-R (3s) with and without masks.\label{fig:Qualitative_results_face_wo_masks}}
\end{figure}

\subsection{Running time\label{subsec:Running-time}}

\begin{table}
\begin{centering}
\begin{minipage}[c][1\totalheight][t]{0.34\textwidth}%
\begin{center}
{\resizebox{\textwidth}{!}{
\begin{tabular}{cccc}
\hline 
\multirow{1}{*}{\textbf{Inv.}} & \multirow{1}{*}{\textbf{Attn.}} & \multirow{1}{*}{\textbf{Method}} & \multirow{1}{*}{\textbf{Time} \textbf{(s)}$\downarrow$}\tabularnewline
\hline 
\hline 
\multirow{6}{*}{Deter.} & \multirow{6}{*}{P2P} & NP & \textbf{21.68}\tabularnewline
 &  & NT & 186.84\tabularnewline
 &  & StyleD & 467.16\tabularnewline
 &  & NMG & \uline{35.67}\tabularnewline
 &  & PnP Inv & 37.65\tabularnewline
 &  & $h$-Edit-D & 48.63\tabularnewline
\hline 
\hline 
\multirow{5}{*}{Random} & \multirow{3}{*}{None} & EF & \uline{23.20}\tabularnewline
 &  & LEDITS++ & \textbf{18.31}\tabularnewline
 &  & $h$-Edit-R & 33.07\tabularnewline
\cline{2-4} \cline{3-4} \cline{4-4} 
 & \multirow{2}{*}{P2P} & EF & \textbf{32.95}\tabularnewline
 &  & $h$-Edit-R & 50.21\tabularnewline
\hline 
\end{tabular}{\small{}}}}\subcaption{Editing time for text-guided editing methods}
\par\end{center}%
\end{minipage}\hspace*{0.03\textwidth}%
\begin{minipage}[c][1\totalheight][t]{0.36\textwidth}%
\begin{center}
{\resizebox{0.55\textwidth}{!}{
\begin{tabular}{cc}
\toprule 
\textbf{Method} & \textbf{Time} \textbf{(s)}$\downarrow$\tabularnewline
\midrule
\midrule 
FaceShifter & \uline{1.31}\tabularnewline
MegaFS & 2.29\tabularnewline
AFS & \textbf{1.03}\tabularnewline
DiffFace & 46.42\tabularnewline
EF & 26.11\tabularnewline
\midrule
\midrule 
$h$-edit-R & 26.34\tabularnewline
$h$-edit-R (3s) & 51.36\tabularnewline
\bottomrule
\end{tabular}{\small{}}}}{\small\par}
\par\end{center}
\begin{center}
\subcaption{Editing time for face swapping methods}\vspace{0.2em}
\par\end{center}
\begin{center}
{\resizebox{0.81\textwidth}{!}{
\begin{tabular}{cccc}
\hline 
\multirow{1}{*}{\textbf{Inv.}} & \multirow{1}{*}{\textbf{Attn.}} & \multirow{1}{*}{\textbf{Method}} & \multirow{1}{*}{\textbf{Time} \textbf{(s)}$\downarrow$}\tabularnewline
\hline 
\multirow{2}{*}{Random} & \multirow{2}{*}{P2P} & EF & \textbf{44.32}\tabularnewline
 &  & $h$-Edit-R & 50.68\tabularnewline
\hline 
\end{tabular}{\small{}}}}\subcaption{Editing time for combined text-guided and style editing methods}
\par\end{center}%
\end{minipage}
\par\end{centering}
\caption{Editing times per image (in seconds) of our method and baselines across
three tasks: text-guided editing (left), face swapping (top right),
and combined text-guided and style-based editing (bottom right). Experiments
were conducted on an NVIDIA V100 GPU 32GB.\label{tab:Running-time-table}}
\end{table}

Table~\ref{tab:Running-time-table} shows the editing times per image
of our method and baselines for three editing tasks: text-guided editing,
face swapping, and combined text-guided and style editing.

In the text-guided setting, among deterministic-inversion-based methods,
$\Model$-D + P2P requires longer computation time (48.63s) than NP
+ P2P (21.68s), PnP Inv + P2P (37.65s), and NMG + P2P (35.67s) due
to additional U-Net calls for reconstruction and editing term computation.
However, this additional 12-second overhead compared to PnP Inv +
P2P yields significantly improved performance, with a $0.05$ increase
in local CLIP Similarity and $0.6\times10^{-2}$ better LPIPS (Table~\ref{tab:Text-Guided-Editing-Results}).
While NP + P2P achieves the fastest processing time by simply substituting
source embedding for null embedding during editing, it suffers from
substantially lower reconstruction quality. Our favorable trade-off
between computation time and editing quality extends to comparisons
with random-inversion-based methods. LEDITS++ is the fastest as they
leverage high-order solvers \cite{lu2022dpm,Zhao2023,Do2024} - a
feature that could also be incorporated into our method.

In the face swapping task, diffusion-based methods generally require
longer processing time per image compared to GAN-based methods (FaceShifter~\cite{li2020advancing}:
1.31s) or StyleGAN-based approaches (MegaFS~\cite{zhu2021one}: 2.29s,
AFS~\cite{vu2022face}: 1.03s) due to their iterative sampling nature.
Among diffusion-based methods, $\Model$-R (26.34s) and EF (26.11s)
achieve the fastest processing times. Despite sharing the same sampling
steps, $\Model$-R outperforms DiffFace (46.42s) in efficiency as
DiffFace requires additional gaze detection and face parsing models
at each step, beyond the common ArcFace computation. While our $\Model$-R
with 3 optimization steps variant shows slightly increased computation
time (51.36s), it achieves better ArcFace ID similarity compared to
DiffFace with comparable reconstruction quality. Notably, as training-free
approaches, our method and EF offer immediate deployment advantages
over DiffFace and GAN-based methods that require task-specific training.

In the combined text-guided and style editing task, h-Edit-R + P2P
(50.68s) shows only a moderate increase from its text-guided variant
(50.21s) by avoiding U-Net backpropagation for style editing. In contrast,
EF + P2P with FreeDom~\cite{yu2023freedom}'s technique requires
additional backpropagation computation, resulting in a larger time
increase from its text-guided counterpart (32.95s to 44.32s).

\section{Analysis on Metrics\label{sec:Metrics}}

During our text-guided editing experiments, we observed that CLIP
similarity and DINO distance metrics could yield inconsistencies between
quantitative and qualitative results. For CLIP similarity, we hypothesize
that this occurs because the attribute being edited often constitutes
only a small portion of the target prompt. In such cases, even accurate
edits may result in minor improvements in CLIP similarity, whereas
unintended changes to other attributes can lead to significant drops.
Consequently, methods that make no edits and simply preserve the original
image may achieve comparable or better CLIP similarity scores than
methods that successfully perform challenging edits. This phenomenon
is evident with NP and NT - the two strong editing methods capable
of handling challenging edits more effectively than PnP Inv, as shown
in Fig.~\ref{fig:Additional-visualization-h-Edit-D-P2P}. However,
their CLIP similarity scores are lower than that of PnP Inv, as illustrated
in Table~\ref{tab:Text-Guided-Editing-Results}.

In the case of DINO distance, since this metric is computed on the
entire image rather than the non-editing region, it can yield poor
results in significant editing scenarios like changing background
color or removing objects even when original non-editing content is
perfectly preserved.

\section{Ethical Considerations\label{sec:Ethical-Considerations}}

Our work aims to advance the development of effective and efficient
diffusion-based image editing methods, fostering contributions to
both academic research and real-world applications. However, we recognize
that these advancements could be misused for harmful purposes, such
as generating misinformation or damaging individuals' reputations.
To address these risks, it is crucial to implement safeguards that
detect and prevent unethical applications. One potential approach
is to employ a detection framework that analyzes edited images and
flags or discards outputs that violate ethical guidelines or pose
potential harm to society. Such proactive measures can help ensure
that this technology is used responsibly and ethically.

\end{document}